\theoremstyle{thmstyleone}%
\newtheorem{theorem}{Theorem}
\newtheorem{lemma}{Lemma}
\theoremstyle{thmstyletwo}%
\theoremstyle{thmstylethree}%
\newtheorem{definition}{Definition}%
\begin{document}

\title[Explainable Binary Classification of Separable Shape Ensembles]{Explainable Binary Classification of Separable Shape Ensembles}


\author*[1]{\fnm{Zachary} \sur{Grey}}\email{zachary.grey@nist.gov}
\equalcont{These authors contributed equally to this work.}

\author[2]{\fnm{Nicholas} \sur{Fisher}}\email{nicholfi@pdx.edu}
\equalcont{These authors contributed equally to this work.}

\author[3]{\fnm{Andrew} \sur{Glaws}}\email{andrew.glaws@nrel.gov}

\affil*[1]{\orgdiv{Applied \& Computational Mathematics Div.}, \orgname{National Institute of Standards \& Technology}, \orgaddress{\street{325 S. Broadway}, \city{Boulder}, \postcode{80305}, \state{CO}, \country{USA}}}

\affil[2]{\orgdiv{Dept. of Mathematics and Statistics}, \orgname{Portland State University}, \orgaddress{\street{1825 SW Broadway}, \city{Portland}, \postcode{97201}, \state{OR}, \country{USA}}}

\affil[3]{\orgdiv{Computational Science Center}, \orgname{National Laboratory of the Rockies}, \orgaddress{\street{15013 Denver West Parkway}, \city{Golden}, \postcode{80401}, \state{CO}, \country{USA}}}


\abstract{Scientists, engineers, biologists, and technology specialists universally leverage image segmentation to extract shape ensembles containing many thousands of curves representing patterns in observations and measurements. These large curve ensembles facilitate inferences about important changes when comparing and contrasting images. We introduce novel pattern recognition formalisms combined with inference methods over large ensembles of segmented curves. Our formalism involves accurately approximating eigenspaces of composite integral operators to motivate discrete, dual representations of curves collocated at quadrature nodes. Approximations are projected onto underlying matrix manifolds and the resulting separable shape tensors constitute rigid-invariant decompositions of curves into generalized (linear) scale variations and complementary (nonlinear) undulations. With thousands of curves segmented from pairs of images, we demonstrate how data-driven features of separable shape tensors inform explainable binary classification utilizing a product maximum mean discrepancy; absent labeled data, building interpretable feature spaces in seconds without high performance computation, and detecting discrepancies below cursory visual inspections.}

\keywords{separable shape tensors, reproducing kernel Hilbert space, maximum mean discrepancy}


\pacs[MSC Classification]{53Z50, 51F25, 65D18, 65D30, 46E22}

\maketitle

\section{Introduction}
\label{sec:intro}

Precise measurement and quantification of shape is ubiquitous in imaging science. For example, applications involve engineering design~\cite{Grey2017,Doronina2023}, materials science~\cite{atindama2023restoration,bachmann2010inferential,Fan2020,FAN2021116810}, medical imaging~\cite{mang2019claire,durrleman2014morphometry}, functional data analysis~\cite{tucker2013generative, srivastava2016functional}, biology~\cite{hartman2023elastic,srivastava2010shape,hagwood2013testing}, and ecology~\cite{lynch2023satellite,gonccalves2021fine}. Many of these applications benefit from separate parametrizations of shape planar translations (\emph{where is it?}), scale variations (\emph{how big is it?}), rotations/reflections (\emph{how is it oriented?}), and everything else that remains (\emph{how nonlinear is it?}). In several applications, a choice of invariance against one or multiple of types of similarity transforms~\cite{kendall2009shape}---among other transformations~\cite{bauer2014overview}---is beneficial for defining \emph{shape}. And in a variety of applications, scientists and engineers simply want to ask a binary question: \emph{is this the same as that}? But often necessitate some explanation: \emph{if they're different, why?} We refer to solutions of this general class of problems as \emph{explainable binary classifications}.

Applications involve a variety of imaging modalities including but are not limited to: electron microscopy, X-ray computed tomography, optical imagery from satellite, video, or cameras, photo-luminescent cell microscopy, solar ultraviolet imaging, modeled or measured level-sets of Lagrangian averaged vorticity deviations, images generated by artificial intelligence (AI), etc. 

In the first example, electron backscatter diffraction (EBSD)~\cite{schwartz2009electron} of material samples can offer high resolution image segmentation of the material's microstructure~\cite{SAVILLE2021102118,stoudt2020location}. Denoised images~\cite{atindama2023restoration} and subsequent segmentation~\cite{bachmann2010texture,FAN2021116810} lead to a large ensemble of planar curves as data. An example of a segmented EBSD image given by deformed ice, utilizing an open-source software called MTEX~\cite{Hielscher:cg5083,Fan2020,FAN2021116810}, is shown in Figure~\ref{fig:eg_grains}. The segmented image in Figure~\ref{fig:eg_grains} is juxtaposed to an example `grain boundary' or, simply, `grain' representing a single extracted planar curve from the tiled ensemble. 

Beyond this particular imaging modality, our interpretations are extensible to \textit{any imaging combined with subsequent segmentation informing a large ensemble of planar curves}. Additional examples of alternative image modalities are shown in~\ref{fig:eg_other_images}.

\subsection{Problem Statement}

\begin{quote}
    \emph{Are the curves/shapes in one image, up to rigid motions, the same or different from that of another and, if they are different, are differences due to linear deformations (generalized scale variations) or nonlinear deformations (undulations)?}
\end{quote}

\begin{figure*}
    \centering
    \includegraphics[width=0.45\textwidth]{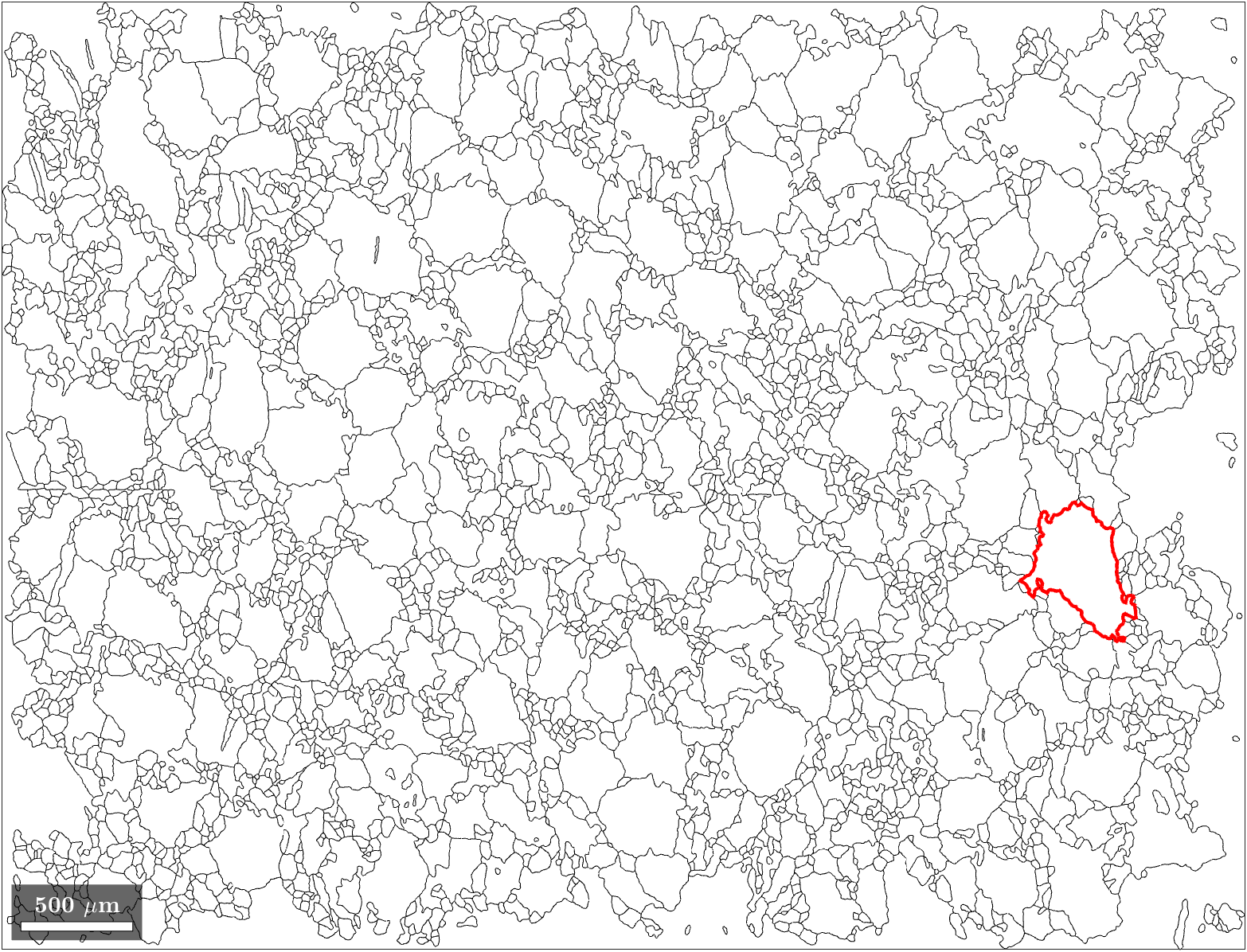}
    \hspace{0.5cm}
    \includegraphics[width=0.275\textwidth]{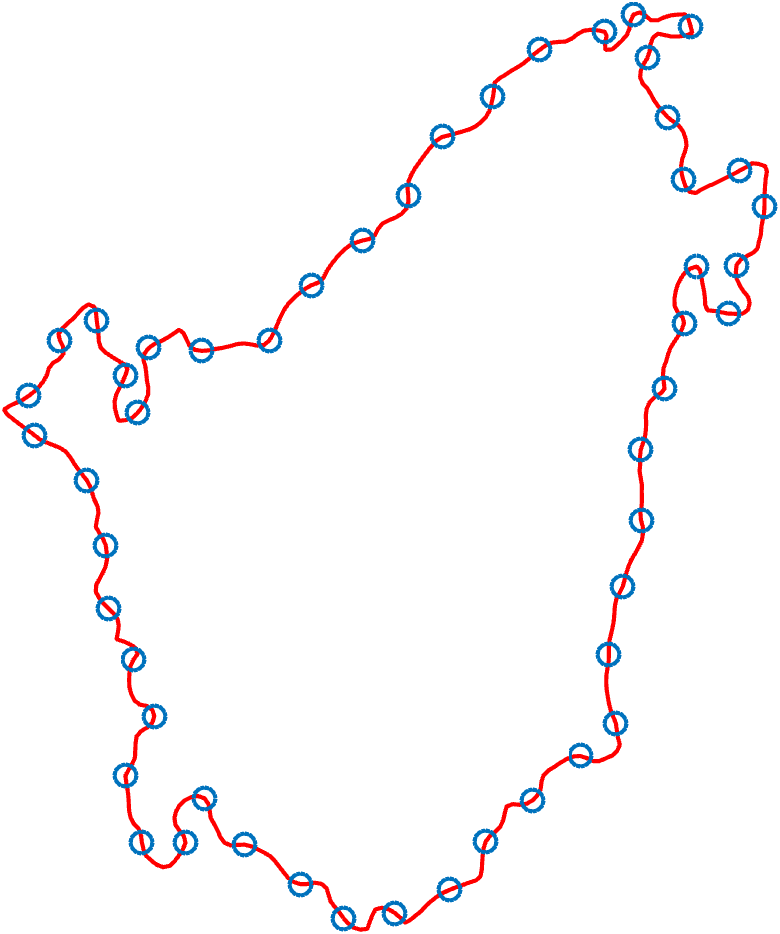}
    \caption{An example ensemble of thousands of grain boundaries from an EBSD image~\cite{FAN2021116810,Fan2020} (left) and an example segmented grain boundary (right) with arc-length reparametrization landmarks (blue circles) generated by an interpolating curve (red). Data is available online~\cite{Fan2020}. The micron bar in the lower left corner of the EBSD grain boundaries image reads $500$ micrometers.}
    \label{fig:eg_grains}
\end{figure*}

\begin{figure*}
    \centering
    \includegraphics[width=0.465\textwidth]{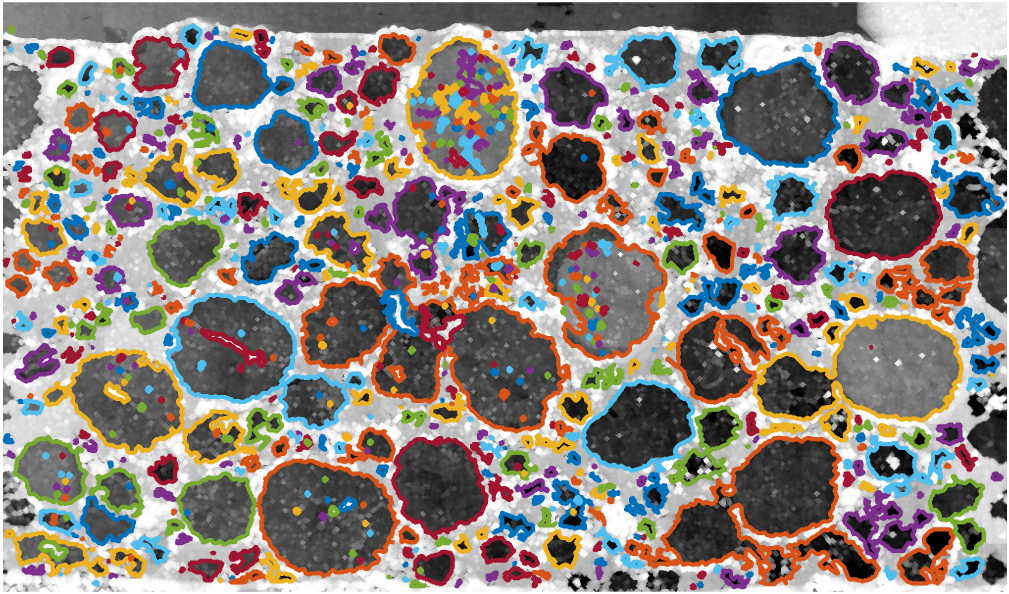}
    \hspace{.25cm}
    \includegraphics[width=0.275\textwidth]{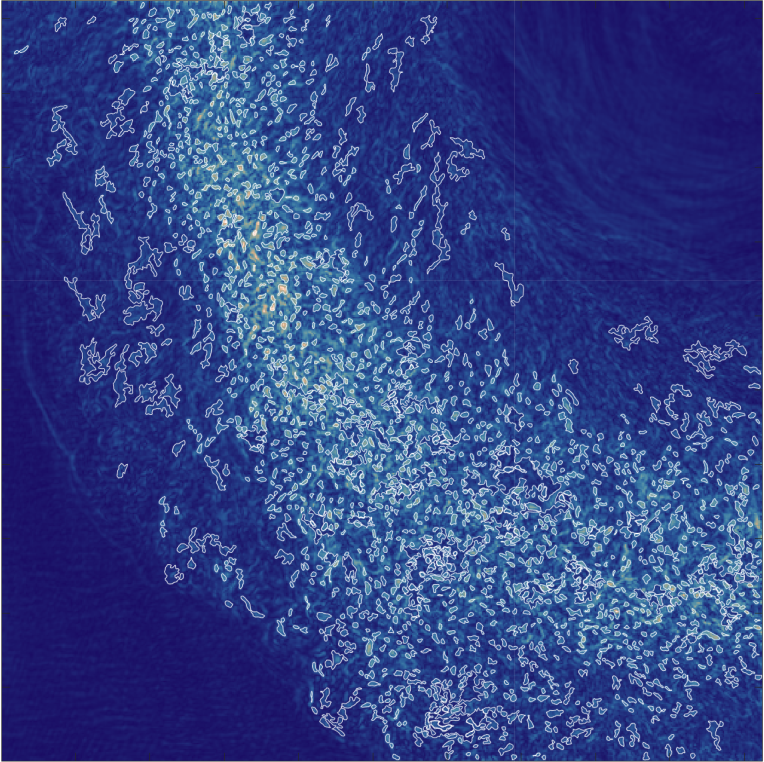}
    \hspace{.25cm}
    \includegraphics[width=0.2\textwidth]{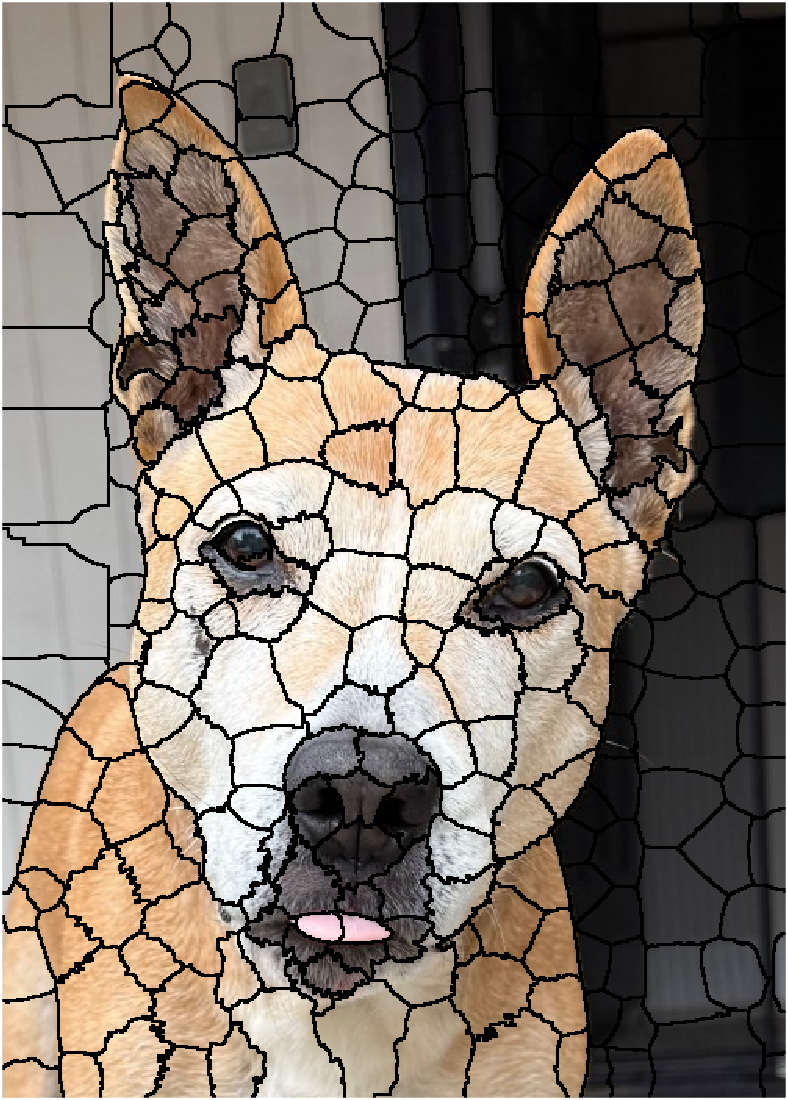}
    \caption{(left) A scanning electron microscopy of a lithium-ion battery cross-section, (middle) Lagrangian coherent structures in the lower-left quadrant of a cyclone modeled by large-eddy simulation, (right) a `superpixel' segmentation of canine named Penni.}
    \label{fig:eg_other_images}
\end{figure*}

\noindent We present a first attempt to guide binary classification of imaging modalities with explainable methods and formal interpretations. This is in contrast to a more general class of AI models. AI models may accomplish a similar task \emph{if previously trained against subjective hand-labeled data} but often lack quantifiable explanations due to the opaque nature of the `learned features' defined over an ambiguous `latent space.' 

The notion of `difference' in this context necessitates an invariance to rigid motions and is statistical given the stochastic nature of the imaging and the means by which curves and patterns are extracted utilizing segmentation algorithms, as depicted in~Figure~\ref{fig:eg_grains}. In other words, noisy images and shapes extracted by segmentation are considered random. A na\"ive deterministic perspective that assesses pointwise differences between images is uninformative since no two sampled images and subsequent image processing are ever the same despite similarities in patterns encoded in the images.

Moreover, scientists often utilize expertise and visual inspection to suggest how simple characteristics---e.g., principal lengths, area, perimeter, size, sphericity, and other intuitive features---vary between patterns in one image to the next~\cite{stoudt2020location, ASTM_E562, ASTM_E930, ASTM_E112}. While such quantities can provide a cursory description of the ensemble characteristics, these handpicked features of shape do not constitute comprehensive representations of shape and descriptions of differences can collapse. For example, it is straight forward to find examples of two shapes with identical hand-picked shape characteristics but other distinguishing features. Despite this, certain commercial standards~\cite{kazakov2019astm, ASTM_E562, ASTM_E930, ASTM_E112} have been authored to detect hand-picked features. We expect our shape representations will offer dramatic improvements and insights for future standards across a variety of imaging modalities beyond the materials science examples.

The stochastic notion of difference in this application is based on (maximum mean) \emph{discrepancy}: are the approximated distributions of \emph{representative} shape features over the ensemble in one image statistically significant from the ensemble in another? To answer this question, we first i) represent shapes augmented by \emph{separating important distinguishing features}, ii) discretize to \emph{approximate feature distributions} of the shapes from both images, and finally iii) \emph{test for statistically significant discrepancies} in distributions between two ensembles. This final step is referred to as a two-sample test~\cite{gretton2012kernel}. For the specific materials application under consideration, this corresponds to \emph{statistically comparing thousands of shapes in two ensembles given two EBSD images}. 

It is important to note that shape is only one significant piece of the quite literal puzzles in Figures~\ref{fig:eg_grains}-\ref{fig:eg_other_images}. However, for this work, we ignore the role of textures intrinsic to the tiling---i.e., the spatial arrangement of the shapes---and instead treat the ensemble of shapes as independent realizations from some generating distribution. 

\subsection{Technical Outline}
In subsequent introductory sections, we review related work and motivate a class of shapes for consideration which is sufficient in representing segmented curves from images.

In Section~\ref{sec:FIE}, we formalize interpretations transforming continuous boundary curves as preshapes into separately encode principled features of \textit{generalized scale} and \textit{undulation}. Specifically, in Thm.~\ref{theorem:eigfuncs}, we offer a novel interpretation that separable shape tensors are approximations of dual representations over a vector-valued reproducing kernel Hilbert space (RKHS). 

In Section~\ref{sec:numerics}, we utilize discrete approximations of the formally developed shape features to test four logical combinations of statistical hypotheses with data. Numerically, the discretizations are weighted according to a reinterpretation of Nystr\"om method, coined \textit{square-root quadrature decomposition} (SRQD), and matrix decompositions are motivated as approximations of curve (dual) representations. Finally, we utilize discrete approximations to map into parameter distributions over underlying matrix manifolds as a feature space informed by the large ensembles of curves in the images. We then discuss the implications of an explainable binary classification informed by maximum mean discrepancy (MMD) over separate parameter distributions supported on the learned feature space.

Section~\ref{sec:experiments} discusses implications of all four logical decisions over the separable space using the materials EBSD data as an example. Conclusions and future work follow in Section~\ref{sec:conclude}.

\subsection{Contributions}
We emphasize the following contributions of our explainable shape classification method:
\begin{enumerate}
    \item At a minimum, only pairs of images with ample segmented curves are required to inform decisions,
    \item the method does not require training against hand-labeled data, (intrinsic discrepancies)
    \item the presentation offers theoretical interpretations of learned features spaces as (dual) representations of curves, and 
    \item explanations are provably logical statistical classifications over separable parameters of shape features.
\end{enumerate}
 
 In contrast with AI methods, i) we do not require challenging minimization over vast swaths of hand-labeled training data (i.e., efficient implementation), ii) our feature/latent space is provably an approximation of dual representations of shape (i.e., mathematically interpretable), and iii) our classification is provably logical (i.e., trustworthy explanations).

Our numerical experiments emphasize four scenarios where these tools deliver specific benefits: i) Figure~\ref{fig:compare_grains_111} demonstrates that the method is not suspected to be overly sensitive to statistical rejection, ii) Figure~\ref{fig:compare_grains_100} demonstrates that the method emulates conclusions consistent with human observations, iii) Figure~\ref{fig:compare_grains_010} demonstrates that the method detects differences below human-scale visual observations and, iv) Figure~\ref{fig:compare_grains_000} demonstrates that the method is capable of detecting a presumed faulty measurement.

Our primary contribution is a formal interpretation that discrete transformations are collocations of dual representations presented in~\cite{micheli2013matrix}. In other words, we relate novel interpretations of the SST configuration space over finite-dimensional matrix manifolds to a functional analysis over an infinite-dimensional RKHS. This interpretation facilitates a flexible choice of kernel which can be used to control the regularity of a Hilbert space of curves~\cite{micheli2013matrix} as hypothesized in any application. This motivates novel explanations and extensions between landmark configurations and the $\mathbb{R}^d$-valued RKHSs in~\cite{micheli2013matrix} representing infinite-dimensional features of curves. This is an innovation to classical configuration spaces~\cite{kendall2009shape} which have no demonstrated connection to an infinite-dimensional space of curves.

\subsection{Notation}
In general, bold fonts distinguish vector-valued objects from capitalized fonts which represent matrix-valued objects. We reuse common conventions for the follow spaces of matrices:
\begin{itemize}
    \item $GL(d,\mathbb{R})$ is the space of $d$-by-$d$ full rank matrices with real entries while $GL_+(d,\mathbb{R})$ is the subgroup of $GL(d,\mathbb{R})$ with positive determinant.
    \item $Gr(d,n)$ is the Grassmannian, the space of $d$-dimensional subspaces of $\mathbb{R}^n$.
    \item $O(d)$ is the space of $d$-by-$d$ orthogonal matrices while $SO(d)$ is the subgroup of $O(d)$ with determinant equal to one.
    \item $S^d_{++}$ is the cone of $d$-by-$d$ symmetric positive definite matrices.
\end{itemize}
Elsewhere, we define new notation explicitly and use ``$\coloneqq$'' to indicate that the identification is by definition.

\subsection{Separable Shape Tensors} \label{subsec:SST}
We begin by reviewing related work~\cite{grey2023separable, bryner20142d, srivastava2016functional} for defining finite representations of curve data as \textit{configuration spaces}. Following the development of SST~\cite{grey2023separable}, initially applied to aerodynamic shape representations~\cite{doronina2022grassmannian}, we let $X = (\boldsymbol{x}_1,\dots,\boldsymbol{x}_n)^{\top}$ be the collection of $n$ unique \emph{landmarks} informed by some vector-valued $\boldsymbol{c}\in \mathcal{C}_d$, $\boldsymbol{c}:\mathcal{I} \subset \mathbb{R} \rightarrow \mathbb{R}^d$, such that
\begin{equation}\label{eq:data}
    \boldsymbol{x}_i = \boldsymbol{c}(s_i), \quad i=1,\dots,n.
\end{equation}

Note that $X$ is assumed to be an element of the non-compact Stiefel manifold $\mathbb{R}^{n \times d}_*$~\cite{absil2008optimization}. That is, we assume $X \in \mathbb{R}^{n \times d}_*$ to exclude the degenerate cases of points or lines in $\mathbb{R}^d$ as objects representing curves~\cite{grey2023separable}---necessitating that $X$ is full-rank. In other words, \textit{we assume the component functions $\boldsymbol{c} = (c_1,\dots,c_d)$ of the curve are linearly independent}. 

In practice, we are often only given sequences of landmarks $(\boldsymbol{x}_i)$ and have no knowledge of the `true' underlying $\boldsymbol{c}$ nor the parameter $s$ beyond assuming the identification above. However, this characterization of the data implies landmarks are ordered according to some orientation of the unknown curve thus constituting a sequence, $(\boldsymbol{x}_i)$, encoded over the row index of $X$. Thus, it is reasonable to build a variety of interpolations and approximations---e.g., piecewise spline interpolations over chordal parametrization---which can be shown to converge to the unknown $\boldsymbol{c}$ with an increasing number of given landmarks~\cite{grey2023separable, floater2005arc}.

The method of SST proceeds by introducing a separable form of the curve into features of \emph{undulation} and \emph{scale/rotation/shear/reflection} as linear deformations for an aerodynamic design. Theoretical treatments of `discrete shapes' as full-rank matrices~\cite{bryner20142d} or complex vectors~\cite{kendall2009shape} offer geometric interpretations of the thin singular value decomposition (SVD),
\begin{equation} \label{eq:landmark_stnd}
    (X - B(X))^{\top} \,\,\overset{SVD}{=}\,\, U\Sigma V^{\top},
\end{equation}
with $B(X) = (1/n)\boldsymbol{1}_{n,n}X$, an unbiased centering operation where $\boldsymbol{1}_{n,n}\in \mathbb{R}^{n\times n}$ is a matrix of ones. 

Given the SVD of $X- B(X)$ representing a discrete shape, we can elaborate on related decompositions into complementary factors of (discrete) undulation and invertible $d$-by-$d$ linear deformations---i.e., in a complementary sense, an `undulation' is the set of all shape representations which cannot be achieved by linear transforms as right group actions on $X - B(X)$. That is, temporarily ignoring non-deforming translations, for $\boldsymbol{c}(s) = M^{\top}\widetilde{\boldsymbol{c}}(s)$,
\begin{align} \label{eq:linear_deforms}
    X^{\top} &= (\boldsymbol{c}(s_1),\dots,\boldsymbol{c}(s_n))\\ \nonumber
    &= M^{\top}(\widetilde{\boldsymbol{c}}(s_1),\dots,\widetilde{\boldsymbol{c}}(s_n)) \\ \nonumber
    &= (\widetilde{X}M)^{\top} \nonumber
\end{align}
where the right action of $M \in GL(d,\mathbb{R})$ on $\widetilde{X} = (\widetilde{\boldsymbol{c}}(s_1),\dots,\widetilde{\boldsymbol{c}}(s_n))^{\top}\in \mathbb{R}^{n \times d}_*$ is akin to linear deformations of the preshape. Given (centered) $X$, we seek a decomposition $X = \widetilde{X}M$ to describe how $X$ \textit{undulates} as $[\widetilde{X}]$, an equivalence class modulo right group action over $GL(d,\mathbb{R})$, versus how it linearly deforms over subsets of $M \in GL(d,\mathbb{R})$. Therefore, we define discrete shape \textit{undulation} as $[\widetilde{X}] \in Gr(d,n)$ such that $Gr(d,n) \cong \mathbb{R}^{n \times d}_*/GL(d,\mathbb{R})$ as presented in~\cite{absil2008optimization}. 

Two geometric interpretations of the SVD~\eqref{eq:landmark_stnd} are called \emph{landmark standardizations}~\cite{bryner20142d, grey2023separable, srivastava2016functional} to compute \textit{representative undulations} $\widetilde{X}$ (up to rotations and reflections):

\begin{itemize}
    \item The affine standardization~\cite{bryner20142d} $X = \widetilde{X}M$ such that 
    $\widetilde{X} \coloneqq V$ and $ M \coloneqq \Sigma U^{\top} \in GL_+(d,\mathbb{R})$.
    \item The polar standardization~\cite{grey2023separable} $X = \widetilde{X}P$ such that
    $\widetilde{X} \coloneqq VU^{\top}$ and $P \coloneqq U\Sigma U^{\top}\in S^d_{++}$.
\end{itemize}
The latter polar standardization offers an interesting decomposition for a variety of problems which benefit from describing undulation and scale variations independent of rotations and reflections. In other words, following the development of~\cite{fletcher2003statistics}, $S^d_{++}$ is identified with the set of equivalence classes given by $GL_+(d,\mathbb{R})/SO(d)$. Thus $[\widetilde{X}]$ encodes nonlinear undulating features of shape over the quotient topology $\mathbb{R}^{n \times d}_*/GL(d,\mathbb{R})$ while $P$ encodes generalized (anisotropic) \emph{scales} independent of rotations and reflections over the quotient topology $GL_+(d,\mathbb{R})/SO(d)$.

Landmark standardization is thought to have a strong connection to the work of Kendall et al.~\cite{kendall2009shape} albeit with distinct notions of scale. In our case, \emph{the definition of scale is generalized} to all of $S^d_{++}$ as opposed to simple dilation of curves as a scalar multiplication to modulate overall size~\cite{srivastava2010shape, kendall2009shape, srivastava2016functional}. 

Given the SVD \eqref{eq:landmark_stnd} and sought equivalence classes, the pairs $\lbrace([\widetilde{X}], P)\rbrace \in Gr(d,n) \times S^d_{++}$ inform a \emph{data-driven} submanifold of the proposed product matrix-manifold to parametrize separable representations of shape tensors,
\begin{equation}\label{eq:SST}
    X(\boldsymbol{t},\boldsymbol{\ell}) = \widetilde{X}(\boldsymbol{t})P(\boldsymbol{\ell}).
\end{equation}
Parameters $\boldsymbol{t} \in \mathbb{R}^r$ are informed over a reduced dimension submanifold, $1\leq r < d(n-d)$, according to a tangent space principal components analysis (tangent PCA)~\cite{fletcher2004principal}. Additionally, $\boldsymbol{\ell}\in \mathbb{R}^3$ informed by a separate tangent PCA, govern generalized \emph{scale variations} independent of rotations and reflections. The resulting representation~\eqref{eq:SST} parametrizes separate differences in anisotropic scale variations, $P(\boldsymbol{\ell})$, from those of undulations, $[\widetilde{X}](\boldsymbol{t})$, utilizing a corresponding representative matrix, $\widetilde{X}(\boldsymbol{t})$, for computations~\cite{edelman1998geometry, grey2023separable}. In our examples for this work, the parameter learning is informed by the aggregate ensemble of thousands of curves from both images being compared.

An important caveat of this finite representation is that shapes are not precluded from generating self-intersections---i.e., discrete shapes are generally identified with immersions. Moreover, the existing analysis is predicated on hand-picked discretizations of interpolating curves, called \emph{fixed reparametrizations}~\cite{grey2023separable}, and lacks a formal interpretation of the entries of $\widetilde{X}$---other than achieving desirable aerodynamic parametrizations and subsequent meshing. This begs the question of how to relate the finite representation of~\eqref{eq:SST} to an (infinite dimensional) analysis of continuous curves to offer improved discretizations and continuous extensions as immersions.

\subsection{Preshapes as Boundary Curves}~\label{subsec:preshapes}
As before, preshapes are defined as open or closed curves $\boldsymbol{c}:\mathcal{I} \rightarrow \mathbb{R}^d$ assumed to be immersions and/or embeddings~\cite{micheli2013matrix,bauer2014overview,srivastava2010shape,mumford2006riemannian,younes2008metric,hartman2023elastic}---e.g., for $d=2$, $\boldsymbol{c}$ in our application is a planar embedding with $\mathcal{I}$ as the unit circle or equivalent compact, connected $1$-manifold. Applications are benefited by representations of curves over regularized spaces which control noisy measurement oscillations, e.g., those with increasing Lipschitz constant of curves $\mathcal{C}_d(\lambda_r) \subseteq \mathcal{C}_d(\lambda_{r+1})$ such that $0<\lambda_r \leq \lambda_{r+1}<\infty$ and
        \begin{equation}\label{eq:reg_Hilbert_space}
             \mathcal{C}_d(\lambda_r) \coloneqq \lbrace \boldsymbol{c} \,:\, \Vert \boldsymbol{c}(u) - \boldsymbol{c}(s)\Vert \leq \lambda_r\vert u - s\vert\rbrace,
        \end{equation}
for all $u,s\in \mathcal{I}$, to explicitly control nonlinear \emph{undulations}. Notice, for boundedness it sufficient to assume the individual components of curve $\boldsymbol{c} = (c_1, c_2, \dots, c_d)$ are Lipschitz, such that $\vert c_i(u) - c_i(s) \vert \leq \lambda_{i,r}\vert u - s\vert$ with $\lambda_r \leq \Vert (\lambda_{1,r}, \lambda_{2,r},\dots, \lambda_{d,r})^\top\Vert <\infty$.

We will generally refer to these curves as \emph{integrable curves} for some $d\mu$, such that $\int_{\mathcal{I}} \boldsymbol{c} \,d\mu < \infty$ and $\int_{\mathcal{I}} \Vert \dot{\boldsymbol{c}} \Vert\, d\mu < \infty$ where $\dot{\boldsymbol{c}}$ represents the $d$ component-wise derivative over $\mathcal{I} \subset \mathbb{R}$, with respect to some curve parameter. In other words, by Rademacher's theorem~\cite{rudin1987real}, we argue $\boldsymbol{c} \in \mathcal{C}_d(\lambda_r)$ with derivative $\dot{\boldsymbol{c}}$ almost everywhere admits integrable speed, $\Vert \dot{\boldsymbol{c}}\Vert$, over bounded $\mathcal{I}$.

\subsubsection{Arc-length reparametrization}
Conflating the assumed differentiability almost everywhere over a bounded domain $\mathcal{I}$ to weakly differentiable (absolutely) continuous curves is a convenience for reviewing arc-length reparametrizations as arc-length measures. The core developments in this work only require integrable curves over a bounded domain with the arc-length measure, which is sufficient to assume they are at least Lipschitz continuous per~\eqref{eq:reg_Hilbert_space} but does not necessitate that curves be more regular---similar to arguments in~\cite{srivastava2010shape,bruveris2016optimal}.

More precisely, the arc-length function,
\begin{equation}
    \xi^{-1}(t) = \int_{0}^t \Vert \dot{\boldsymbol{c}}(s) \Vert d\mu(s),
\end{equation}
can be expressed according to uniform measure $d\mu(s) = \rho(s)ds$, 
\begin{equation}
    \rho(s) = \begin{cases}
                    1, \,\, s \in \mathcal{I}\\
                    0, \,\, s \notin \mathcal{I},
              \end{cases}
\end{equation} 
where, arbitrarily assuming unit length curves, $\mathcal{I} \coloneqq [0,1]$. Via variable substitution, $u = \gamma(s)$ given any nonnegative $\gamma:\mathcal{I} \rightarrow\mathcal{I}$ such that $\gamma(0) = 0$ and $\dot{\gamma} > 0$ implies $ds = \dot{\gamma}^{-1}(s)du$. Thus, by assigning $t' \coloneqq \gamma(t)$,
\begin{align}
    (\xi^{-1}\circ \gamma)(t) &= \int_0^{\gamma(t)}\Vert \dot{(\boldsymbol{c}\circ \gamma)}(s)\Vert ds \\ \nonumber
    &= \int_0^{t'} \Vert \dot{\boldsymbol{c}}(u) \Vert \vert \dot{\gamma}(s) \vert \dot{\gamma}^{-1}(s)du \\ \nonumber
    &= \int_0^{t'}\Vert \dot{\boldsymbol{c}}(u) \Vert du\\ \nonumber
    &= \xi^{-1}(t')\nonumber.
\end{align}
Therefore, $\xi^{-1}$ is \textit{invariant to reparametrization}---the range of $\xi^{-1}$ is unmodified by the composition with $\gamma$---by arbitrarily renaming $t'$ as $t$. 

Notably, we may reinterpret changes in the scale of velocity, $\vert \dot{\gamma}(s) \vert$, (speed) as an integral measure, $d\mu(s) \coloneqq \vert \dot{\gamma}(s) \vert ds$. Thus, the scale of the speed we traverse the curve can instead be identified with an integral (pushforward) measure along the curve, i.e., for $\tilde{\boldsymbol{c}} = \boldsymbol{c}\circ \gamma$ we have
\begin{align}
    \tilde{\xi}^{-1}(t) &= \int_0^t \Vert \dot{\tilde{\boldsymbol{c}}}(s) \Vert ds \\ \nonumber
    & = \int_0^{t} \Vert \dot{\boldsymbol{c}}(s) \Vert\vert \dot{\gamma}(s) \vert ds \\ \nonumber
    & = \int_0^{t} \Vert \dot{\boldsymbol{c}}(s) \Vert d\mu(s). \nonumber
\end{align}
Here, $\gamma$ is any diffeomorphism constituting a \textit{reparametrization}. As a simple identification, we take $d\mu(s) \coloneqq \rho(s)ds/\Vert \dot{\boldsymbol{c}}(s)\Vert$ for $\rho(s)$ uniform so that $\xi^{-1}(t) = t$. Thus $d\mu(s)$ becomes a natural choice of \textit{fixed} integral measure which utilizes a \textit{uniformly weighted} (weak) arc-length reparametrization as our integral measure. Moreover, by composition with differentiation and the norm, the \textit{arc-length measure is invariant to rigid actions} on $\boldsymbol{c}$. Other applications may be interested in an alternative weighting (density) $\rho$, which integrates to one, to collect points near hand-picked landmarks of interest~\cite{Doronina2023, grey2023separable}.

Alternative treatments may also identify any diffeomorphism $\gamma$, up to scale, with the cumulative distribution function and corresponding nonnegative probability density, $\rho \propto \dot{\gamma}(s)$. Moreover, this identification could be extended to the Fourier dual (characteristic function) of $\gamma$. These identifications are reminiscent of the \textit{varifolds}~\cite{younes2008metric, charon2013varifold} and applications of Fisher-Rao metrics in functional data analysis~\cite{tucker2013generative, srivastava2016functional} as a promising flexible treatment of reparametrizations.

\subsubsection{Related ambient spaces}
Most applications study an ambient space of curves with Sobolev-type metrics that are crucial for controlling regularity and other pathological issues such as vanishing distances over the (manifold) topologies of preshapes and shapes~\cite{michor2004vanishing,bauer2014overview,micheli2013matrix}. In our case, $\mathcal{C}_d(\lambda_r)$ defined with $2$-norm, $\Vert \cdot \Vert$, and weak differentiability is a simple, sufficiently general choice that is consistent with the \textit{finite} matrix manifolds of interest in related work involving Separable Shape Tensors (SST)~\cite{grey2023separable}. 

We expound on restrictions of $\mathcal{C}_d(\lambda_r)$ to inform approximations and decompositions into flexibly regularized \textit{discrete} preshapes and scales to test statistical hypotheses. Extensibility to compute with many thousands of curves is bolstered by these \emph{finite} configuration spaces and distances do not vanish over these finite dimensional manifolds~\cite{bauer2014overview}.

We drop the convention of specifying a nesting over Lipschitz constant and simply refer to the space of Lipschitz curves with $d$-components as $\mathcal{C}_d$ with some unknown, finite bounding constant that depends on the curve. We demonstrate empirically that our configuration space tends to naturally regularize shapes over the nesting $\mathcal{C}_d(\lambda_r) \subseteq \mathcal{C}_d(\lambda_{r+1})$ where $r$ becomes a chosen dimensionality of a `learned' submanifold.


\section{Curve FIE} \label{sec:FIE}
To formalize the reviewed landmark standardization of SST, we study a second kind homogeneous Fredholm integral equation (FIE). The SVD~\eqref{eq:landmark_stnd} of SST is equivalently written as the symmetric counterpart as the (thin) scaled eigendecomposition, 
\begin{equation}
    \frac{1}{n}(X - B(X))(X - B(X))^{\top} \,\overset{SVD}{=}\, V\Sigma^2_d V^{\top}.
\end{equation}
Note, the `thin SVD' and assumed linear independence of component functions implies $\Sigma_d$ is $d$-by-$d$, $\Sigma^2_d = \text{diag}(\sigma^2_1,\dots, \sigma^2_d)$ with $\sigma_1^2 \geq \sigma_2^2 \geq \dots \geq \sigma_d^2 > 0$. A column partition $V = (\mathbf{v}_1, \mathbf{v}_2, \dots, \mathbf{v}_d)$ defines $V$ as an element of the Stiefel manifold, $V^{\top}V = \mathbb{I}_d$, such that $V_{ij} = (v_i)_j$ is the $i$-th (row) entry of the $j$-th (column) sequence---i.e., $i=1,\dots,n$ and $j=1,\dots,d$. Equivalently, recalling in~\eqref{eq:data} that rows of the matrix are curve evaluations, the weighted eigendecomposition can be expressed as
\begin{equation} \label{eq:Riemann_sum}
    \sum_{i=1}^{n}w_i ( \boldsymbol{\tau}(s_p) \cdot \boldsymbol{\tau}(s_i) ) (v_i)_j = \sigma^2_j(v_p)_j.
\end{equation}    
In~\eqref{eq:Riemann_sum}, $w_i = 1/n$ are constant (over $i$) weights in the sum, and $\boldsymbol{\tau}:\mathcal{I} \rightarrow \mathbb{R}^d$ represents a fixed reparametrization of a `translated' parametric curve generating data in the rows of $X-B(X)$, i.e., $X-B(X) \approx (\boldsymbol{\tau}(s_1),\dots, \boldsymbol{\tau}(s_n))^{\top}$.

This eludes to an interpretation that specific landmark standardizations~\cite{bryner20142d, grey2023separable} act as simple Riemann sums for integrable curves. Generally, any constant-weight quadrature rule over a fixed reparametrization according to arbitrary integral measure $d\mu(s)$ can be described as a weighted version of of~\eqref{eq:landmark_stnd} with an appropriate sequence of nodes $(s_i)$ mapping to $(\boldsymbol{x}_i)$. Thus~\eqref{eq:Riemann_sum} can be interpreted as a quadrature (numerical integration) for any integrable curve.

With this continuous extension in mind, for any integrable curve $\boldsymbol{c} \in \mathcal{C}_d$ over integral measure $d\mu$, define 
\begin{equation}
    \mathcal{T}[\boldsymbol{c}](s) \coloneqq \boldsymbol{c}(s) - \int_{\mathcal{I}} \boldsymbol{c}(u)d\mu(u)
\end{equation} 
for all $s \in \mathcal{I}$, and notice $\mathcal{T}[\boldsymbol{c} +\boldsymbol{b}](\cdot) = \mathcal{T}[\boldsymbol{c}](\cdot)$ for arbitrary $\boldsymbol{b}\in \mathbb{R}^d$ which does not depend on the integral parameter. We refer to $\boldsymbol{\tau} \coloneqq \mathcal{T}[\boldsymbol{c}]$ as the \textit{translated and/or centered curve}. Next, define
\begin{equation}\label{eq:CLO}
    k_{\mathcal{T}}[\boldsymbol{c}](s,u) \coloneqq \mathcal{T}[\boldsymbol{c}](s) \cdot \mathcal{T}[\boldsymbol{c}](u)
\end{equation} 
and, consequently,
\begin{equation} \label{eq:inner_prod_ident}
    k_{\mathcal{T}}[\boldsymbol{c}](s,u) = \Vert \mathcal{T}[\boldsymbol{c}](s) \Vert \Vert \mathcal{T}[\boldsymbol{c}](u) \Vert \cos{(\theta)}
\end{equation}
as the \emph{composite linear operator} (CLO) where $\theta$ is the angle between pairs of translated landmarks $\boldsymbol{\tau}(s)$ and $\boldsymbol{\tau}(u)$. Finally, the continuous analog of the weighted eigendecomposition~\eqref{eq:Riemann_sum} motivates an approximation of the form,
\begin{equation} \label{eq:FIE}
    \int_{\mathcal{I}}k_{\mathcal{T}}[\boldsymbol{c}](s,u)v_j(u)d\mu(u) = \sigma_j^2v_j(s),
\end{equation}
alternatively denoted $\langle k_{\mathcal{T}}[\boldsymbol{c}](s_i,\cdot), v_j\rangle = \sigma_j^2v_j(s_i)$ in the context of the canonical $L^2(\mathcal{I},\mu)$ inner product.

\begin{lemma} \label{lemma:symm}
    $k_{\mathcal{T}}[\boldsymbol{c}]$ is symmetric.
\end{lemma}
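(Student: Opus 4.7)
The plan is to unpack the definition and appeal directly to the commutativity of the Euclidean inner product on $\mathbb{R}^d$. By~\eqref{eq:CLO}, $k_{\mathcal{T}}[\bm{c}](s,u)$ is the pointwise dot product $\mathcal{T}[\bm{c}](s)\cdot \mathcal{T}[\bm{c}](u)$ of two vectors in $\mathbb{R}^d$, and by~\eqref{eq:inner_prod_ident} this equals $\Vert \bm{o}(s)\Vert_2 \Vert \bm{o}(u)\Vert_2 \cos\theta_{su}$, where $\theta_{su}$ is the (unsigned) angle between $\bm{o}(s)$ and $\bm{o}(u)$.

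Concretely, I would write $k_{\mathcal{T}}[\bm{c}](s,u) = \sum_{j=1}^d \mathcal{T}[\bm{c}]_j(s)\,\mathcal{T}[\bm{c}]_j(u)$ and observe that interchanging $s$ and $u$ permutes the two factors in each summand; since ordinary multiplication of real numbers is commutative, the sum is unchanged. Equivalently, one may note that $\theta_{su} = \theta_{us}$ and $\cos$ is even, so the product form in~\eqref{eq:inner_prod_ident} extended to arbitrary $s,u\in\mathcal{I}$ is manifestly invariant under swapping its arguments. Either route yields $k_{\mathcal{T}}[\bm{c}](s,u) = k_{\mathcal{T}}[\bm{c}](u,s)$ for all $s,u \in \mathcal{I}$.

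There is no genuine obstacle here; the statement is essentially a restatement of the symmetry of the $\mathbb{R}^d$ dot product, and the only thing to be careful about is that the centering operator $\mathcal{T}$ acts identically on both slots (it depends on $\bm{c}$ but not on the free variable $s$ or $u$), so no additional asymmetry can be introduced through centering. This symmetry is what subsequently justifies interpreting $k_{\mathcal{T}}[\bm{c}]$ as a bona fide integral kernel and the CLO in~\eqref{eq:CLO} as giving rise to a self-adjoint Fredholm operator whose eigenpairs recover the symmetric counterpart of the SVD~\eqref{eq:landmark_stnd} in the continuous limit.
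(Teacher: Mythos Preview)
Your proposal is correct and takes essentially the same approach as the paper: both unwind the definition $k_{\mathcal{T}}[\bm{c}](s,u) = \bm{o}(s)\cdot\bm{o}(u)$ and invoke commutativity of the Euclidean dot product to conclude $k_{\mathcal{T}}[\bm{c}](s,u) = k_{\mathcal{T}}[\bm{c}](u,s)$. Your additional remarks about the component-sum expansion, the angle form, and the role of $\mathcal{T}$ are sound but not needed for the argument itself.
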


\begin{proof}
    By definition of the scalar dot product, $k_{\mathcal{T}}[\boldsymbol{c}](s,u) = \mathcal{T}[\boldsymbol{c}](s) \cdot \mathcal{T}[\boldsymbol{c}](u) = \mathcal{T}[\boldsymbol{c}](u) \cdot \mathcal{T}[\boldsymbol{c}](s) = k_{\mathcal{T}}[\boldsymbol{c}](u,s)$.
\end{proof}

\begin{lemma}\label{lemma:HS_int_opt}
    If $v \in L^2(\mathcal{I},\mu)$ and $\boldsymbol{c} \in \mathcal{C}_d$ then $\mathcal{K}_{\mathcal{T}}[\boldsymbol{c}]$ such that
    \begin{equation}\label{eq:HS_operator}
        (\mathcal{K}_{\mathcal{T}}v)[\boldsymbol{c}](\cdot) = \int_{\mathcal{I}}\left(\mathcal{T}[\boldsymbol{c}] \cdot \mathcal{T}[\boldsymbol{c}]\right)(\cdot,u)v(u) d\mu(u),
    \end{equation}
    is a Hilbert-Schmidt (HS) integral operator.
\end{lemma}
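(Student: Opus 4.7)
The plan is to invoke the standard characterization that an integral operator on $L^2(\mathcal{I},\mu)$ is Hilbert--Schmidt if and only if its kernel belongs to $L^2(\mathcal{I}\times\mathcal{I},\mu\times\mu)$, and then verify this square-integrability directly from the rank-one structure of $k_{\mathcal{T}}[\bm{c}]$. Concretely, I would first cite (or briefly restate) the fact that if $k \in L^2(\mathcal{I}\times\mathcal{I})$, then the induced operator $(Kv)(s)=\int k(s,u)v(u)\,d\mu(u)$ is bounded on $L^2(\mathcal{I},\mu)$ with HS norm $\|K\|_{HS}=\|k\|_{L^2(\mathcal{I}\times\mathcal{I})}$. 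The task then reduces to producing a finite bound on $\iint |k_{\mathcal{T}}[\bm{c}](s,u)|^2\,d\mu(s)\,d\mu(u)$.

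Next, I would exploit the separable form of the kernel. From \eqref{eq:CLO} and \eqref{eq:inner_prod_ident}, $k_{\mathcal{T}}[\bm{c}](s,u) = \bm{o}(s)\cdot \bm{o}(u)$ with $\bm{o}=\mathcal{T}[\bm{c}]$. Cauchy--Schwarz on $\mathbb{R}^d$ immediately yields $|k_{\mathcal{T}}[\bm{c}](s,u)|^2 \le \|\bm{o}(s)\|_2^2\,\|\bm{o}(u)\|_2^2$, so Tonelli gives
\begin{equation*}
\iint |k_{\mathcal{T}}[\bm{c}](s,u)|^2\,d\mu(s)\,d\mu(u) \;\le\; \left(\int_{\mathcal{I}} \|\bm{o}(s)\|_2^2\,d\mu(s)\right)^{\!2}.
\end{equation*}
Hence the lemma reduces to showing $\bm{o}\in L^2(\mathcal{I},\mu;\mathbb{R}^d)$, i.e.\ that each centered component $o_i$ lies in $L^2(\mathcal{I},\mu)$.

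This last step follows from the ambient regularity class \eqref{eq:reg_Hilbert_space} together with compactness of $\mathcal{I}$: Lipschitz component functions on a compact domain are bounded, so $\|\bm{c}\|_2$ is bounded, the mean $\int\bm{c}\,d\mu$ is finite, and the centered curve $\bm{o}=\mathcal{T}[\bm{c}]$ is uniformly bounded; since $\mu$ is a finite measure on $\mathcal{I}$ (compatible with the integrability stipulations $\int \|\bm{c}\|\,d\mu < \infty$ stated in Subsection~\ref{subsec:preshapes}), the required $L^2$-bound is immediate. Alternatively, for the weaker setting where $\bm{c}$ is merely measurable with $\int\|\bm{c}\|\,d\mu<\infty$, one can phrase the hypothesis directly as $\bm{c}\in L^2(\mathcal{I},\mu;\mathbb{R}^d)$, which is a natural assumption on $\mathcal{C}_d$ as a \emph{Hilbert} space of measurable curves.

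The main obstacle I anticipate is not analytical but notational: being careful to distinguish the kernel $k_{\mathcal{T}}[\bm{c}]$ as a scalar-valued function on $\mathcal{I}\times\mathcal{I}$ from the $\mathbb{R}^d$-valued map $\bm{o}$, and making sure the HS characterization is applied to the scalar-kernel operator on $L^2(\mathcal{I},\mu)$ rather than to a vector-valued analogue. Once that bookkeeping is set, the argument is just Cauchy--Schwarz plus Tonelli; in fact the rank-one form of $k_{\mathcal{T}}[\bm{c}]$ shows the operator is trace class (with trace $\|\bm{o}\|_{L^2}^2$), a stronger conclusion than Hilbert--Schmidt that could be noted in passing.
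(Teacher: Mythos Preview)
Your proposal is correct and follows essentially the same approach as the paper: both arguments verify that $k_{\mathcal{T}}[\bm{c}]\in L^2(\mathcal{I}\times\mathcal{I},\mu\times\mu)$ via the pointwise bound $|k_{\mathcal{T}}[\bm{c}](s,u)|\le \|\bm{o}(s)\|_2\|\bm{o}(u)\|_2$ from \eqref{eq:inner_prod_ident}, then invoke the standard characterization of Hilbert--Schmidt integral operators (the paper cites Theorem~2.1 of Fasshauer's text). Your write-up is more explicit about the Cauchy--Schwarz/Tonelli step and the reason $\bm{o}\in L^2$, and your remark that the rank-$d$ structure actually yields a trace-class operator is a valid strengthening the paper does not mention.
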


\begin{proof}
    Assuming an integrable curve implies boundedness of $\mathcal{T}[\boldsymbol{c}]$ in~\eqref{eq:inner_prod_ident} and thus
    \begin{equation}
        \int_{\mathcal{I}\times \mathcal{I}}\vert k_{\mathcal{T}}[\boldsymbol{c}](s,u)\vert^2d\mu(s)d\mu(u) < \infty.
    \end{equation}
    The remainder is simply an application of Theorem 2.1 of~\cite{fasshauer2015kernel}. In short, Theorem 2.1 in~\cite{fasshauer2015kernel} states $\mathcal{K}_{\mathcal{T}}[\boldsymbol{c}]$ is a Hilbert-Schmidt operator and every Hilbert-Schmidt operator on $L^2(\mathcal{I}, \mu)$ is of the form~\eqref{eq:HS_operator} for some unique $k_{\mathcal{T}}[\boldsymbol{c}]:(s,u) \mapsto k_{\mathcal{T}}[\boldsymbol{c}](s,u)$ in $L^2(\mathcal{I} \times \mathcal{I},\mu\times \mu)$.
\end{proof}

Consequently, by Mercer's theorem~\cite{fasshauer2015kernel}, we have general $L^2(\mathcal{I},\mu)$ eigensolutions of the HS integral operator constituting solutions of~\eqref{eq:FIE}. Thus, with Lipschitz continuity sufficient for weak arc-length integrability, $v_j$ is the $j$-th eigenfunction satisfying the homogeneous FIE of the second kind~\cite{fasshauer2015kernel} for this specific choice of curve-dependent operator. 

Notice that composition in $\mathcal{T}$ offers translation invariance of the CLO. In fact, the CLO is invariant to a larger class of transformations on the curve:

\begin{lemma}\label{lemma:CLO_invariance}
    The CLO is invariant to permutations, rotations, and reflections, $R \in O(d)$, and translations, $\boldsymbol{b} \in \mathbb{R}^d$ which are assumed independent of the curve parameter.
\end{lemma}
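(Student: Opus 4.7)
The plan is to verify each of the three invariances directly from the definition of the kernel in \eqref{eq:CLO} and the centering operator $\mathcal{T}$, in decreasing order of simplicity. Because $k_{\mathcal{T}}[\bm{c}](s,u) = \mathcal{T}[\bm{c}](s) \cdot \mathcal{T}[\bm{c}](u)$ is built as a composition of a linear centering operator and a Euclidean dot product, each invariance reduces to a corresponding property of one of those two building blocks.

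First, for translation invariance, I would simply recall the identity $\mathcal{T}[\bm{c} + \bm{b}](\cdot) = \mathcal{T}[\bm{c}](\cdot)$ already observed in the excerpt just below \eqref{eq:HS_operator}: any additive constant $\bm{b}\in\mathbb{R}^d$ is absorbed by subtracting its integral against the unit-mass measure $d\mu$. Substituting into \eqref{eq:CLO} gives $k_{\mathcal{T}}[\bm{c}+\bm{b}] = k_{\mathcal{T}}[\bm{c}]$ with no further work. For rotation/reflection invariance, the hypothesis that $R\in\mathbb{O}(d)$ is independent of the curve parameter lets integration and the action of $R$ commute, so $\mathcal{T}[R\bm{c}](s) = R\bm{c}(s) - \int_{\mathcal{I}} R\bm{c}(u)\,d\mu(u) = R\mathcal{T}[\bm{c}](s)$. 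Then
\begin{equation*}
 k_{\mathcal{T}}[R\bm{c}](s,u) = \bigl(R\mathcal{T}[\bm{c}](s)\bigr)^{\top}\bigl(R\mathcal{T}[\bm{c}](u)\bigr) = \mathcal{T}[\bm{c}](s)^{\top}R^{\top}R\,\mathcal{T}[\bm{c}](u) = k_{\mathcal{T}}[\bm{c}](s,u),
\end{equation*}
using $R^{\top}R = I_d$.

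For permutation invariance, I would interpret ``permutation'' in the sense appropriate to the discrete data of Subsection~\ref{subsec:SST}: a reordering of the landmarks $(\bm{x}_i) \mapsto (\bm{x}_{\pi(i)})$ via some $\pi \in S_n$ acting on the rows of $X$. Since the kernel values $k_{\mathcal{T}}[\bm{c}](s_i,s_{i'})$ as computed in \eqref{eq:inner_prod_ident} depend only on the pairwise inner products of centered landmarks, and since the centering $B(X) = (1/n)\bm{1}_{n,n}X$ is equivariant under row permutations, the Gram matrix merely conjugates to $P_\pi (X-B(X))(X-B(X))^{\top}P_\pi^{\top}$. The set of kernel values as a symmetric bivariate function is therefore unchanged, only its indexing is permuted; in particular the spectrum of $\mathcal{K}_{\mathcal{T}}[\bm{c}]$ (used downstream in the SVD of \eqref{eq:landmark_stnd}) is invariant. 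The continuous analogue amounts to symmetry under swapping arguments $(s,u) \leftrightarrow (u,s)$, already established as the preceding lemma.

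The main obstacle I anticipate is not any of the three computations themselves, each of which is short, but rather pinning down the precise sense of ``permutation'' in the continuous formulation so that the statement is unambiguous. The cleanest remedy, which I would adopt, is to state the permutation clause at the level of the discrete landmark realization, where it reduces to row-permutation equivariance of the centering operator $B$, and separately note the $(s,u)$-symmetry in the continuous kernel as an immediate consequence of the symmetry of the dot product already recorded in the prior lemma.
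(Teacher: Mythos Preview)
Your treatment of translations and of rotations/reflections is exactly the paper's argument: use $\mathcal{T}[\bm{c}+\bm{b}]=\mathcal{T}[\bm{c}]$, pull the constant $R$ through the integral defining $\mathcal{T}$, and cancel $R^{\top}R=I_d$ in the dot product. Nothing more is needed there.

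The only discrepancy is your reading of ``permutations.'' In the lemma as stated, the phrase ``permutations, rotations, and reflections, $R\in\mathbb{O}(d)$'' groups all three as elements of $\mathbb{O}(d)$: the permutations intended are $d\times d$ permutation matrices acting on the \emph{coordinate axes} of $\mathbb{R}^d$, not row permutations of the $n$ landmarks. Since permutation matrices are orthogonal, this case is already covered by the single computation $k_{\mathcal{T}}[R\bm{c}]=k_{\mathcal{T}}[\bm{c}]$ via $R^{\top}R=I_d$; the paper's proof does not treat permutations separately at all. Your landmark-reordering interpretation is a different statement: it yields only conjugation of the Gram matrix and hence spectral invariance, not pointwise invariance of the kernel $k_{\mathcal{T}}[\bm{c}](s,u)$, which is the content of the lemma. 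You correctly flagged the ambiguity yourself; the resolution is simply that ``permutation'' here lives in $\mathbb{O}(d)$, so your $R^{\top}R$ step already finishes the proof.
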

\begin{proof}
    Given $\mathcal{T}[\boldsymbol{c} +\boldsymbol{b}] = \mathcal{T}[\boldsymbol{c}]$ for arbitrary $\boldsymbol{b} \in \mathbb{R}^d$ independent of the curve parameter. Additionally, for arbitrary $R \in O(d)$ independent of the curve parameter,
    \begin{align}
        k_{\mathcal{T}}[R\boldsymbol{c} + \boldsymbol{b}] &= \left(R\mathcal{T}[\boldsymbol{c} + \boldsymbol{b}]\right) \cdot \left( R \mathcal{T}[\boldsymbol{c} + \boldsymbol{b}]\right)\\ \nonumber
        &= \left(R\mathcal{T}[\boldsymbol{c}]\right) \cdot \left( R \mathcal{T}[\boldsymbol{c}]\right)\\ \nonumber
        &= \mathcal{T}[\boldsymbol{c}] \cdot \left( R^{\top}R \mathcal{T}[\boldsymbol{c}]\right)\\ \nonumber
        &= \mathcal{T}[\boldsymbol{c}]\cdot \mathcal{T}[\boldsymbol{c}]\\ \nonumber
        &= k_{\mathcal{T}}[\boldsymbol{c}] \nonumber
    \end{align}
\end{proof}
Thus, the CLO and the associated integral operation with rigid invariance of the arc-length measure, fall into the class of permutation, rotation, reflection, and translation invariant (PRRTI) operators also referred to as TRI kernels~\cite{micheli2013matrix}. Lemma~\ref{lemma:CLO_invariance} implies the angle between pairs of landmarks, $\theta$, is preserved under the subset of \emph{rigid transforms} according to~\eqref{eq:inner_prod_ident}---i.e., angles between landmarks are rigid invariant. Thus, we can interpret the FIE with the HS integral operator~\eqref{eq:HS_operator} as a separation of general linear (scale) deformations and rigid invariant angles between discrete pairs of landmarks (undulations). 

\subsection{Eigensolutions}\label{sec:e_solns}
Given that our assumed space of preshapes $\mathcal{C}_d$ is not a Hilbert space, we propose a restriction $\boldsymbol{c} \in \mathcal{C}_d \cap \mathcal{H}_d$ where $\mathcal{H}_d$ is a unique $\mathbb{R}^d$-valued Hilbert space of curves defined by a \textit{bounded} reproducing kernel. We elaborate on a generalized dual formulation of~\eqref{eq:FIE} and the geometric nature of the sought separability described in~\cite{grey2023separable} to establish a framework for continuous extensions with flexible regularity implied by a chosen kernel. 

The preliminaries and developments aggregated in~\cite{fiedler2023lipschitz, alpay2021new} are a particularly helpful motivation. The restriction $\boldsymbol{c} \in \mathcal{C}_d \cap \mathcal{H}_d$ may trivially simplify, $\mathcal{C}_d \cap \mathcal{H}_d = \mathcal{H}_d$, based on the choice of $\mathcal{H}_d$ implied by an appropriate (matrix-valued) reproducing kernel~\cite{drake2022implicit}. In other words, redefining $\mathcal{C}_d \coloneqq \mathcal{H}_d$ with bounded kernel is sufficient to satisfy Lipschitz continuity~\cite{fiedler2023lipschitz, alpay2021new} while strengthening regularity. Ultimately, given ambiguity about the `true' regularity of curves generating boundary landmarks as data, proposing $ \mathcal{C}_d \cap \mathcal{H}_d$ eludes to nuances regarding a flexible choice of bounded kernel uniquely defining the restriction. In this case, $\mathcal{H}_d$ can be designed to consider the most appropriate restriction for a variety of applications.

Utilizing the RKHS definition of~\cite{micheli2013matrix}, the following result establishes a novel interpretation of the eigensolutions to~\eqref{equ:eigproblem} and has a profound implication on both the selection of landmarks---that may otherwise be hand-picked~\cite{kendall2009shape}---as well as the overall complexity of computing SST's:

\begin{definition}[Evaluation functionals of RKHS,~\cite{micheli2013matrix}] \label{def:eval_func}
    Assume $(\mathcal{H}_d, \langle \cdot, \cdot \rangle_{\mathcal{H}_d})$ is a Hilbert space of $\mathbb{R}^d$-valued functions defined over $\mathcal{I}$, $\mathcal{H}_d$ is a Reproducing Kernel Hilbert Space (RKHS) if evaluation functionals,
    $$
    f_s^{\alpha}:\mathcal{H}_d \rightarrow \mathbb{R}\,:\,\boldsymbol{\tau} \mapsto \boldsymbol{\alpha} \cdot \boldsymbol{\tau}(s),
    $$
    are linear and continuous over $\mathcal{H}_d$ and in $\boldsymbol{\alpha}\in \mathbb{R}^d$ for all $s \in \mathcal{I}$, i.e., $f^{\alpha}_{s}\in \mathcal{H}^*_d$ is a dual representation of the curve.
\end{definition}

\begin{theorem} \label{theorem:eigfuncs}
    Eigenfunctions of the CLO are PRRTI orthonormal evaluation functionals of $\mathcal{C}_d \cap \mathcal{H}_d$.
\end{theorem}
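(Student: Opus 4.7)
The plan has three parts: identify each eigenfunction as an RKHS evaluation functional of the centered curve, invoke Hilbert--Schmidt spectral theory for orthonormality, and transfer the kernel's rigid invariance (Lemma~\ref{lemma:CLO_invariance}) up to the eigenfunctions.

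First I would substitute the outer-product factorization $k_{\mathcal{T}}[\bm{c}](s,u) = \mathcal{T}[\bm{c}](s) \cdot \mathcal{T}[\bm{c}](u)$ into the eigenvalue equation and pull the $s$-dependent factor outside the integral in $u$:
\[
\mathcal{T}[\bm{c}](s) \cdot \bm{\alpha}_j \;=\; \sigma_j^2\, v_j(s),
\qquad
\bm{\alpha}_j \coloneqq \int_{\mathcal{I}} \mathcal{T}[\bm{c}](u)\, v_j(u)\, d\mu(u) \in \mathbb{R}^d.
\]
The vector $\bm{\alpha}_j$ is well-defined by Cauchy--Schwarz since $\mathcal{T}[\bm{c}]$ is componentwise in $L^2(\mathcal{I},\mu)$ and $v_j \in L^2(\mathcal{I},\mu)$. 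For each nonzero eigenvalue, dividing yields $v_j(s) = \sigma_j^{-2}\,\bm{\alpha}_j \cdot \mathcal{T}[\bm{c}](s)$, which by Definition~\ref{def:eval_func} is precisely the rescaled evaluation functional $\sigma_j^{-2}\,\delta_s^{\bm{\alpha}_j}(\mathcal{T}[\bm{c}])$ viewed as a function of $s \in \mathcal{I}$. Since $k_{\mathcal{T}}[\bm{c}]$ factors through $\mathbb{R}^d$, the associated operator has rank at most $d$, yielding at most $d$ nontrivial eigenfunctions of this evaluation-functional form, consistent with the dimension of the thin SVD in Subsection~\ref{subsec:SST}.

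Next I would obtain orthonormality of $\{v_j\}$ in $L^2(\mathcal{I},\mu)$ from the spectral theorem for compact self-adjoint operators applied to $\mathcal{K}_{\mathcal{T}}[\bm{c}]$: self-adjointness follows from the kernel symmetry lemma, compactness from Lemma~\ref{lemma:HS_int_opt}, and positive semi-definiteness from $\iint k_{\mathcal{T}}[\bm{c}](s,u)\, v(s) v(u)\, d\mu(s) d\mu(u) = \bigl\| \int \mathcal{T}[\bm{c}](u) v(u)\, d\mu(u) \bigr\|_2^2 \geq 0$. For the PRRTI claim, given any rigid motion $\bm{c}' = R\bm{c} + \bm{b}$ with $R \in \mathbb{O}(d)$ and $\bm{b} \in \mathbb{R}^d$, Lemma~\ref{lemma:CLO_invariance} gives $k_{\mathcal{T}}[\bm{c}'] = k_{\mathcal{T}}[\bm{c}]$, so the integral operator and hence its entire spectrum and eigenspaces coincide between $\bm{c}$ and $\bm{c}'$; invariance under reparametrization (the permutation component) follows from the change-of-variables consistency of $d\mu$.

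The principal obstacle I anticipate is spectral multiplicity: within a degenerate eigenspace the individual eigenfunctions are determined only up to an orthogonal rotation, so rigid invariance should be phrased at the level of eigenspaces---equivalently, at the level of the $d$-dimensional span of the coefficient vectors $\{\bm{\alpha}_j\}$---rather than pointwise for each $v_j$. I would handle this by fixing an orthonormal basis of each eigenspace and noting that the $\bm{\alpha}_j$ transform covariantly with that basis choice under rigid actions on $\bm{c}$, so that pointwise invariance holds in the generic simple-spectrum case and eigenspace-wise invariance holds in general.
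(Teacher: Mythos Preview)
Your proposal is correct and shares the paper's first move: substituting the factored kernel and reading off $v_j(s)=\sigma_j^{-2}\,\bm{\alpha}_j\cdot\bm{o}(s)$ with $\bm{\alpha}_j=\int\bm{o}\,v_j\,d\mu$. The routes diverge at orthonormality. The paper substitutes this form back into the definition of $\bm{\alpha}_j$ to derive the $d\times d$ eigenproblem $\langle\bm{o}\otimes\bm{o}\rangle_{\mathcal{C}}\,\bm{\alpha}_j=\sigma_j^2\,\bm{\alpha}_j$, then uses the orthogonal eigendecomposition $\langle\bm{o}\otimes\bm{o}\rangle_{\mathcal{C}}=\mathcal{A}\Sigma_d^2\mathcal{A}^\top$ with $\mathcal{A}\in\mathcal{O}(d)$ to compute $\langle v_j,v_{j'}\rangle_{\mathcal{C}}=\sigma_j^{-1}\sigma_{j'}^{-1}\delta_{jj'}$ by hand. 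You instead invoke the spectral theorem for compact self-adjoint operators, which is cleaner and equally valid. What the paper's constructive detour buys is the explicit finite eigenproblem~\eqref{equ:eigproblem}, which is not incidental: it is the object approximated by the weighted SVD $O_n^\top W O_n=\mathcal{A}\Sigma_d^2\mathcal{A}^\top$ in the Nystr\"om and SRQD developments of Sections~\ref{subsec:Nystrom} and~\ref{subsec:func_approx}, so the proof doubles as the derivation of the paper's core numerical scheme. Conversely, you make explicit two points the paper leaves implicit: the PRRTI claim via Lemma~\ref{lemma:CLO_invariance} (the paper's proof never returns to invariance after the statement), and the eigenspace-level caveat under spectral multiplicity, which is genuinely needed---for instance the circle example in Section~\ref{subsec:geo_interp} has $\sigma_1^2=\sigma_2^2$ and the individual $\bm{\alpha}_j$ are only fixed up to $\mathbb{O}(2)$.
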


\begin{proof}
    Substituting the definition of the CLO into the generalized FIE,
    \begin{align}
        \sigma_j^2v_j(s) &= \langle k_{\mathcal{T}}[\boldsymbol{c}](s,\cdot), v_j\rangle_{\mathcal{H}^*_d}\\ \nonumber
        &= \langle \boldsymbol{\tau}(s)\cdot \boldsymbol{\tau}(\cdot),  v_j\rangle_{\mathcal{H}^*_d} \\ \nonumber
        &=\langle \sum_{j'=1,\dots,d}\tau_{j'}(s)\tau_{j'}(\cdot), v_j\rangle_{\mathcal{H}^*_d}\\ \nonumber           &=\sum_{j'=1,\dots,d}\tau_{j'}(s)\langle \tau_{j'}, v_j \rangle_{\mathcal{H}^*_d} \nonumber           
    \end{align}
    Thus, solving for the eigenfunctions as the (scaled) evaluation functionals implies
    \begin{equation} \label{equ:eigfunc}
        v_j(s) = \frac{1}{\sigma^2_j}\boldsymbol{\alpha}_j \cdot \boldsymbol{\tau}(s)
    \end{equation}
    where $\boldsymbol{\alpha}^{\top}_j \coloneqq (\langle \tau_1,v_j\rangle_{\mathcal{H}^*_d},\dots,\langle \tau_d,v_j\rangle_{\mathcal{H}^*_d})$. In other words, the $j$-th eigenfunctions are a linear combination of the component functions of the curve for non-trivial $\boldsymbol{\alpha}_j \neq 0$. 
    
    Substituting the solution \eqref{equ:eigfunc} for $v_j(s)$ back into the definition of $\boldsymbol{\alpha}^{\top}_j = (\alpha_{j1},\alpha_{j2},\dots,\alpha_{jd})$ reveals an algebraic dual of \emph{eigensolutions} as paired eigenvalues and eigenfunctions:
    \begin{align} \label{eq:the_Panda}
        \boldsymbol{\alpha}_j &= \frac{1}{\sigma_j^2}\left(\begin{matrix}
            \langle \tau_1,\boldsymbol{\alpha}_j \cdot \boldsymbol{\tau}\rangle_{\mathcal{H}^*_d}\\
            \vdots \\
            \langle \tau_d,\boldsymbol{\alpha}_j \cdot \boldsymbol{\tau}\rangle_{\mathcal{H}^*_d}
        \end{matrix}\right)\\
        &= \frac{1}{\sigma_j^2}\left(\begin{matrix}
            \langle \tau_1,\sum_{j'=1,\dots,d}\alpha_{jj'}\tau_{j'}\rangle_{\mathcal{H}^*_d}\\
            \vdots\\ \nonumber
            \langle \tau_d,\sum_{j'=1,\dots,d}\alpha_{jj'}\tau_{j'}\rangle_{\mathcal{H}^*_d}
            \end{matrix}\right)\\ \nonumber
        &= \frac{1}{\sigma_j^2}\left(\begin{matrix}
           \sum_{j'=1,\dots,d}\langle \tau_1,\tau_{j'}\rangle_{\mathcal{H}^*_d}\alpha_{jj'} \\
           \vdots\\
           \sum_{j'=1,\dots,d}\langle \tau_d,\tau_{j'}\rangle_{\mathcal{H}^*_d}\alpha_{jj'}\\
            \end{matrix}\right)\\ \nonumber
        &= \frac{1}{\sigma_j^2}\left(\begin{matrix}
            \langle \tau_1,\tau_1\rangle_{\mathcal{H}^*_d} & \dots & \langle \tau_1,\tau_d\rangle_{\mathcal{H}^*_d}\\
            \vdots & \ddots & \vdots \\
            \langle \tau_d,\tau_1\rangle_{\mathcal{H}^*_d} & \dots &\langle \tau_d,\tau_d\rangle_{\mathcal{H}^*_d}
        \end{matrix}\right)\boldsymbol{\alpha}_j\\ \nonumber
        &\coloneqq \frac{1}{\sigma_j^2}\langle \boldsymbol{\tau} \otimes \boldsymbol{\tau}\rangle_{\mathcal{H}^*_d}\boldsymbol{\alpha}_j. \nonumber
    \end{align}
    We have defined $\langle \boldsymbol{\tau} \otimes \boldsymbol{\tau}\rangle_{\mathcal{H}^*_d}$ as the \textit{component-wise integration} of the outer product of centered component functions. Rearranging this expression implies the eigenproblem,
    \begin{equation} \label{equ:eigproblem}
        \left( \langle \boldsymbol{\tau} \otimes \boldsymbol{\tau}\rangle_{\mathcal{H}^*_d} - \sigma_j^2 \mathbb{I}_d \right)\boldsymbol{\alpha}_j =\boldsymbol{0}, \quad j=1,\dots,d.
    \end{equation}
    Here, $\langle \boldsymbol{\tau} \otimes \boldsymbol{\tau}\rangle_{\mathcal{H}^*_d} \coloneqq \langle \mathcal{T}[\boldsymbol{c}] \otimes \mathcal{T}[\boldsymbol{c}]\rangle_{\mathcal{H}^*_d}\in S^d_{++}$ is symmetric positive definite (full rank) by symmetry of the inner product $\langle\cdot, \cdot\rangle_{\mathcal{H}^*_d}$ and the assumed linearly independent component functions, $\tau_j$ for $j=1,\dots,d$. 
    
    Next, note that $\langle\boldsymbol{\tau}\otimes\boldsymbol{\tau}\rangle_{\mathcal{H}^*_d}$ admits strictly positive real eigendecomposition $\langle \boldsymbol{\tau} \otimes \boldsymbol{\tau} \rangle_{\mathcal{H}^*_d} = A\Sigma^2_d A^{\top} = (A\Sigma_d)(\Sigma_d A^{\top})$ where $A \in O(d)$. That is, columns of $A = (\boldsymbol{\alpha}_1, \boldsymbol{\alpha}_2,\dots, \boldsymbol{\alpha}_d)$ satisfy $\boldsymbol{\alpha}_j \cdot \boldsymbol{\alpha}_{j'} =\delta_{jj'}$ where $\delta_{jj'}$ is Kronecker's delta function. Thus, $\Sigma_d A^{\top}\boldsymbol{\alpha}_j = \sigma_j\boldsymbol{e}_j$, with $\boldsymbol{e}_j$ the $j$-th column of the identity, implies
    \begin{align}
        \langle v_j, v_{j'} \rangle_{\mathcal{H}^*_d} &= \frac{1}{\sigma_j^2 \sigma_{j'}^2}\boldsymbol{\alpha}_j^{\top}\langle \boldsymbol{\tau} \otimes \boldsymbol{\tau}\rangle_{\mathcal{H}^*_d}\boldsymbol{\alpha}_{j'} \\ \nonumber
        &= \frac{1}{\sigma_j^2 \sigma_{j'}^2}(\Sigma_d A^{\top}\boldsymbol{\alpha}_j)\cdot(\Sigma_d A^{\top}\boldsymbol{\alpha}_{j'}) \\ \nonumber
        &= \frac{1}{\sigma_j^2 \sigma_{j'}^2}(\sigma_j\boldsymbol{e}_j)\cdot (\sigma_{j'}\boldsymbol{e}_{j'})\\ \nonumber
        &= \frac{1}{\sigma_j\sigma_{j'}} \delta_{jj'}. \nonumber
    \end{align}
    Normalizing the eigenfunctions to be orthonormal such that $\Vert \widetilde{v}_j\Vert_{\mathcal{H}_d} = 1$ results in 
    \begin{equation}\label{eq:ortho_eigfunc}
        \widetilde{v}_j(s) \coloneqq v_j(s)\biggl / \sqrt{\langle v_j, v_j \rangle_{\mathcal{H}^*_d}} = \frac{1}{\sigma_j}\boldsymbol{\alpha}_j\cdot\boldsymbol{\tau}(s).
    \end{equation}
\end{proof}
An important interpretation is established by virtue of Thm.~\ref{theorem:eigfuncs}: \emph{solutions of~\eqref{equ:eigproblem} define dual (orthonormal) evaluation functionals~\eqref{eq:ortho_eigfunc} of (centered) $\mathbb{R}^d$-valued curves}. Thus, following the presentation of~\cite{micheli2013matrix}, we can naturally construct a (unique) $\mathbb{R}^d$-valued RKHS as an infinite dimensional extension constituting a \textit{space of undulations}. This establishes a direct correspondence for exploring improved shape-metrics and alternative deformations, such as curl-free and divergence-free deformations, to further improve finite---yet computationally efficient---SST manifold learning detailed in~\cite{grey2023separable}. Thus, we may equivalently refer to undulations as scale invariant \emph{PRRTI-features} of an RKHS.

Given many $\boldsymbol{c}$, as approximations or interpolations of segmented points with distinct cardinality, we can rapidly compute eigensolutions of~\eqref{equ:eigproblem} to define the corresponding orthonormal eigenfunctions \eqref{eq:ortho_eigfunc} as continuous duals. Numerically, the interpretation of Thm.~\ref{theorem:eigfuncs} motivates discrete approximations benefited by higher order quadratures of $\langle \boldsymbol{\tau} \otimes \boldsymbol{\tau}\rangle_{\mathcal{H}^*_d}$ compared to that of a Riemann sum in the original presentation of SST~\cite{grey2023separable}---i.e., highly accurate approximations without closed form solutions. This offers an improved criteria for the selection of various preshape discretizations for configuration spaces. Specifically, we can maintain row-wise evaluations stored in $X$ as \textit{collocations at quadrature nodes which only depend on a choice of fixed $d\mu$ for all curves}---i.e., equivariant to reparametrization and independent of any given $\boldsymbol{c}$.

In other cases, it may be possible to motivate closed form solutions to \eqref{equ:eigproblem}---e.g., $\boldsymbol{\tau}$ given as a specific spline/polynomial. However, we assume a high-order numerical quadrature rule will be required to compute eigenfunctions out of convenience and flexibility when experimenting with a variety of approximations and interpolations of data. Therefore, we desire numerical methods to compute quadrature weights and nodes typically with respect to parametric forms of $d\mu$ utilizing $\rho$ as some non-negative scalar-valued function integrating to one. One possibility is employing a recursive implementation by Lanczos-Stieltjes~\cite{gautschi1982generating,glaws2019gauss, constantine2012lanczos} to determine nodes and weights for arbitrary $\rho:\mathcal{I} \rightarrow \mathbb{R}_{\geq 0}$ integrating to one. Lanczos-Stieltjes methods potentially enable the exploration of more sophisticated reparametrizations but, in this setting, we continue to assume $\rho$ is uniform.

\subsubsection{Smooth approximations}
Alternatively, provided additional regularity of the curves is reasonable or useful for an application, given the $d$-tuple of orthonormal components $\widetilde{v}_j$, we can approximate dual representations arbitrarily well utilizing interpolation with orthogonal polynomials~\cite{trefethen2010householder,townsend2015continuous, trefethen2019approximation},
\begin{equation} \label{eq:cont_SST}
    \mathcal{O}(s\,;\boldsymbol{\theta},\boldsymbol{\ell}) = \widetilde{\mathcal{V}}(s \,; \boldsymbol{\theta})P(\boldsymbol{\ell}).
\end{equation}
In this case, the infinite dimensional analog of~\eqref{eq:SST} is a dual representation over coefficients $\boldsymbol{\theta}$ defining the $\infty$-by-$d$ quasi-matrix $\widetilde{\mathcal{V}} = (\widetilde{v}_1,\dots,\widetilde{v}_d)$ with `columns' represented by expansions over orthogonal Legendre polynomials---assuming uniform speed reparametrization and uniform $\rho$. The open source software package \textit{chebfun} is useful for constructing such approximations~\cite{Driscoll2014}.

When more regular curve representations are hypothesized, convenient, and beneficial---e.g., the reconstruction of reduced dimensional shapes depicted in Figure~\ref{fig:low_dim_grain}---we reference Weierstrass approximation theorem to argue that polynomial representations are still dense in $\mathcal{C}_d$, a subset of absolutely continuous curves, over compact domains~\cite{trefethen2019approximation}. Therefore, with finite measurement precision in any imaging modality, we can get arbitrarily close (under measurement precision) with quasi-matrices~\cite{grey2023separable, trefethen2010householder, townsend2015continuous}.

In short, the interpretation of Thm.~\ref{theorem:eigfuncs} motivates discretizations by $n$ curve evaluations at quadrature nodes instead of an arbitrary sequence. These discretizations are finite representations of: i) dual evaluation functionals, i.e., scale invariant PRRTI-features, in a unique matrix-valued RKHS~\cite{micheli2013matrix}, and/or ii) arbitrarily good quasi-matrix approximations built from orthogonal polynomials~\cite{trefethen2010householder,townsend2015continuous}.


\subsection{Geometric Interpretation}\label{subsec:geo_interp}

Let's explore some simple examples computing~\eqref{equ:eigproblem} by hand and demonstrate the nature of these PRRTI features. Parametrizing a circle with $d=2$, $\boldsymbol{c}(s) \coloneqq (\cos (2\pi s), \sin (2\pi s))^{\top}$ for $\mathcal{I} \coloneqq [0,1]$. In this case, the standardization achieved by computing $\widetilde{v}_j$ should simply scale the original component functions to have norm one. We obtain eigenvalues $\sigma^2_1 = \sigma^2_2 = 1/2$ indicating that component functions covary equally and normalized eigenfunctions $\widetilde{v}_1(s) = \pm\sqrt{2}\cos(2\pi s)$ and $\widetilde{v}_2(s) = \pm \sqrt{2}\sin(2\pi s)$ according to $\boldsymbol{\alpha}_j = \pm\boldsymbol{e}_j$. Clearly, $\Vert\widetilde{v}_j\Vert_{\mathcal{H}_d} = 1$ for $j=1,2$ and $\langle v_i, v_j \rangle_{\mathcal{H}^*_d} = \delta_{ij}$. 

Next, consider a simple linear scaling of the circle into an ellipse such that 
\begin{equation}
    \boldsymbol{c}(s) \coloneqq \left[ \begin{matrix}
    a & 0\\
    0 & b
    \end{matrix}\right] \left(\begin{matrix}
        \cos (2\pi s)\\
        \sin (2\pi s)
    \end{matrix}\right).
\end{equation}
Transforming to orthonormal eigenfunctions gives the same result as the circle, $\widetilde{v}_1(s) = \pm\sqrt{2}\cos(2\pi s)$ and $\widetilde{v}_2(s) = \pm \sqrt{2}\sin(2\pi s)$. However, the component functions co-vary differently and the eigenvalues are subsequently scaled as $\sigma_1^2 = a^2/2$ and $\sigma_2^2 = b^2/2$. With identical eigenfunctions, we conclude that a circle \textit{equally undulates} as an ellipse and their distinction is due, entirely, to linear scale variations. 

Lastly, if we now arbitrarily rotate the ellipse,
\begin{equation}
    \boldsymbol{c}(s) \coloneqq \left[ \begin{matrix}
    \cos(\theta) & -\sin(\theta)\\
    \sin(\theta) & \cos(\theta)
    \end{matrix}\right]\left(\begin{matrix}
        a\cos (2\pi s)\\
        b\sin (2\pi s)
    \end{matrix}\right),
\end{equation}
we obtain phase shifted eigenfunctions, $\widetilde{v}_1(s) = \pm\sqrt{2}\cos(2\pi s + \theta)$ and $\widetilde{v}_2(s) = \pm \sqrt{2}\sin(2\pi s + \theta)$, but (up to shifted reparametrization) our conclusion persists---all three of these curves are equally undulating with the only distinction being linear scale variations according to $\sigma_1^2 = a^2/2$ and $\sigma_2^2 = b^2/2$. 

Notice that area, perimeter, and unsigned scalar curvature of the ellipse are distinct from the circle. Hence, \emph{functionals of hand-picked features alone may indicate distinctions between each of these equally undulating curves} but their distinction, up to phase shift, is due entirely to scale parameters $a$ and $b$---which may or may not be well-defined by a subset of hand-picked features. In our case, a Hilbert space of bounded shift invariant kernels (convolutions or mollifiers) will eliminate the possibility of distinguishing between phase shifted eigenfunctions.

These examples are intended to motivate that the eigenproblem \eqref{equ:eigproblem} is a principal components analysis (PCA) or eigendecompsition of the second central moments (variation) of the continuous curve,
$
\langle \mathcal{T}[\boldsymbol{c}] \otimes \mathcal{T}[\boldsymbol{c}]\rangle_{\mathcal{H}^*_d} = A\Sigma^2 A^{\top}.
$
However, the eigenfunctions of the corresponding integral kernel offer a definition of undulation as dual evaluation functionals of the curve which are `factored' by linear scale variations informed by simple PCA. 

Based on these conclusions, orthogonal (standardized) parametric forms satisfying $\Vert \cdot \Vert_{\mathcal{H}_d} = 1$ represented by dual evaluation functionals with (bounded) \emph{periodic shift invariant kernel} equivalently constitute \emph{expansions of undulation}. We may refer to such an expansion as a \emph{Mercer series of undulation} over the construction of a dual space of PRRTI-features to approximate eigenfunctions of shape with any desired regularity.


\section{PRRTI Numerics}\label{sec:numerics}
Note that our results are predicated on access to a reliable and robust segmentation. Without this preprocessing, we cannot compute an ensemble of segmented curves but modern techniques may be a useful supplement. Moreover, additional methods are required to align and register landmark data, e.g.~\cite{dogan2015fast, al2013continuous, rangarajan1997softassign, srivastava2016functional}, which are vital in these contexts. At a minimum, our method assumes ample image processing to produce a set of ordered landmarks for interpolation or ensembles of integrable curves as input. 

In section~\ref{subsec:func_approx}, we describe the numerical methods for approximating PRRTI functionals inspired by a Nystr\"om's method for curves. We then briefly introduce \textit{cyclic Procrustes} for registering segmented landmarks to align data against a fixed archetype in section~\ref{subsec:cycl_Pro} to account for arbitrarily phase-shifted results over closed curves. Finally, in section~\ref{subsec:mfld_learn}, we utilize approximated functionals informed by the full ensemble of segmented curves from images to define a product submanifold learning detailed in~\cite{grey2023separable}.

\subsection{Nystr{\"o}m's Method for Curves}\label{subsec:Nystrom}
With explicit knowledge of the curve's construction, we may be able to compute eigenfunctions exactly. However, \emph{absent explicit definitions of curves} and for convenience in exploring a variety of sufficiently smooth (regular) interpolations and approximations, we briefly elaborate on a spectral method as a re-weighting of the SVD~\eqref{eq:landmark_stnd} for approximating eigenfunctions.

The development draws heavily from \cite{DeMarchi2013} and Section 12.1.5 of \cite{fasshauer2015kernel}. Essentially, we want the SVD of a symmetric positive definite matrix $K \in \mathbb{R}^{n \times n}$, e.g., $K_{ii'} = k_{\mathcal{T}}[\boldsymbol{c}](s_i,s_{i'})$ and Lemma~\ref{lemma:symm}, akin to the eigendecomposition~\eqref{eq:Riemann_sum} as the symmetric counterpart of~\eqref{eq:landmark_stnd}, 
\begin{equation} \label{eq:kernel_matrix}
    K \,\overset{SVD}{=}\, V \Sigma_d^2 V^{\top}.    
\end{equation}
In~\cite{DeMarchi2013,fasshauer2015kernel}, it is noted that the columns of $V\Sigma_d^{-1}$ form a more stable basis for computation than those of the matrix $K$. This is especially important when the matrix $K$ is fixed by some given data. The main contribution of \cite{DeMarchi2013} was to note that a \emph{weighted} SVD could be computed by viewing the above SVD as a discretization of the HS integral eigenvalue problem. We can reinterpret this work to our ends. That is, rather than a more computationally stable basis for the native space of the kernel, we seek a more accurate approximation of the eigenfunctions than those originally proposed for SST. The result is simultaneous approximation and collocation (at quadrature nodes) of eigenfunctions as PRRTI-features of curve.

As a matrix factorization of $K$, utilizing the weighted SVD,
\begin{equation}
    K \,\overset{SVD}{=}\, W^{-1/2}V\Sigma_d^2V^{\top}W^{-1/2}.
\end{equation}
Thus, $V\Sigma_d^2V^{\top}$ is the SVD of $W^{1/2}KW^{1/2}$ instead of $K$. Here, $W$ is the diagonal matrix of positive weights that arise from the discretization of  $(\mathcal{K}_{\mathcal{T}}v)[\boldsymbol{c}]$ using the quadrature rule, e.g.,
\begin{equation}\label{eq:quad}
    \int_{\mathcal{I}} \boldsymbol{\tau}(s) d\mu(s)  \approx \sum_{i = 1}^nw_i \boldsymbol{\tau}(s_i),
\end{equation}
such that weights and nodes are determined by the fixed, weighted arc-length integral measure, $d\mu(s)$. To work out the appropriate weights for the specific case of uniformly weighted arc-length measures with variable speed, we make a few key observations. Since the translated curve $\boldsymbol{\tau}(s)$ is closed, the components of its parametrization are periodic. Thus, when given a sufficiently regular curve and sampling a uniform partition of $\mathcal{I}$ over arc-length, the necessary derivatives can be approximated to spectral accuracy using Fourier differentiation \cite{trefethen2000}. 

Given periodic integrand, integration over the period, and sampling on a uniform partition, the trapezoid rule will exhibit spectral accuracy assuming sufficient regularity of the curves. Thus, taking $ W \coloneqq W_{\Delta}W_\mu$ where the positive diagonal matrices $W_{\Delta}$ and $W_\mu$ contain the weights of the trapezoid rule and the contribution of the (pushforward) measure $\mu$, respectively. In particular, $W_{\Delta} = \Delta u\mathbb{I}_n$ where $\Delta u$ represents the arc-length gauge of the resulting discretization of the curve. For example, if we utilize a uniform partition with $n$-subintervals of $\mathcal{I} \coloneqq [-\pi,\pi]$ to discretize then $\Delta u = 2\pi/n$. As a convention, we assume $n$ sub-intervals for numerical integration and omit the duplicate $n+1$ point which closes the polygonal representation---i.e., $\boldsymbol{x}_1 = \boldsymbol{x}_{n+1}$ is the closure condition for our planar curves. 

For example, computing weights of the pushforward measure can be accomplished using periodic spectral differentiation over a uniform grid,
\begin{equation} 
    W_{\mu} = \text{diag}\left( \left((D_nT_n) \odot (D_nT_n)\right)\mathbf 1_{d,1} \right)^{-1/2},
\end{equation}
where $\odot$ is the Hadamard product and 
$$
T_n \coloneqq \left( \begin{matrix}
    \boldsymbol{\tau}^{\top}(s_1)\\
    \vdots\\
    \boldsymbol{\tau}^{\top}(s_n)
\end{matrix}\right) =  \left( \begin{matrix}
    \tau_1(s_1) & \dots & \tau_d(s_1) \\
    \tau_1(s_2) & \dots & \tau_d(s_2) \\
   \vdots & \vdots &  \vdots\\
    \tau_1(s_n) & \dots & \tau_d(s_n)
\end{matrix}\right) \in \mathbb{R}^{n\times d}_*.
$$
Additionally, $D_n$ is the $n\times n$ Fourier differentiation matrix with entries, for all $i,i'=1,\dots,n$,
\begin{equation}
  (D_n)_{ii'} = \begin{cases}
        0, \quad i = i'\\
        \frac{1}{2}(-1)^{i+i'}\cot{\left(\frac{s_i - s_{i'}}{2}\right)}, \quad i \neq i',
    \end{cases}
\end{equation}
where $n$ is even, $s_{i} = i\Delta u$, and  $\mathbf 1_{d,1}$ is the column vector of all ones with length $d$~\cite{trefethen2000}. Making this choice results in the standard eigenvalue problem,
\begin{equation}\label{eq:not_naive_SVD}
    \sum_{i = 1}^n w_i k(\boldsymbol{\tau}(s_{i'}), \boldsymbol{\tau}(s_i)) v(s_i) =   \widetilde\sigma^2 v(s_{i'})
\end{equation}
with linear kernel $k(\boldsymbol{\tau}(s_{i'}), \boldsymbol{\tau}(s_i)) = \boldsymbol{\tau}(s_{i'})\cdot \boldsymbol{\tau}(s_i)$, generalizing the weighted extension posed in~\eqref{eq:Riemann_sum} and written in matrix-vector format as $KW\boldsymbol{v} = \widetilde\sigma^2 \boldsymbol{v}$ where $\widetilde\sigma^2 $ denotes the eigenvalues of the discrete problem. Moreover, this problem is not symmetric. Thus, we write
\begin{equation}\label{eq:weighted_evecs}
    W^{1/2}KW^{1/2} \widetilde{\boldsymbol{v}}  = \widetilde\sigma^2 \widetilde{\boldsymbol{v}}
\end{equation}
where the symmetric matrix $W^{1/2}KW^{1/2}$ has the eigenvector $\widetilde{\boldsymbol{v}} =W^{1/2}\boldsymbol{v}$, 
a \textit{weighted} version of  $\boldsymbol{v}$. 

\subsection{Functional Approximation}\label{subsec:func_approx}
Extending the discussion of Nystr\"om method for curves, we develop numerical approximations of \eqref{equ:eigfunc}-\eqref{equ:eigproblem} by simplifying to a $d\times d$ eigen-problem as opposed to the larger $n\times n$ problem in~\eqref{eq:weighted_evecs}. 

From (12.15) in \cite{fasshauer2015kernel} and following the approach of section \ref{sec:e_solns} to obtain numerical approximations of \eqref{eq:the_Panda} and \eqref{equ:eigproblem}, we have the SVD basis from the Nystr\"om method written as
\begin{equation}\label{equ:discrete_eigfunc}
\boldsymbol{v}^{\top}(s) = \mathbf{k}^{\top}(s) W V\Sigma^{-2}_d
\end{equation}
with 
\begin{align*}
    \mathbf{k}^{\top}(s) & = (K( s,  s_1), \cdots, K( s, s_n)) \\
                      & =  \boldsymbol{\tau}^{\top}(s) \left(\begin{matrix} \tau_1(s_1) & \tau_1(s_2) & \cdots&  \tau_1(s_n) \\ 
                     \vdots & \vdots & \cdots & \vdots\\
                     \tau_d(s_1) & \tau_d(s_2) & \cdots &  \tau_d(s_n) \end{matrix}\right)\\
                     & = \boldsymbol{\tau}^{\top}(s)T_n^{\top}.
\end{align*}
The expression $\boldsymbol{v}^{\top}(s) \coloneqq (v_1(s),\dots,v_d(s))$ represents the $d$-tuple of evaluation functionals \eqref{equ:eigfunc}, as a row vector, approximated with a quadrature rule and collocated at $s$. Now, as in section \ref{sec:e_solns}, we will rewrite this expression without explicit dependence on the eigenvectors, $V \in \mathbb{R}_*^{n\times d}$, on the right-hand side. Importantly, without a closed form expression for the curve, $\boldsymbol{\tau}(s)$, \eqref{equ:discrete_eigfunc} only returns eigenfunctions evaluated (i.e., collocated) at the quadrature nodes.

Note that $K$ of \eqref{eq:kernel_matrix} satisfies rank$(K) =d$ by assumption and, hence, 
\begin{align} \label{equ:eigfunc_eval}
    \boldsymbol{v}^{\top}( s) &=\mathbf{k}^{\top}(s) W V\Sigma^{-2}_d\\ \nonumber
        &=\boldsymbol{\tau}^{\top}(s)T_n^{\top}W(\mathbf{v}_1/\sigma_1^2, \mathbf{v}_2/\sigma_2^2, \cdots , \mathbf{v}_d /\sigma_d^2) \nonumber
\end{align}
where $\mathbf{v}_j$, $j=1,\dots,d$, are the column vectors of $V$ collocated at quadrature nodes. Moreover, with a quadrature of $\boldsymbol \alpha_j$ as defined in the proof of Thm.~\ref{theorem:eigfuncs},
$
\boldsymbol{\alpha}_j \coloneqq T_n^{\top}W \mathbf{v}_j,
$
and substituting into~\eqref{equ:eigfunc_eval} we obtain 
\begin{equation}\label{eq:evalvs}
\boldsymbol{v}^{\top}( s) = \boldsymbol{\tau}^{\top}(s) (\boldsymbol \alpha_1/\sigma_1^2,  \boldsymbol \alpha_2/\sigma_2^2, \cdots , \boldsymbol \alpha_d /\sigma_d^2).
\end{equation}
Numerically,~\eqref{eq:evalvs} is \eqref{equ:eigfunc} in the context of the Nystr\"om method and evaluates eigenfunctions given the translated curve, $\boldsymbol{\tau}$, and approximations of $\boldsymbol{\alpha}_j$'s. 

Letting $A \coloneqq (\boldsymbol{\alpha}_1,\boldsymbol{\alpha}_2,\cdots,\boldsymbol{\alpha}_d)\in \mathbb{R}^{d \times d}$ and collocating~\eqref{eq:evalvs} at quadrature nodes results in
$$
V = \left( \begin{matrix}
    \boldsymbol{v}^{\top}(s_1)\\
    \vdots\\
    \boldsymbol{v}^{\top}(s_n)
\end{matrix} \right) = T_n A \Sigma_d^{-2},
$$
again, with $\Sigma_d^{-2} \coloneqq \text{diag}(1/\sigma_1^2, 1/\sigma_2^2 , \dots, 1/\sigma^2_d)$. Recalling $
\boldsymbol{\alpha}_j \coloneqq T_n^{\top}W \mathbf{v}_j
$
implies
$
T_n^{\top} W V = A
$
and substituting our collocation $V = T_n A \Sigma_d^{-2}$, in place of~\eqref{eq:the_Panda}, we have
$
T_n^{\top}W(T_nA \Sigma_d^{-2}) = A.
$
Finally, rearranging reveals the anticipated identification as a weighted eigendecompostion,
\begin{equation}\label{eq:discrete_evalprob}
T^{\top}_nWT_n \,\overset{SVD}{=}\, A\Sigma^2_dA^{\top},
\end{equation}
specifically such that $A \in O(d)$. In short, orthonormal $\boldsymbol \alpha_j$, $1\leq j \leq d$ as the columns of orthogonal $A$, are obtained simultaneously using an appropriate scheme to compute the \emph{weighted} decomposition \eqref{eq:discrete_evalprob}, instead of separately via \eqref{equ:eigproblem}, and \eqref{eq:evalvs} may be subsequently employed to evaluate the eigenfunctions at any point $s$, if needed. 

Moreover, to obtain $\widetilde{V}$ with orthonormal columns for numerical implementations~\cite{grey2023separable}, we can equivalently compute the thin weighted-SVD, or \textit{square-root quadrature decomposition}, (SRQD)
\begin{equation} \label{eq:SRQD}
    T_n^{\top}W^{1/2}\,\,\overset{SVD}{=}\, A\Sigma_d\widetilde{V}^{\top},
\end{equation}
for all curves in the ensemble utilizing rapid rank-$d$ decomposition schemes. 

The resulting right singular vectors $\widetilde{V}$ are the weighted eigenvectors, $\widetilde{V} = W^{1/2}V$, of~\eqref{eq:weighted_evecs} in the Nystr\"om method. Thus, \textit{fast rank-$d$ SRQDs can efficiently map thousands of preshapes to discrete PRRTI-features}. Equivalently, with $A$ orthogonal, this fixed-order numerical integration constitutes $\lbrace T_n \rbrace \mapsto \lbrace (\widetilde{X}, P)\rbrace$ as a \textit{weighted polar decomposition}, 
\begin{align} \label{eq:derived_polar_decomp}
    W^{1/2}T_n &= \widetilde{V} \Sigma_dA^{\top}\\ \nonumber
    & = \widetilde{V}(A^{\top}A)\Sigma_dA^{\top}\\ \nonumber
    &=\widetilde{X}P, \nonumber
\end{align}
where $\widetilde{X} \coloneqq \widetilde{V}A^{\top}$ are rotations and reflections of the eigenfunctions into a \textit{unique view} and $P \coloneqq A \Sigma_dA^{\top}$ are the corresponding decomposed (separated) generalized scale variations. 

Thus, the SRQD acts as a mapping into separated pieces of generalized (anisotropic) scale variations $P \in S^d_{++}$, and representative orthonormal (preshape) undulations $\widetilde{X}$ of the Stiefel manifold, i.e., $\widetilde{X}^{\top}\widetilde{X} = \mathbb{I}_d$. Consequently, Thm.~\ref{theorem:eigfuncs} motivates rapid and accurate weighted decompositions collocated at $n$ quadrature nodes, \textit{which depend exclusively on the chosen integral measure}, instead of increasing sequences $(s_i)_{i=1}^n$ from alternative reparametrizations or optimization schemes to align pairs of curves. We saturate accuracy in the approximations over all curves in the ensemble by by taking $n$ large, e.g., $n = 500$, while only increasing the computational expense linearly over increasing $n$ given fixed $d$.

To validate our interpretation, we offer an example in Figure~\ref{fig:nystrom_eg}. Given $\boldsymbol{\tau}$ as a Cassini oval, we utilize~\eqref{eq:discrete_evalprob} to inform approximations of~\eqref{eq:evalvs} which are reevaluated at $10,000$ consistent points over the curve parameter to study convergence rates. Figure~\eqref{fig:nystrom_eg} emphasizes the spectral nature of convergence. In practice, we utilize interpolations of data in composition with high-order arc-length reparametrization approximations to achieve unit speed and uniformly distributed landmarks---i.e., trivializing $W_\mu \propto \mathbb{I}_n$. 

\begin{figure}
    \centering
    \includegraphics[width=0.85\linewidth]{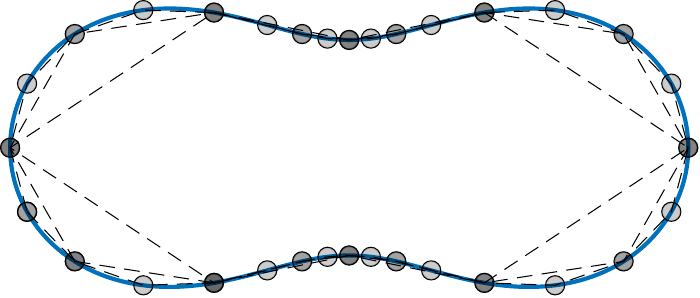}
    \\
    \vspace*{1em}
    \hspace*{-2.5em}
    \includegraphics[width=0.85\linewidth]{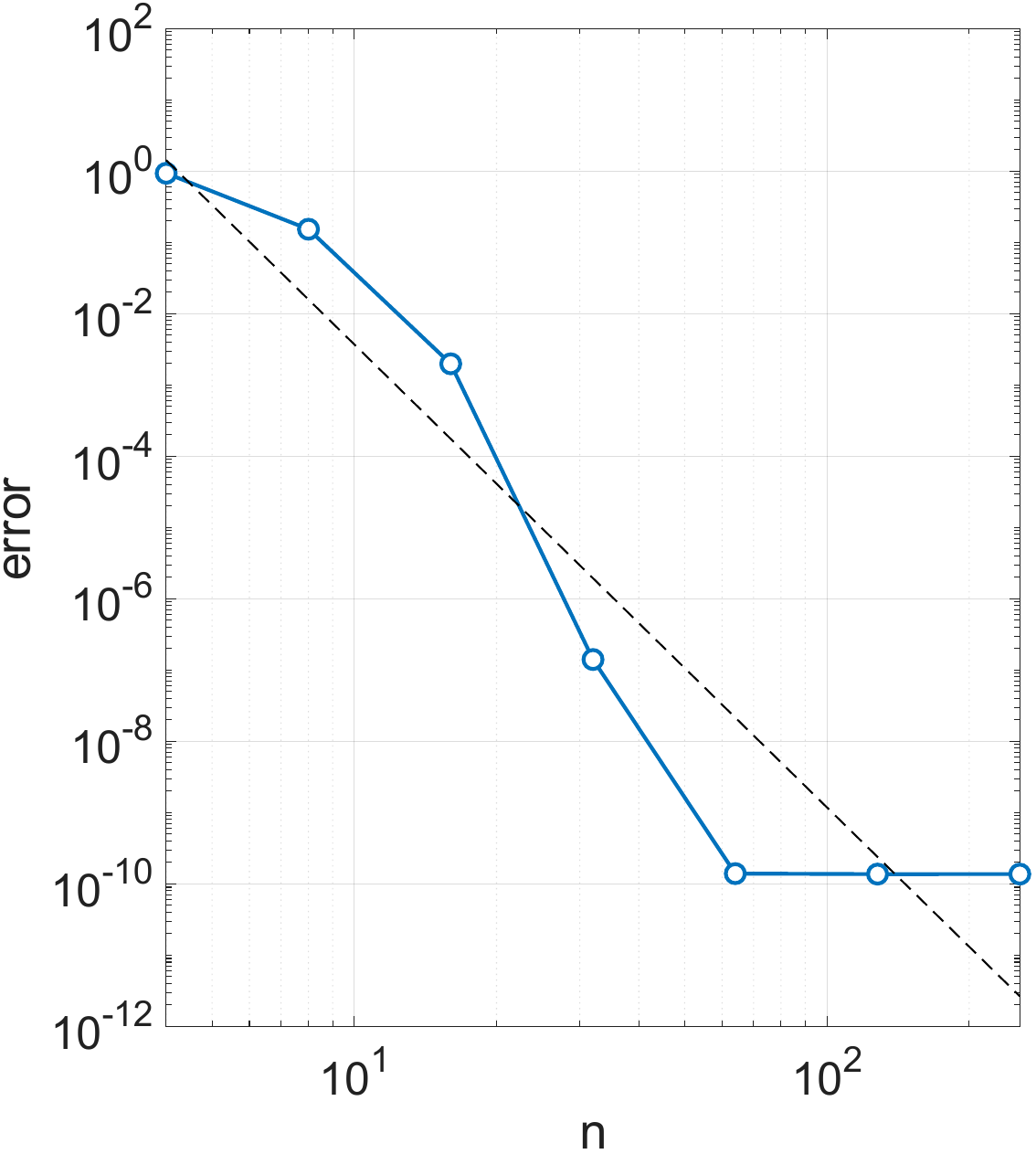}
    \caption{(top) An increasing number of landmarks collocated at quadrature nodes over a Cassini oval. Dashed lines are shown as a visual cue connecting nodes in a particular order corresponding to $8$, $16$, and $32$ landmarks each. Replicated nodes, $\lbrace \boldsymbol{\tau}(s_i)\rbrace_{n=8} \subset \lbrace \boldsymbol{\tau}(s_i)\rbrace_{n=16} \subset \lbrace \boldsymbol{\tau}(s_i)\rbrace_{n=32}$, over the increasing total number of nodes have darker shading. (bottom) Convergence plot over the same Cassini oval with increasing number of quadrature nodes, $n$. Error is the maximum $2$-norm difference in component functions over the curve parameter taken between approximated PRRTI-features and a reference solution with $8192$ nodes. A black dashed curve is shown with corresponding rate of $\sim 6.5$ to emphasize the spectral nature of convergence.}
    \label{fig:nystrom_eg}
\end{figure}

\subsection{Cyclic Procrustes} \label{subsec:cycl_Pro}
\begin{figure*}
    \centering
    \includegraphics[width=0.275\textwidth]{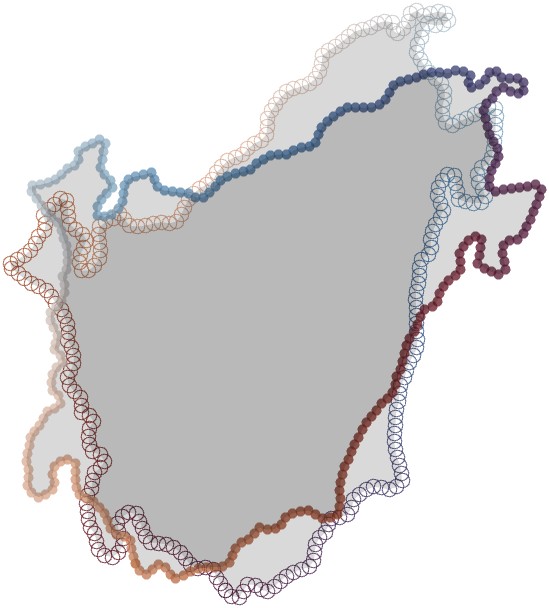}
    \hspace{1cm}
    \includegraphics[width=0.25\textwidth]{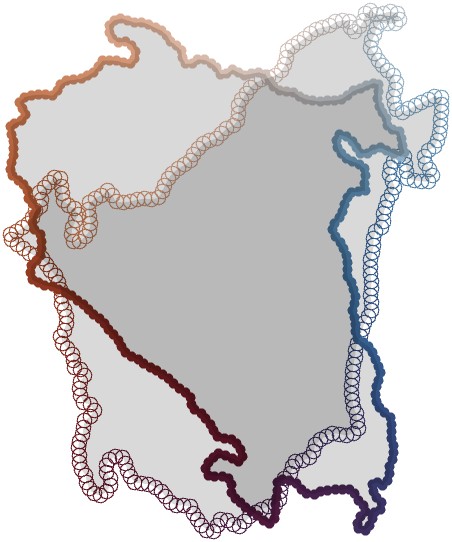}
    \hspace{1cm}
    \includegraphics[width=0.25\textwidth]{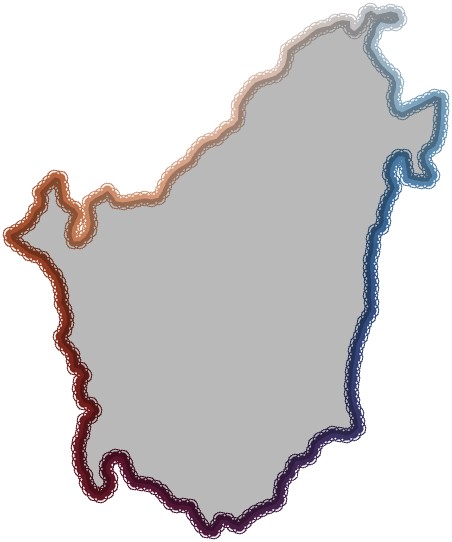}
    \caption{(left) An arbitrary cyclic permutation and rotation of a uniformly discrete grain shape with colors indicating row index, $n=500$. (center) An orthogonal Procrustes match. (right) A brute force cyclic Procrustes match.}
    \label{fig:cyc_Procrustes}
\end{figure*}

\begin{figure}
    \centering
    \includegraphics[width=1\linewidth]{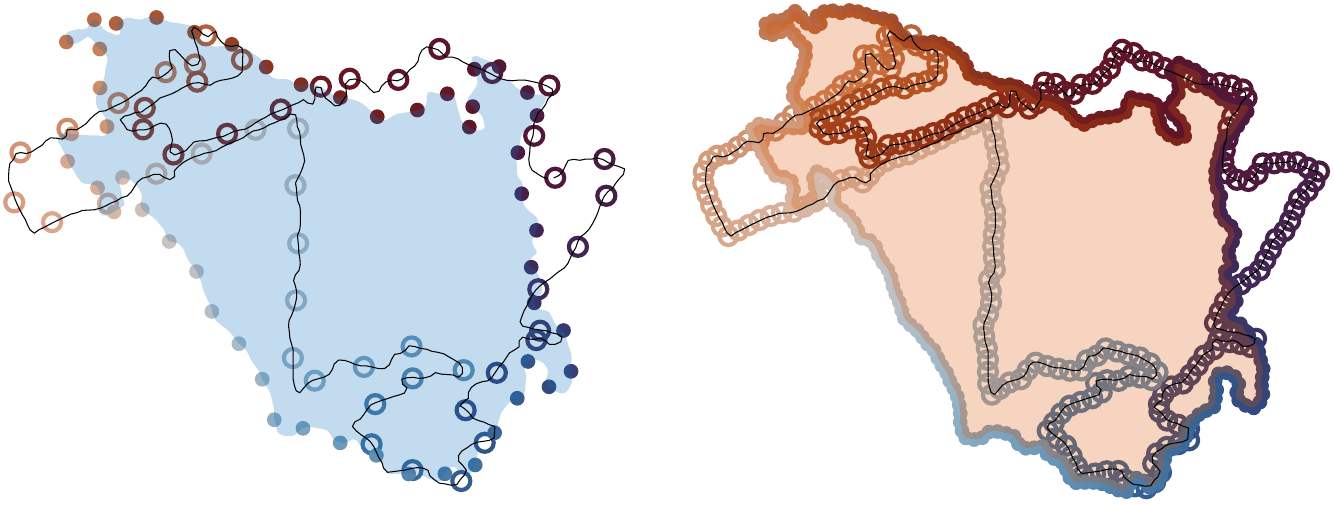}
    \includegraphics[width=1\linewidth]{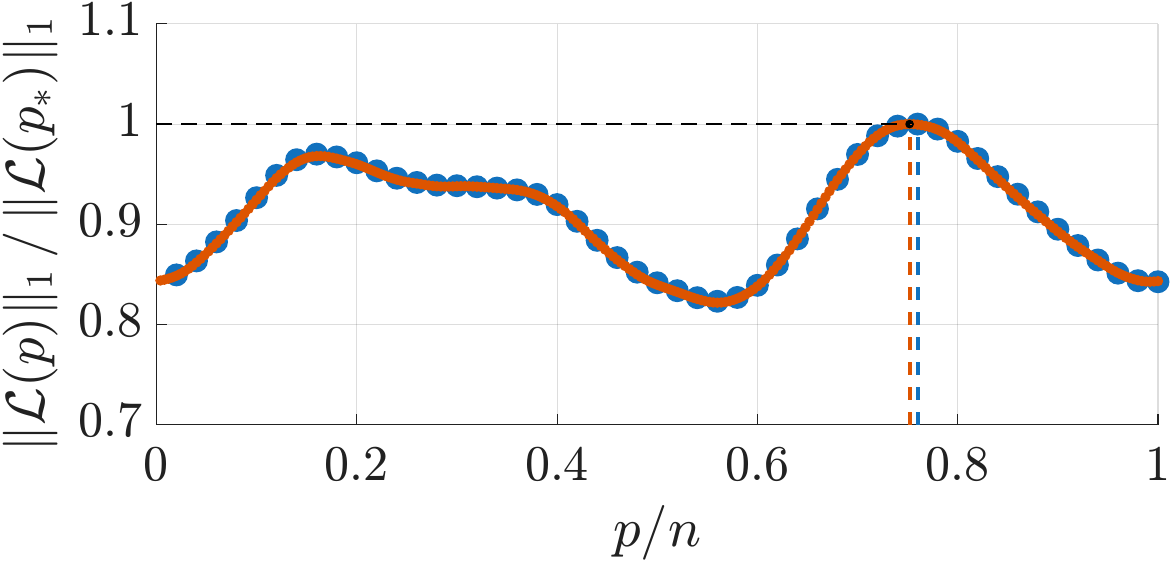}
    \caption{Cyclic Procrustes matching at low ($n=50$) and high ($n=250$) levels of uniform arc-length reparametrization. The fixed archetype is shown with a black curve. The fill color of the matched shape corresponds to the levels of refinement ($n=50$ blue and $n=250$ orange) in the discrete objective function evaluations. The landmark colors correspond to the registered indices of the shape against the archetype.}
    \label{fig:archetype_match}
\end{figure}

In other applications, like aerodynamic design~\cite{grey2023separable}, ensembles may be supplied with some fixed reparametrization establishing a convention for the initial starting landmark and orientation of curves. However, when an ensemble is the result of an image segmentation, the data lack a consistent starting landmark for registration/alignment. In other words, the first row of the discretization, $T_n$ or $X$, is entirely ambiguous as undulation is unique up to phase shift and often undefined for closed segmented curves from an image.

When we lack extrinsic alignment conventions to phase shift collocations, we propose solutions to the following (discrete) alignment problem for registering preshape data apriori:

\begin{theorem}[Separable Cyclic Procrustes]\label{thm:cycl_Procrustes}
    Given translated discrete shapes $X,Y \in \mathbb{R}^{n\times d}_*$ (full rank), $\mathcal{N} = \lbrace 1,\dots,n\rbrace \subset \mathbb{N}$, $p$-cycles as powers of the permutation matrix
    $$
    C(p) = \begin{bmatrix}
      \boldsymbol{0}& \mathbb{I}_{n-1} \\
     1 &\boldsymbol{0}^{\top}
    \end{bmatrix}^p
    $$ for all $p\in\mathcal{N}$, and $\mathcal{L}(p) \coloneqq Y^{\top}C(p)X$ full rank, $(p_*, R_*)$ is an optimal solution of
    $$
    \underset{p \in \mathcal{N},\, R \in O(d)}{\text{minimize}}\,\Vert C(p)X - YR\Vert_{F}
    $$
    if, and only if,
    $$
    p_* = \underset{p\in \mathcal{N}}{\text{argmax}} \,\Vert \mathcal{L}(p) \Vert_1
    $$ 
    where $\mathcal{L}({p_*}) \overset{SVD}{=} U_{*}\Omega_{*}V_{*}^{\top}$ such that $R_* = U_{*}V^{\top}_{*}$.
\end{theorem}
\begin{proof}
    Maximization of our derived parametric problem statement is equivalent to maximization of the form of Umeyama's Lagrangian---defined in the proof of the Lemma in~\cite{umeyama1991leastsquares} and denoted here as $\mathcal{U}$. Rewritten absent the sign convention for general $R\in O(d)$ and composition over $p$, Umeyama's Lagrangian is
    \begin{align}
        \mathcal{U}(p) &= \Vert C(p)X\Vert^2_F + \Vert YR\Vert^2_F - 2\Vert \mathcal{L}(p)\Vert_1 \\
        &= \Vert X\Vert^2_F + \Vert Y\Vert^2_F - 2\Vert \mathcal{L}(p)\Vert_1.\nonumber
    \end{align}
    Next, consider an equivalent problem stated with an extraneous constraint identifying parameter separability similar to~\cite{golub1973differentiation},
    $$
    \begin{matrix}   
    \underset{p \in \mathcal{N}}{\text{maximize}} & \,\Vert \mathcal{L}(p)\Vert_1\\
    \text{such that} & R = \pi_{\mathcal{U}}(\mathcal{L}(p)).
    \end{matrix}
    $$
    However, distinct from~\cite{golub1973differentiation}, the bijection, $$\pi_{\mathcal{U}}(\cdot) \coloneqq (\cdot)(V_p\Omega_p^{-1}V_p^{\top}),$$ is defined by the SVD, $\mathcal{L}(p) \,\overset{SVD}{=} U_p\Omega_pV_p^{\top}$, such that $\Omega_p = \text{diag}(\omega_1(p),\omega_2(p),\dots,\omega_d(p))$ with $\omega_1(p) \geq \omega_2(p)\geq \dots \geq \omega_d(p)>0$.

    Finally, we argue that the extraneous parametric constraint $\pi_{\mathcal{U}}(\mathcal{L}(p))$ for any $p$ is best. By definition, $\pi_{\mathcal{U}}(\mathcal{L}(p))$ is best if and only if $\langle \pi_{\mathcal{U}}(\mathcal{L}(p)),\mathcal{L}(p)\rangle_F > \langle R ,\mathcal{L}(p)\rangle_F$ for all $R \neq \pi_{\mathcal{U}}(\mathcal{L}(p)) \in O(d)$ and $p \in \mathcal{N}$. Equivalently, it is sufficient to show $\langle R - \pi_{\mathcal{U}}(\mathcal{L}(p)), \mathcal{L}(p) \rangle_F<0$ for all $R \neq \pi_{\mathcal{U}}(\mathcal{L}(p)) \in O(d)$. Utilizing the definition of $\pi_{\mathcal{U}}$,
    \begin{align*}
        \langle \pi_{\mathcal{U}}(\mathcal{L}(p)), \mathcal{L}(p)\rangle_F &= \langle \mathcal{L}(p)(V_p\Omega_p^{-1}V_p^{\top}), \mathcal{L}(p) \rangle_F\\
        & = \langle U_p\Omega_pV_p^{\top}(V_p\Omega_p^{-1}V_p^{\top}), \mathcal{L}(p) \rangle_F\\
        & = \langle U_pV_p^{\top}, U_p\Omega_pV_p^{\top} \rangle_F\\
        & = \text{tr}(V_p\Omega_pV_p^{\top})\\
        &= \Vert \mathcal{L}(p) \Vert_1
    \end{align*}
    and, thus,
    \begin{align*}
        \langle R - \pi_{\mathcal{U}}(\mathcal{L}(p)),\mathcal{L}(p)\rangle_F &= \langle R,\mathcal{L}(p)\rangle_F - \langle \pi_{\mathcal{U}}(\mathcal{L}(p)), \mathcal{L}(p)\rangle_F \\
        &= \langle R,\mathcal{L}(p)\rangle_F - \Vert \mathcal{L}(p) \Vert_1.
    \end{align*}
    By the original argument of orthogonal Procrustes solution~\cite{gibson1962least, schonemann1966generalized, lawrence2019purely}, 
    \begin{align*}
        \langle R,\mathcal{L}(p)\rangle_F &= \text{tr}(R^{\top}U_p\Omega_pV_p^{\top}) \\
        &= \text{tr}(V_p^{\top}R^{\top}U_p\Omega_p) \\
        &= \langle Q_p, \Omega_p\rangle_F
    \end{align*}
    for some $Q_p =  U_p^{\top}RV_p \in O(d)$ which satisfies $\langle Q_p, \Omega_p\rangle_F \leq \langle \mathbb{I}_d, \Omega_p\rangle_F = \Vert \mathcal{L}(p) \Vert_1$ with equality when $Q_p = \mathbb{I}_d$. However, $Q_p = \mathbb{I}_d$ if and only if $R = U_pV_p^{\top} = \pi_{\mathcal{U}}(\mathcal{L}(p))$, thus $\langle R,\mathcal{L}(p)\rangle_F - \Vert \mathcal{L}(p) \Vert_1 < 0$ for all $R \neq \pi_{\mathcal{U}}(\mathcal{L}(p)) \in O(d)$.
\end{proof}

With brute force solutions to Theorem~\ref{thm:cycl_Procrustes}, we match segmented curves with a modified type of Procrustean metric~\cite{srivastava2016functional} and, more importantly, align data against an fixed `archetype', $Y$ collocated at consistent quadrature nodes. To promote unique solutions to Theorem~\ref{thm:cycl_Procrustes} at a fixed $n$, we desire asymmetric archetypes or archetypes with known symmetries\footnote{Notice a `featureless' circle results in a constant objective function.} to search a subset of permutations. However, for many material micrographs, it is sufficient to take a random draw from the full ensemble of grains to identify a suitable asymmetric archetype for global registration. Additionally, we normalize both discrete preshapes $X$ and $Y$ to have unit length and dilate the archetype to match the uniform scale of the second input preshape.

These solutions offer a convention for aligning an ensemble of discrete curves and implicitly defining initial landmark(s) for an ordering over the rows of $X$. Note that a lack of symmetries in shapes dictate the uniqueness of these solutions. Related work~\cite{dogan2015fast} elaborate on a continuous treatment of the alignment problem; offering an interpretation with significant speed up utilizing fast Fourier transforms. An example registering a shape to within machine precision subject to arbitrary rotation and cyclic permutation of itself is illustrated in Fig.~\ref{fig:cyc_Procrustes}.


\subsection{Ensemble Manifold Learning} \label{subsec:mfld_learn}
With a set of accurate, discrete, cyclic aligned, preshape PRRTI-features $\lbrace \widetilde{X} \rbrace$ and corresponding scales $\lbrace P\rbrace$ aggregated from a pair of images or database, we briefly review an approach to learning an underlying (latent) space of features from thousands or tens of thousands of curves.

Naturally, to retain the sought properties of Lemma~\ref{lemma:CLO_invariance} (PRRTI-features) and separation of generalized scale $P \in S^d_{++} \cong GL_+(d, \mathbb{R})/SO(d)$ in the representative \textit{preshape} Stiefel discretizations $\widetilde{X}$, we must project approximated eigenfunctions onto the equivalence classes $[\widetilde{X}] \in \mathbb{R}^{n \times d}_*/GL_+(d, \mathbb{R})$ constituting the \textit{shape} of undulations. 

As described in~\cite{absil2008optimization}, this quotient space is identified with the Grassmannian, $Gr(d,n) \cong \mathbb{R}^{n \times d}_*/GL_+(d, \mathbb{R})$. Note, $[\widetilde{V}] = [\widetilde{X}]$ but $\widetilde{X}$ serves as the \textit{representative} preshape paired with scale variations $P$ in the polar decomposition. With $\lbrace (\widetilde{X}, P)\rbrace$ as SRQD transformed data from a the aggregate ensemble informed by both segmented images, we leverage the \textit{tangent PCA}\footnote{Note a correction of the misnomer. This analysis is more precisely described as \textit{tangent PCA} as opposed to previous descriptions as \textit{Principal Geodesic Analysis} (PGA).} submanifold learning procedure~\cite{grey2023separable,fletcher2004principal}, separately, over underlying matrix-manifolds $\lbrace [\widetilde{X}] \rbrace \subset Gr(d,n)$ and $\lbrace P \rbrace \subset S^d_{++}$.  This procedure, with thousands of shapes, executes in seconds on a conventional laptop.

For example, tangent PCA proceeds with an ensemble of discrete matrix-valued PRRTI-features $\lbrace [\widetilde{X}] \rbrace$ first approximating the Fr\'echet mean over $Gr(n,d)$, denoted $[\widetilde{V}_0]$. With the Fr\'echet mean establishing a local origin for a coordinate frame, we apply PCA to the image of the inverse exponential at $[\widetilde{V}_0]$ to determine normal coordinates, $\boldsymbol{t}\in \mathbb{R}^r$, over an $r$-dimensional subspace, $1\leq r\leq d(n-d)$. The span of this subspace parametrizes undulations over the matrix-manifold as a local section $\mathcal{G}_r \subseteq Gr(d,n)$, i.e.,
\begin{equation}
    \mathcal{G}_r \coloneqq \lbrace [X] \in Gr(d,n) \,:\, [X] = \text{Exp}_{[\widetilde{V}_0]}(E_r\boldsymbol{t}) \rbrace,
\end{equation}
where $E_r$ is the `learned' matrix\footnote{Note $\text{Exp}_{[\widetilde{V}_0]}(\cdot)$ is composed with an appropriate reshaping of the matrix-vector multiply~\cite{grey2023separable}.} with columns constituting an ordered orthonormal basis of $\mathbb{R}^r \cong T_{[\widetilde{V}_0]}\mathcal{G}_r$. When selecting representative preshapes, \textit{classic orthogonal Procrustes} is very useful for searching equivalence classes $[\widetilde{X}]$ to align a preshape $\widetilde{X}$ as desired~\cite{grey2023separable}.

This same learning procedure is repeated, separately, with $\lbrace P \rbrace$ to determine the intrinsic mean $P_0 \in \mathcal{P} \subseteq S^d_{++}$ and tangent basis spanning $T_{P_0}\mathcal{P} \subseteq T_{P_0}S^d_{++}$ facilitated by the algorithms in~\cite{fletcher2003statistics}. In our materials application, $\mathcal{P}$ is taken to be all of $S^d_{++}$ but other applications may benefit from coordiantes along subspaces of $T_{P_0}S^d_{++}$. Finally, consistent with~\cite{grey2023separable}, we utilize $\mathcal{G}_r \times \mathcal{P}$ as a \emph{product submanifold} of separable (pre-)shape tensors parametrized with smooth right inverse $T_n(\boldsymbol{t},\boldsymbol{\ell}) = \widetilde{X}(\boldsymbol{t})P(\boldsymbol{\ell})$ over normal coordinates $(\boldsymbol{t},\boldsymbol{\ell}) \in T_{[\widetilde{V}_0]}\mathcal{G}_r \times T_{P_0}\mathcal{P}$.

The result of this learning is a mapping into local normal coordinates,
$
\lbrace ([\widetilde{X}_q], P_q) \rbrace \mapsto\lbrace (\boldsymbol{t}_q, \boldsymbol{\ell}_q) \rbrace \subset \mathbb{R}^{r + 3},
$
over an $r$-submanifold, $\mathcal{G}_r \subset Gr(2,n)$. In composition, given an ensemble of curves $\lbrace \boldsymbol{c}_q\rbrace$ for $q=1,2,\dots,N$ and fixed quadrature nodes $\lbrace s_i \rbrace $ for $i = 1,2,\dots,n$, the developed interpretation informs a procedure,
\begin{align} \label{eq:mfld_projection}
    \lbrace \mathcal{T}[\boldsymbol{c}_q](\lbrace s_i \rbrace)\rbrace &\overset{SRQD\,\,\,\,}{\mapsto} \lbrace \widetilde{X}_qP_q \rbrace\\ 
    &\quad \mapsto \lbrace ([\widetilde{X}_q], P_q) \rbrace \\
    &\quad \mapsto \lbrace (\boldsymbol{t}_q, \boldsymbol{\ell}_q) \rbrace \subset \mathbb{R}^{r + 3}. \label{eq:nml_coords}
\end{align}
All algorithmic details mapping $N$ SRQD's into normal coordinates, progressing from~\eqref{eq:mfld_projection} to~\eqref{eq:nml_coords}, are described in~\cite{grey2023separable}.

Changing dimensionality $r$ by accumulating terms in the ordered basis expansion over normal coordinates, $\boldsymbol{t}\in \mathbb{R}^r \cong T_{[\widetilde{V}_0]}\mathcal{G}_r$, can empirically achieve the sought regularization of~\eqref{eq:reg_Hilbert_space} in the representation of the shapes. This observed mechanism can promote a biasing of undulating shape features away from potentially noisy variations in the segmented boundaries. An example of the empirical effects of this regularization are shown in Figure~\ref{fig:low_dim_grain}. We anticipate that this choice of undulation dimensionality can be used to make subsequent inferences more robust against noisy measurements and image processing.

\begin{figure}
    \centering
    \includegraphics[width=0.5\textwidth]{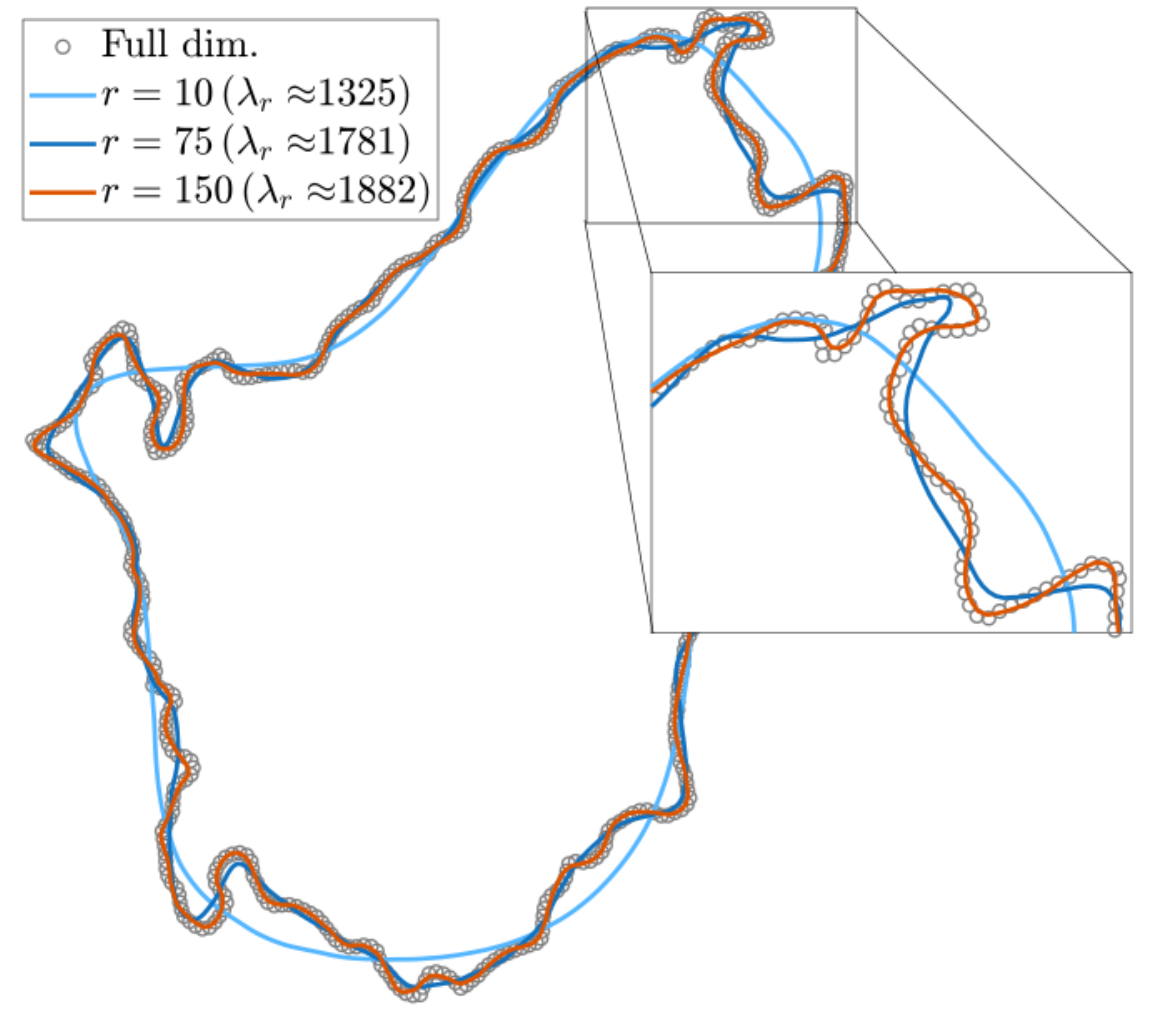}
    \caption{The empirical effect of the sought shape regularization~\eqref{eq:reg_Hilbert_space} over changing submanifold dimensionality, $r$, with $n=500$ landmarks from a reparametrized spline. Lipschitz constants of the curves are approximated using \emph{chebfun}~\cite{Driscoll2014}, an open source toolbox being utilized to compute quasi-matrix interpolation~\eqref{eq:cont_SST} of the reduced dimension analogs.}
    \label{fig:low_dim_grain}
\end{figure}

\subsection{Maximum Mean Discrepancy} \label{subsec:MMD}
Having formalized the approximation of curve dual evaluation functionals with (shape) discretizations parametrized over a learned product submanifold, we explore an explainable binary classification of the learned undulation and scale coordinates $(\boldsymbol{t},\boldsymbol{\ell}) \in T_{[\widetilde{V}_0]}\mathcal{G}_r \times T_{P_0}\mathcal{P}$ defined at respective intrinsic means $[\widetilde{V}_0] \in \mathcal{G}_r\subseteq Gr(n,d)$ and $P_0 \in \mathcal{P}\subseteq S^d_{++}$. Related work~\cite{zhang2022nonparametric} explore similar concepts, albeit, modulo curve dilation over elastic metrics and Kendall shape spaces while utilizing a more general measure theory coined DISCO analysis~\cite{RizzoDISCO2010}. 

We take the Borel $\sigma$-algebra over $T_{[\widetilde{V}_0]}\mathcal{G}_r \times T_{P_0}\mathcal{P}$ in a normal coordinate neighborhood---i.e., Hopf-Rinow theorem implies geodesic completeness\footnote{We utilize $\mathcal{G}_r$ with the usual tangential metric~\cite{edelman1998geometry,bendokat2020grassmann} and $S^d_{++}$ with the affine-invariant metric~\cite{fletcher2003statistics,pennec2020manifold} for geodesic completeness.} is equivalent to a metric space which is sufficient to generate the $\sigma$-algebra~\cite{pennec1999probabilities}. Thus, consider $(\boldsymbol{t}, \boldsymbol{\ell}) \sim \boldsymbol{\rho}$ as some joint distribution given by the smallest $\sigma$-finite (product) measure of random shape (normal) coordinates generated by the metric (feature) space.

\begin{table}
    \centering
    \begin{tabular}{c|c|c||l}
        {Case} &
        $(\rho_{\boldsymbol{t}}=\widehat{\rho}_{\boldsymbol{t}})$ & 
        $(\rho_{\boldsymbol{\ell}}=\widehat{\rho}_{\boldsymbol{\ell}})$ & $\text{pMMD}[\mathcal{H} \oplus \mathcal{Q},\cdot,\cdot]$ \\
        \hline
        $(1 \wedge 1)$ &
        Accept & Accept & $\implies$ Accept\\
        $(1 \wedge 0)$ &
        Accept & Reject & $\implies$ Reject \\
        $(0 \wedge 1)$ &
        Reject & Accept & $\implies$ Reject \\
        $(0 \wedge 0)$ &
        Reject & Reject & $\implies$ Reject \\
    \end{tabular}
    \caption{A logical conjunction (AND gate) for the two separable hypothesis tests (left of the double vertical lines) using pMMD. In keeping with the definition of `truth' in a hypothesis test, `failure to reject' the null hypothesis is considered a logical `true' condition and is arbitrarily named `Accept.'}
    \label{tab:logical_conjunction}
\end{table}

The goal of maximum mean discrepancy (MMD)~\cite{fortet1953convergence,gretton2012kernel} is to determine if finite observations of independent and identically distributed (i.i.d.) random variables defined on a topological space, with respective probability measures, coincide or not. That is, given observations $\lbrace (\boldsymbol{t}_1, \boldsymbol{\ell}_1),\dots,(\boldsymbol{t}_N, \boldsymbol{\ell}_N)\rbrace$ and $\lbrace (\widehat{\boldsymbol{t}}_1, \widehat{\boldsymbol{\ell}}_1),\dots,(\widehat{\boldsymbol{t}}_{\widehat{N}}, \widehat{\boldsymbol{\ell}}_{\widehat{N}}) \rbrace$ assumed i.i.d. from $\boldsymbol{\rho}$ and $\widehat{\boldsymbol{\rho}}$ respectively, can we test whether $\boldsymbol{\rho} \neq \widehat{\boldsymbol{\rho}}$? Moreover, with separable parameters and in a complementary fashion, can we test whether $(\rho_{\boldsymbol{t}} = \widehat{\rho}_{\boldsymbol{t}})$ \emph{and} $(\rho_{\boldsymbol{\ell}} = \widehat{\rho}_{\boldsymbol{\ell}})$ given shape features informed by an ensemble extracted from a pair of images? 

This formal problem statement is the quantitative analog of the proposed explainable binary classification: \textit{test if the ensemble of random shape coordinates partitioned between two images are discrepant, ignoring rigid motions, and whether those discrepancies result from distinct distributions over undulation, $\lbrace \boldsymbol{t} \rbrace$ from the first image versus $\lbrace \widehat{\boldsymbol{t}} \rbrace$ from the second, \emph{or} generalized scale, $\lbrace \boldsymbol{\ell} \rbrace$ from the first image versus $\lbrace \widehat{\boldsymbol{\ell}} \rbrace$} from the second.

This complementary separable formalism is simply De Morgan's theorem, $(\rho_{\boldsymbol{t}} = \widehat{\rho}_{\boldsymbol{t}})\wedge (\rho_{\boldsymbol{\ell}} = \widehat{\rho}_{\boldsymbol{\ell}}) \leftrightarrow \neg ((\rho_{\boldsymbol{t}} \neq \widehat{\rho}_{\boldsymbol{t}}) \vee (\rho_{\boldsymbol{\ell}} \neq \widehat{\rho}_{\boldsymbol{\ell}}))$. Truth Table~\ref{tab:logical_conjunction} details this formalism where \textit{logical truth} corresponds to \textit{failing to reject the null hypothesis}, named `Accept' as $(\rho_{(\cdot)} = \widehat{\rho}_{(\cdot)})$, when comparing against derived thresholds as discussed in~\cite{gretton2012kernel}.

Here, as in~\cite{gretton2012kernel}, we utilize MMD defined over a unit ball in an appropriate RKHS, $\mathcal{F}$, 
\begin{equation} \label{eq:MMD}
    \text{MMD}[\mathcal{F},\boldsymbol{\rho},\widehat{\boldsymbol{\rho}}\,] \coloneqq \sup_{\psi \in \mathcal{F}} \vert \mathbb{E}_{\boldsymbol{\rho}}[\psi] - \mathbb{E}_{ \widehat{\boldsymbol{\rho}}}[\psi] \vert.
\end{equation}
We note that $\mathcal{F}$ is, presently, distinct from the $\mathbb{R}^d$-valued RKHSs motivated by Thm.~\ref{theorem:eigfuncs} but the measure theory is extensible to spaces of curves and alternative metric spaces~\cite{zhang2022nonparametric, RizzoDISCO2010}. For simplicity and enabling the empirical regularization depicted in Figure~\ref{fig:low_dim_grain}, we take $\mathcal{F}$ as an RKHS representing features of `learned' parameter distributions over reduced dimensional normal coordinates informed by the tangent PCA of undulations and generalized scales. Future work is aimed at reconciling the two RKHS's---one as a functional space of curve equivalences and the other representing a distribution of learned features.

In~\eqref{eq:MMD}, $\mathbb{E}_{(\cdot)}$ represents expectation (integration) with respect to the identified probability measures, $\boldsymbol{\rho}$ versus $\widehat{\boldsymbol{\rho}}$, which are informed by samples from distinct measurements or observations---i.e., distributions of shape features from distinct images in our case. Solutions to~\eqref{eq:MMD} are motivated empirically utilizing a choice of symmetric positive definite kernel, $f \in \mathcal{F}$, and Lemma 6 in~\cite{gretton2012kernel} such that we can express the squared population MMD over $\mathcal{F}$ as
\begin{equation}
    \text{MMD}^2[\mathcal{F},\boldsymbol{\rho},\widehat{\boldsymbol{\rho}}\,] = \boldsymbol{E}_{\boldsymbol{\rho}}[f] + \boldsymbol{E}_{\widehat{\boldsymbol{\rho}}}[f] - 2\boldsymbol{E}_{\boldsymbol{\rho}\widehat{\boldsymbol{\rho}}}[f].
\end{equation}
This rewrite utilizes a simplification via the `tower property' such that $\boldsymbol{E}_{\boldsymbol{\rho}}[f] \coloneqq \mathbb{E}_{\boldsymbol{\rho}}[\,\mathbb{E}_{\boldsymbol{\rho}}[f\vert \boldsymbol{\theta}]\,]$ is a double expectation over $\boldsymbol{\rho}$, similarly for $\boldsymbol{E}_{\widehat{\boldsymbol{\rho}}}$, and $\boldsymbol{E}_{\boldsymbol{\rho}\widehat{\boldsymbol{\rho}}} \coloneqq \mathbb{E}_{\boldsymbol{\rho}}[\,\mathbb{E}_{\widehat{\boldsymbol{\rho}}}[f\vert \boldsymbol{\theta}]\,]$ is a double expectation over both.

Out of convenience, we simply extend this framework with the \textit{product metric} for any $p$-norm of pairs of separate MMD metrics,
\begin{multline} \label{eq:pMMD}
    \text{pMMD}[\mathcal{H} \oplus \mathcal{Q},(\rho_{\boldsymbol{t}},\rho_{\boldsymbol{\ell}}),(\widehat{\rho}_{\boldsymbol{t}},\widehat{\rho}_{\boldsymbol{\ell}})] \coloneqq \\
    \biggl\Vert \left(\begin{matrix}
    \text{MMD}[\mathcal{H},\rho_{\boldsymbol{t}},\widehat{\rho}_{\boldsymbol{t}}]\\
    \text{MMD}[\mathcal{Q},\rho_{\boldsymbol{\ell}},\widehat{\rho}_{\boldsymbol{\ell}}]
    \end{matrix}\right)\biggl\Vert_p.
\end{multline}
This extension trivially satisfies the logical implications depicted in the Truth Table~\ref{tab:logical_conjunction} by composition with the corresponding norm of thresholds. Moreover, the use of the product metric is a somewhat natural choice by virtue of the product submanifold construction for the space of separable shape tensors. However, definition of $\text{pMMD}$ is merely a formalism to demonstrate the existence of a logical conjunction over the separate feature tests comparing the images. In practice, we simply compute the hypothesis tests separately and draw the aggregated conclusion as a consequence of this formal construction.

In summary, given ensembles of segmented curve features as data from a pair of images, separate hypothesis tests over one image $\lbrace \boldsymbol{t} \rbrace \sim \rho_{\boldsymbol{t}}$ versus the other $\lbrace \widehat{\boldsymbol{t}}\rbrace \sim \widehat{\rho}_{\boldsymbol{t}}$, and likewise $\lbrace \boldsymbol{\ell} \rbrace \sim \rho_{\boldsymbol{\ell}}$ versus $\lbrace \widehat{\boldsymbol{\ell}}\rbrace \sim \widehat{\rho}_{\boldsymbol{\ell}}$, offer logical tests to \emph{explain} differences aggregated with the product maximum mean discrepancy (pMMD). Numerical examples are depicted and described in Figures~\ref{fig:compare_grains_111}-\ref{fig:compare_grains_000} emphasizing each row of the logical conjunction shown in Truth Table~\ref{tab:logical_conjunction}.

\begin{figure}
    \centering
    \includegraphics[width=1\linewidth]{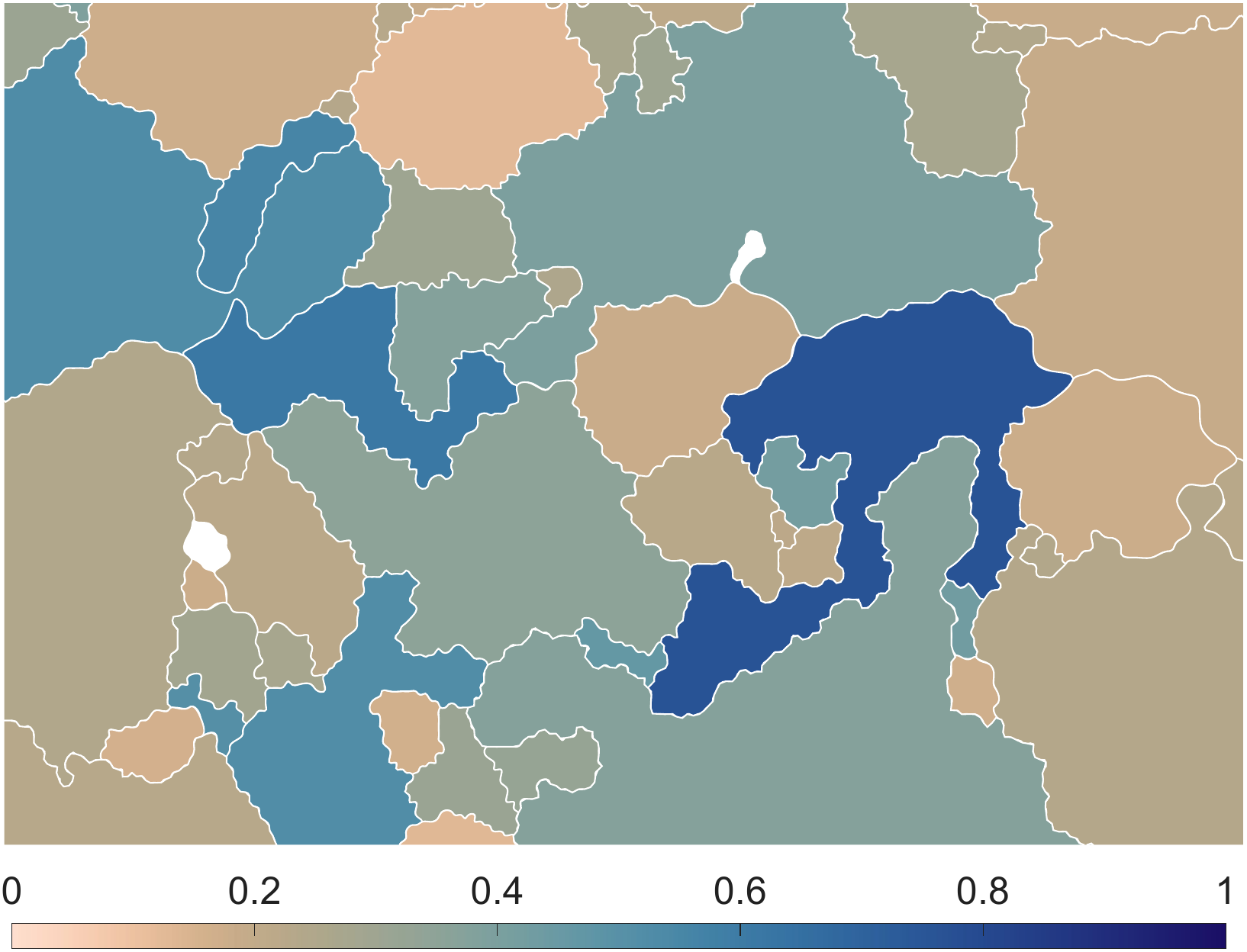}
    \\
    \vspace*{0.75em}
    \includegraphics[width=1\linewidth]{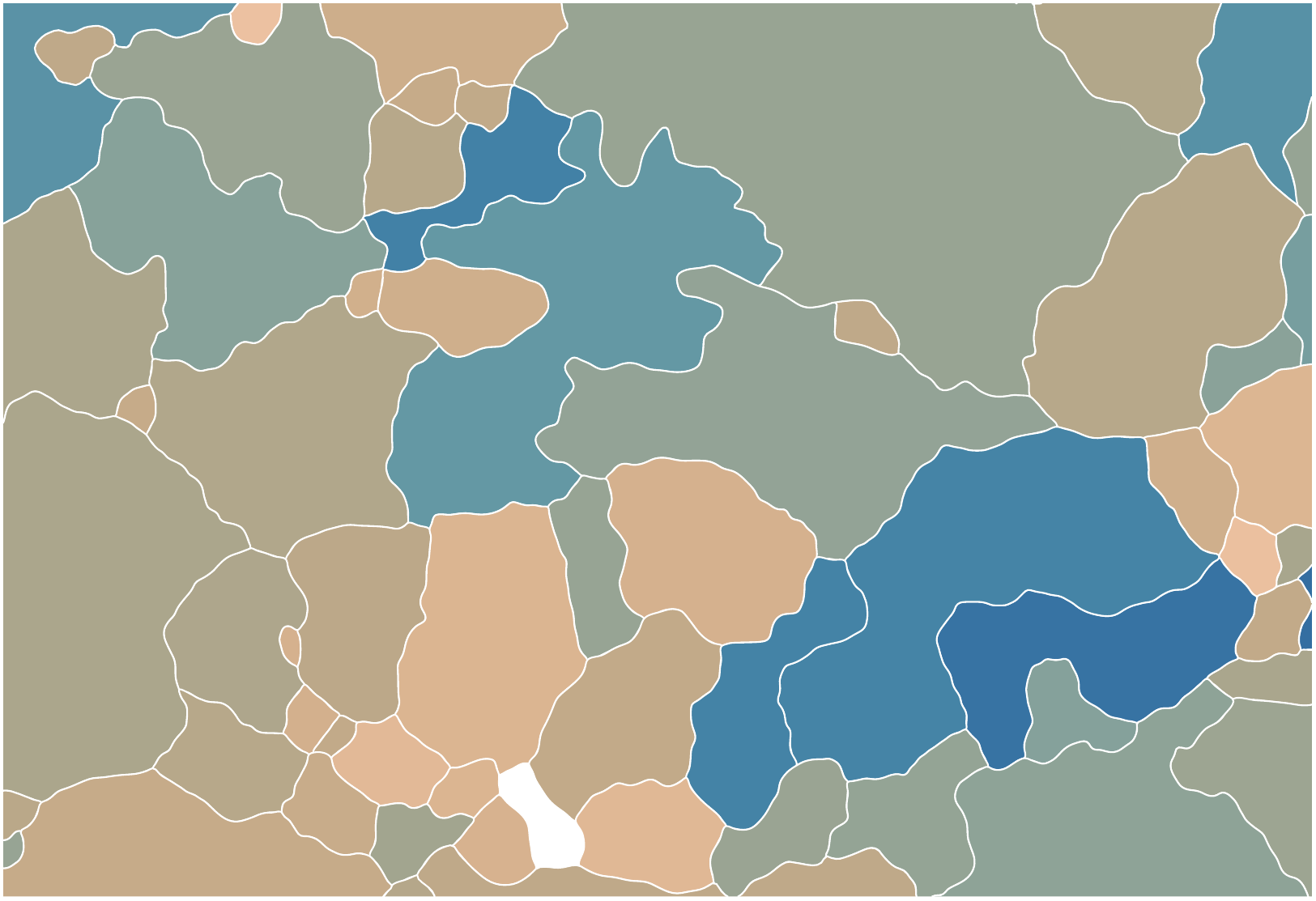}
    \caption{(Example $0 \wedge 1$) Magnified inspection of small scale grain structures subjected to different image preprocessing. Colors of the grains correspond to the normalized distances of scale invariant PRRTI features from the Fr\'echet mean over the learned Grassmannian submanifold. The top image has no smoothing applied to the input orientation map before segmentation. The bottom image utilizes increased smoothing of the boundaries according to MTEX algorithms (parameter equal to five).
    }
    \label{fig:101_zoomed_compare}
\end{figure}

\section{Numerical Experiments}\label{sec:experiments}

We begin with a summary investigating material micrographs with EBSD followed by a study involving a more general gray-scale imaging modality.

Descriptions and results of EBSD numerical experiments are illustrated and explained in Figures~\ref{fig:compare_grains_111}-\ref{fig:compare_grains_000}. The combined numerical experiments enable explainable binary classifications of ice data~\cite{Fan2020, FAN2021116810} and TRIP780 steel data measured at the National Institute of Standards and Technology (NIST) Material Measurement Laboratory (MML). In aggregate, all examples span Truth Table~\ref{tab:logical_conjunction}.

The preprocessing of EBSD data to extract sorted vertices of segmented boundaries is accomplished with the open-source toolbox MTEX~\cite{Hielscher:cg5083}, version 5.10.2. In conjunction with~\cite{gretton2012kernel}, for all examples, a Gaussian kernel is utilized to approximate MMD and the corresponding scale parameter is inferred using the median Euclidean distance heuristic over a subset of normal coordinates. Coordinates are not normalized---i.e., scaled by approximate components of variation inferred from a subset of coordinates---for the results in Figures~\ref{fig:compare_grains_111}-\ref{fig:compare_grains_000} but the effect of normalization is studied in Figure~\ref{fig:010_decision-normalized}. All decisions in the following experiments are based on a significance level of 1\%.

All of the depicted numerical experiments utilize an undulation dimensionality of $r=150$, $n=500$ quadrature nodes---see Figure~\ref{fig:low_dim_grain} for a representative approximation with this dimensionality---and three (full) dimensions of scale. The filled colors of the shapes in Figures~\ref{fig:compare_grains_111}-\ref{fig:compare_grains_000} represent a normalized shape distance over the product submanifold measured from the approximate mean shape. Note all experiments and subsequent conclusions are predicated on the version of the MTEX software (5.10.2) used to segment and smooth grain boundaries.

These numerical experiment emphasizes four scenarios where our methods deliver benefits to materials scientists and practitioners investigating micrographs with EBSD: 
\begin{enumerate}
    \item Figure~\ref{fig:compare_grains_111}, \textit{the method is not suspected to be overly sensitive to statistical rejection}: we identified data which concluded, empirically, that there were no useful statistical discrepancies in the compared images.
    \item Figure~\ref{fig:compare_grains_100}, \textit{the method emulates conclusions consistent with human observations}: when provided with data which has very clear human recognizable distinctions described as `scale variations,' the method detects statistical differences in the measured shapes and results in a theoretical interpretation that scale is the significant factor. 
    \item Figure~\ref{fig:compare_grains_010}, \textit{the method detects differences below human-scale visual observations}: given an image processing routine that smooths an EBSD image differently, we demonstrate how the approach detects nonlinear variations in shapes which would otherwise go overlooked by cursory human inspection.
    \item Figure~\ref{fig:compare_grains_000}, \textit{the method detects a presumed faulty measurement}: when provided with data which was suspected to be `out-of-focus' utilizing the state-of-the-art measurement instrumentation, we were able to detect significant differences to indicate the presence of a problem. 
\end{enumerate}

Notably, in the third case $(0 \wedge 1)$ depicted and described in Figure~\ref{fig:compare_grains_010}, we suspect a cursory human inspection of these small scale differences in shape would likely go overlooked. Thus, the method is capable of inferring differences below the resolution of manual inspections which makes the classification superior to an alternative approach requiring human intervention. Moreover, any method relying on initial inspections to determine hand-picked characteristics of interest may overlook these small-scale differences. In application, this is helpful towards \textit{detecting the presence of any preprocessing or synthetic generation of data}. Specifically, this example was designed by taking the first case, ($1 \wedge 1$), then processing the full set of boundaries differently---i.e., by smoothing or not. Figure~\ref{fig:101_zoomed_compare} offers a closer inspection of differences in preprocessing. 

Despite these nearly visually indistinguishable discrepancies in undulation, we reject the null hypothesis with a significance level of 1\% and conclude that smoothing changes the distribution of shapes. This is a powerful example emphasizing the granularity of decision making in our proposed framework which can detect changes in the preprocessing of the image/data over remarkably small scales of undulation.

\subsection{Decision Landscapes}

The explainable binary classification in all cases is predicated on the selected shape dimensionality, $r$, number of quadrature nodes, $n$, and kernel heuristics like scale parameter selection and coordinate normalization. However, given the computational efficiency of the method, we can easily motivate parameter studies to understand the \textit{decision landscape} over either set of separate coordinates. In the case $(0 \wedge 1)$, we study decisions over Grassmannian undulations, $\text{MMD}(\mathcal{H},\rho_{\boldsymbol{t}},\widehat{\rho}_{\boldsymbol{t}}) $.

As an example decision landscape, we reuse PIL184 data from the designed case $(0 \wedge 1)$ of Figure~\ref{fig:compare_grains_010} with different image processing but now include duplicate grain shapes in the comparison. In other words, the first set of undulation coordinates $\lbrace \boldsymbol{t} \rbrace$ are computed without smoothing while the second set of undulation coordinates $\lbrace \widehat{\boldsymbol{t}} \rbrace$ are computed using a nominal level of smoothing---MTEX smoothing parameter equal to five. Duplicate grain shapes are included---i.e., comparing two identical sample sets of PIL184 boundaries with $N= 933$---up to smoothing thus emphasizing discrepancies attributed entirely to boundary smoothness while eliminating variability due to partitioning the data as in Figure~\ref{fig:compare_grains_010}. 

Figures~\ref{fig:010_decision} and~\ref{fig:010_decision-normalized} depict \textit{decision landscapes} over changing shape dimensionality and quadrature nodes. Figure~\ref{fig:010_decision} utilizes cyclic Procrustes registrations against a random archetype from the full ensemble but does not include (normal) coordinate normalization. In contrast, Figure~\ref{fig:010_decision-normalized} depicts results obtained by scaling normal coordinates but randomly permuting rows of the discrete grain shapes. Coordinate normalization is achieved by scaling all coordinates with the approximated component-wise variation of the noisy samples, $\lbrace \boldsymbol{t}\rbrace$. 

\begin{figure}
    \centering
    \includegraphics[width=0.515\textwidth]{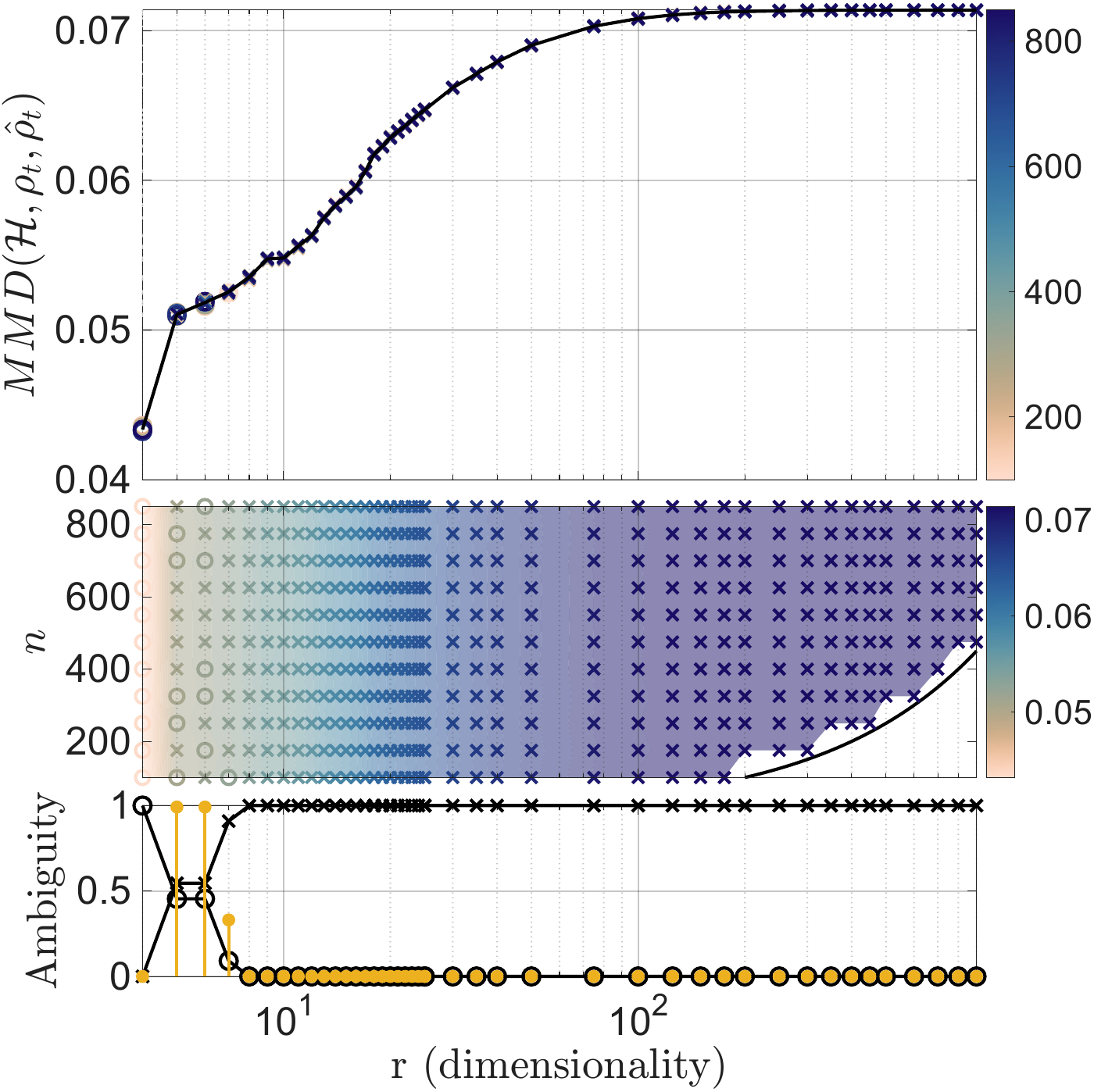}
    \caption{(Example $0 \wedge 1$) Changes in the undulation MMD value over shape dimensionality, $r$, and number of quadrature nodes, $n$, without coordinate normalization but including cyclic Procrustes registrations. (top) Colors indicate the number of quadrature nodes while crosses correspond to rejecting the null hypothesis and circles correspond to failure to reject based on a significance level of 1\%. The solid black line is the conditional average of MMD over quadrature nodes. (middle) Colors indicate MMD value over both $r$ (horizontal axis) and $n$ (vertical axis). The black line in the bottom right corner of the contour plot represents the upper bound of the Grassmannian intrinsic dimensionality. (bottom) Ambiguity in decision making emphasizing the proportion of failure to reject (circles) and reject (crosses) over collocation levels. The yellow stem plot represents the \textit{ambiguity} as the scaled product of these two proportions.
    }
    \label{fig:010_decision}
\end{figure}
\begin{figure}
    \centering
    \includegraphics[width=0.5\textwidth]{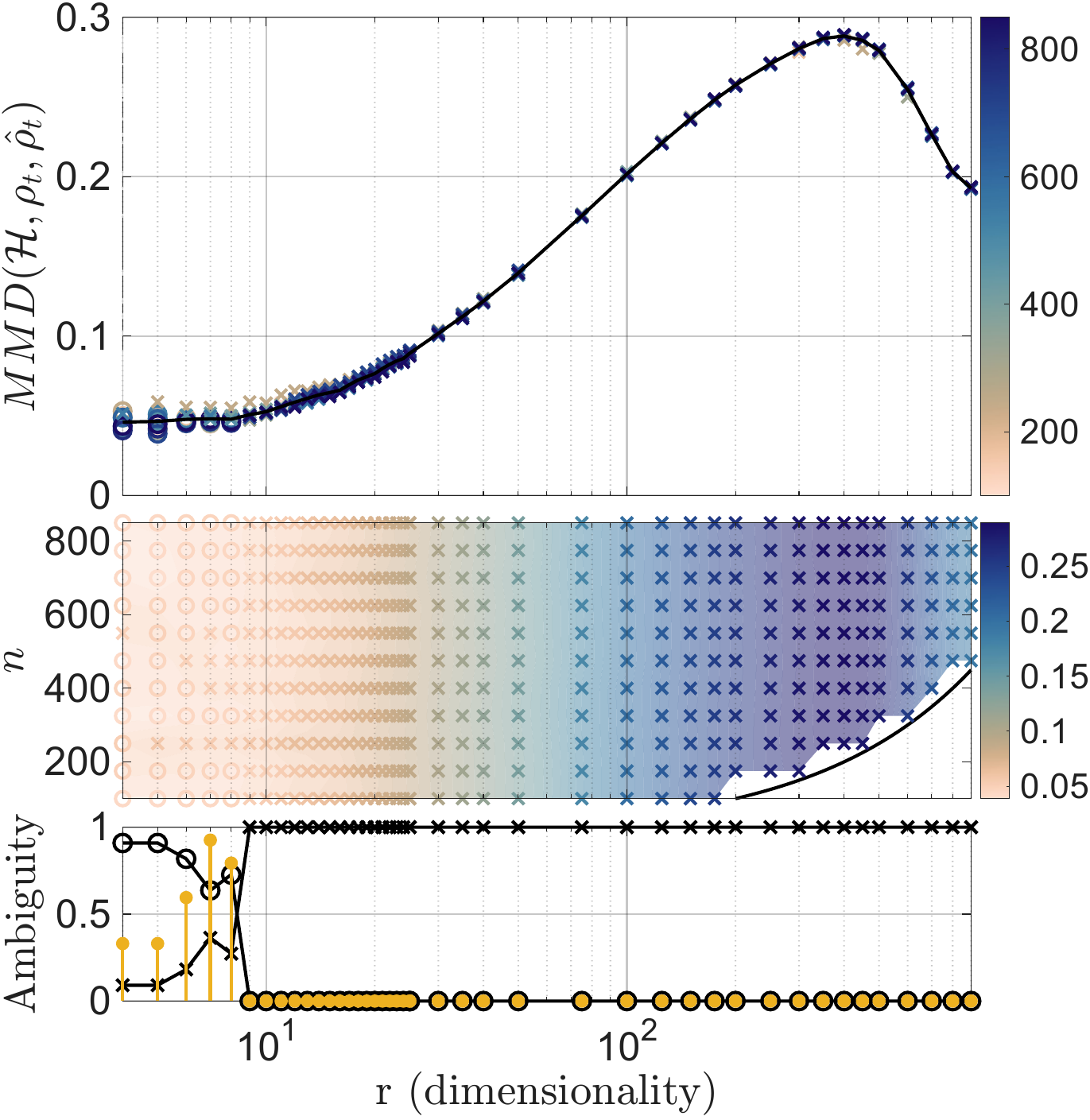}    
    \caption{(Example $0 \wedge 1$) Changes in the undulation MMD value over shape dimensionality, $r$, and number of quadrature nodes, $n$, with coordinate normalization. Shape data are not registered with cyclic Procrustes---i.e., data represent random row-wise permutations. Description is consistent with Figure~\ref{fig:010_decision}. Note the distinct MMD scales compared to Figure~\ref{fig:010_decision}.
    }
    \label{fig:010_decision-normalized}
\end{figure}

Given the changing decision landscape, it is helpful to introduce \textit{ambiguity},
$
4 \mathbb{P}_n( H = 1 \vert r)(1 - \mathbb{P}_n( H = 1 \vert r)),
$
where $H \in \lbrace 0, 1\rbrace$ is a Bernoulli trial representing the binary decision contrasting distributions of undulation coordinates, $\boldsymbol{t},\widehat{\boldsymbol{t}} \in T_{[\widetilde{V}_0]}\mathcal{G}_r$, and $\mathbb{P}_n(\cdot \vert r)$ is the conditional probability of failing to reject the null hypothesis at a particular shape dimensionality. The probability is estimated as the sum of binary `accept' conditions over the total number of quadrature discretizations at the corresponding level of $r$ (dimensionality). By definition, an ambiguity of one corresponds to an entirely random guess which should not be trusted in practice.

Examining Figure~\ref{fig:010_decision}, empirical regularization over reduced shape dimensionality---as described in Figure~\ref{fig:low_dim_grain}---results in no statistically significant differences between the distributions of undulation when $r \leq 7$. In other words, without sufficient dimensionality representing undulations, empirical regularization results in no significant discrepancy. Moreover, there is very limited variability in MMD values over different quadrature collocations. This is a testament to the stability of the cyclic Procrustes random archetype registrations over changing $n$. At $r=8$ and beyond, the binary decision consistently rejects the null hypothesis indicating statistically significant differences in undulation. Ambiguity is nonzero over $5\leq r\leq 7$ and peaks at $r=5,6$ with maximum ambiguity of approximately one.

Examining Figure~\ref{fig:010_decision-normalized}, coordinate normalization also appears to modulate variability in approximated MMD values despite randomly permuted quadrature nodes. The only notable (visible) conditional variation in MMD value occurs over small shape dimensionality, $r \leq 20$. We reject the null hypothesis consistently beyond $r = 8$ while $r=7$ corresponds to maximum ambiguity of approximately one. Surprisingly, MMD values in Figure~\ref{fig:010_decision-normalized} do not exhibit the anticipated monotonicity over $r$ as in Figure~\ref{fig:010_decision}. This may suggest that increased dimensionality can promote cancellations in discrepancy given otherwise random curve alignments unlike the cyclic Procrustes alignments of Figure~\ref{fig:010_decision} exhibiting cumulative (monotonic) discrepancy over dimensionality. 

The global trends between the two decision landscapes are distinct but the ability to circumvent row-wise registrations by simply normalizing coordinates could offer massive computational advantages---i.e., optimal alignments may be less important than previously hypothesized if utilizing coordinate normalization. In practice, decision landscapes should be utilized to understand sensitivities to decision making in any replicated or extended work.

\subsection{Segmentation Efficacy}
\begin{figure}
    \centering
    \includegraphics[width=1\linewidth]{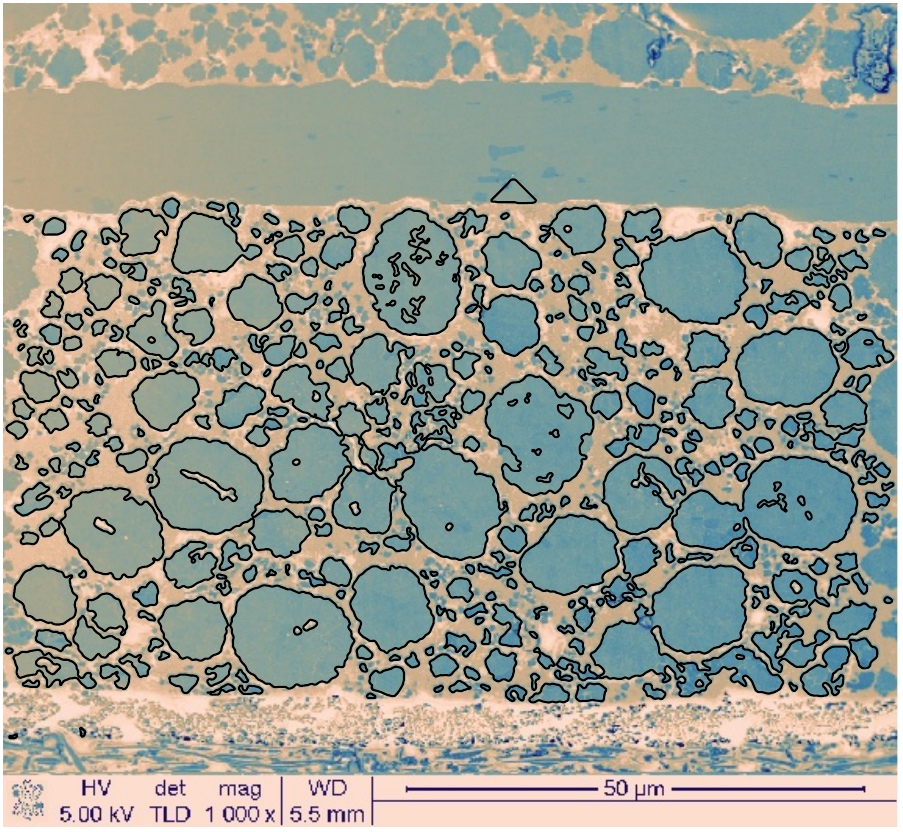}
    \includegraphics[width=1\linewidth]{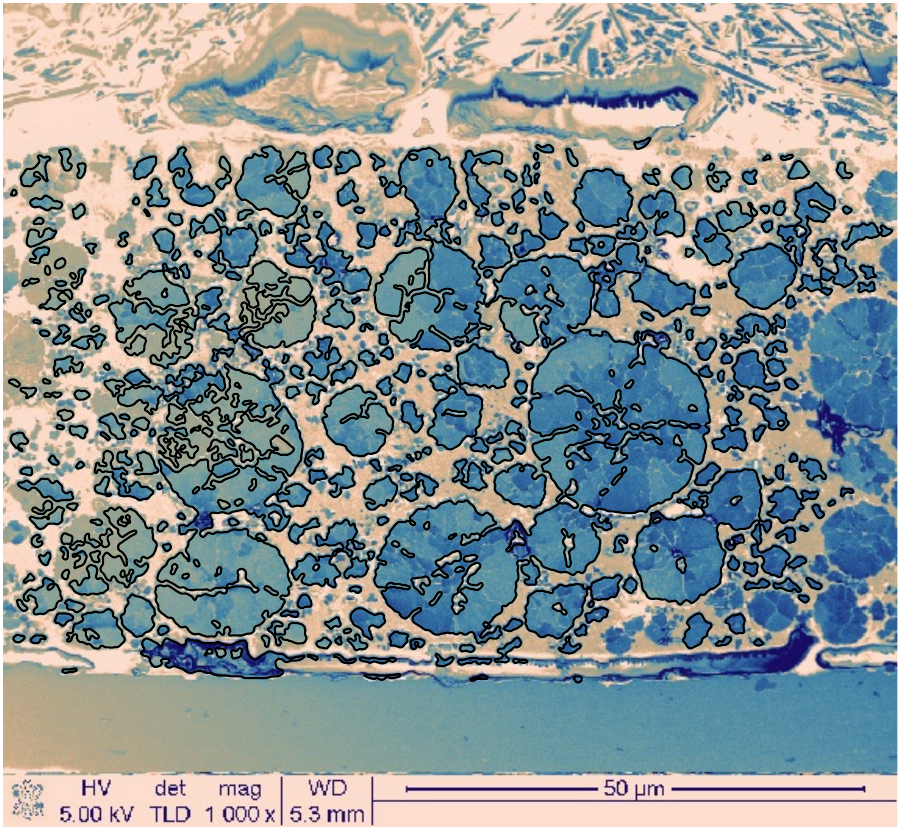}
    \caption{(top) Cross-section of a `new' condition lithium-ion battery with segmented black curves over scalar-valued image intensity. (bottom) Cross-section of a 'used' condition lithium-ion battery with segmented curves (black boundaries) over scalar-valued image intensity. The new condition battery exhibits less fracturing in the NMC particles compared to the battery which has been subjected to accelerated cycling in the lab. A smoothing length scale $\sigma = 5$ is used to extract the depicted curves.}
    \label{fig:new_v_old_bat}
\end{figure}
\begin{figure*}
    \centering
    \includegraphics[width=0.2\linewidth]{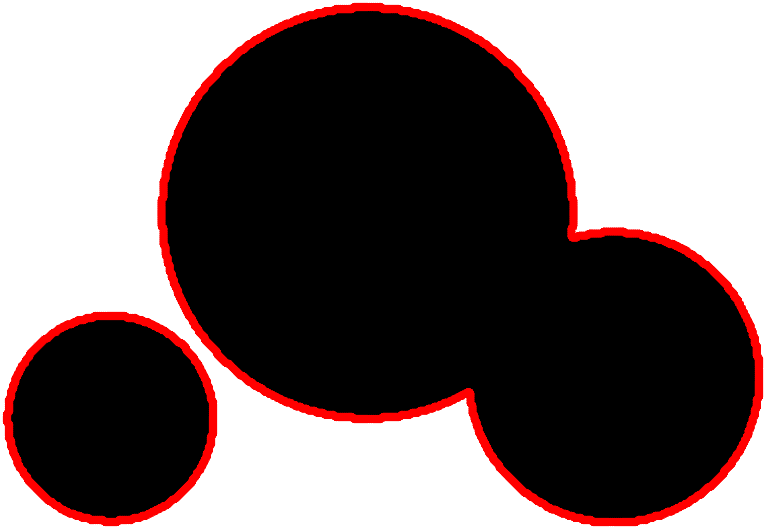}
    \hspace*{2em}
    \includegraphics[width=0.2\linewidth]{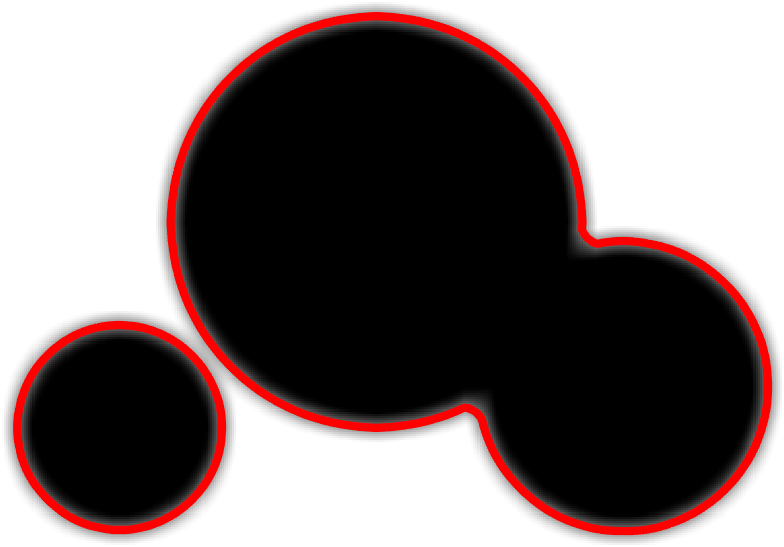}
    \hspace*{2em}
    \includegraphics[width=0.2\linewidth]{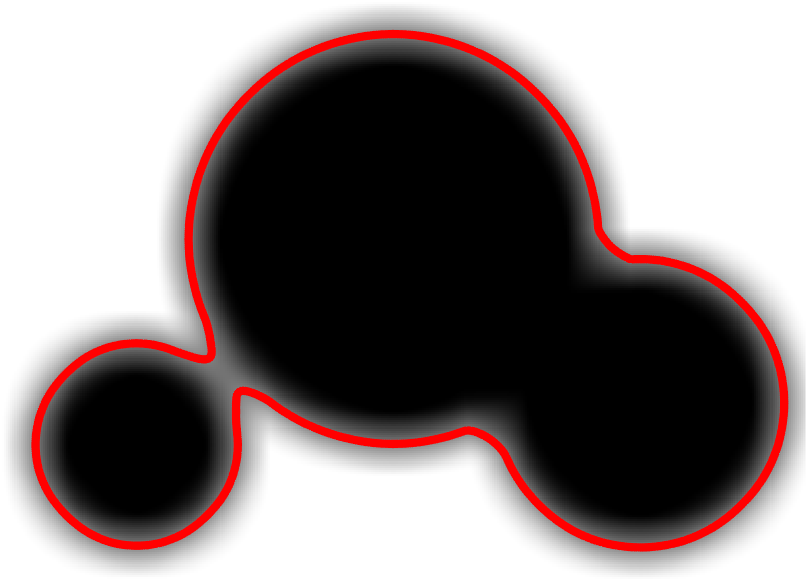}
    \\
    \includegraphics[width=0.475\linewidth]{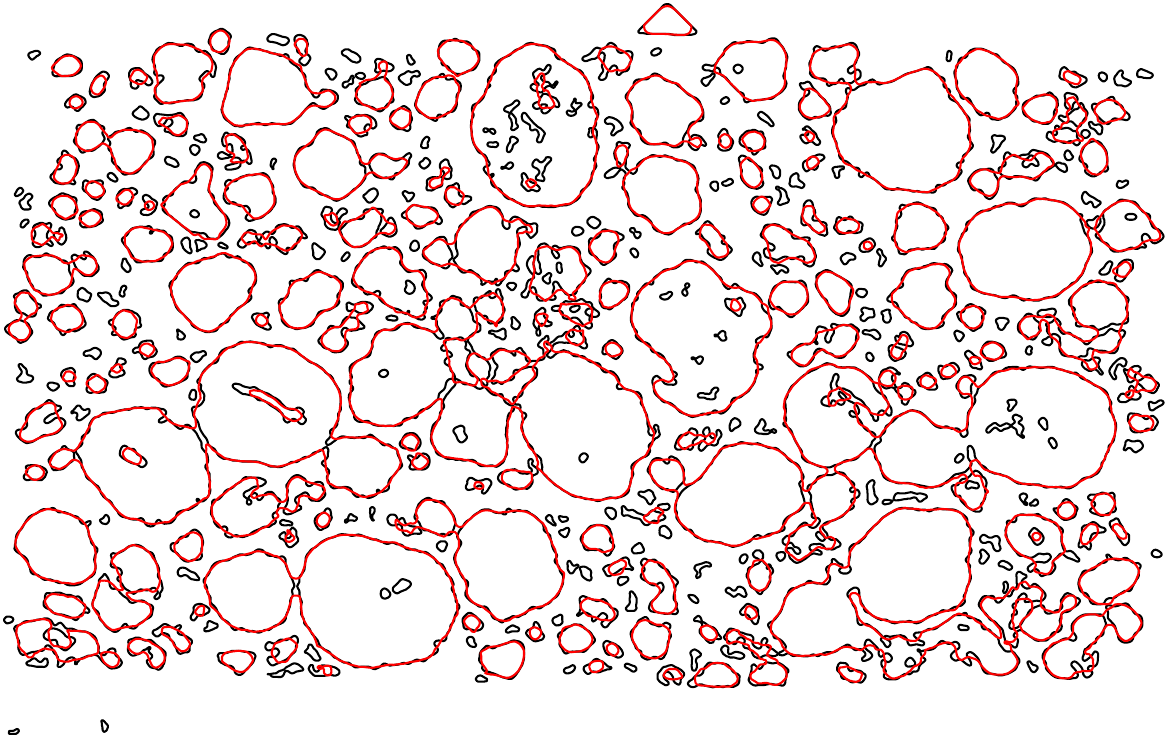}
    \hspace*{0.5em}
    \includegraphics[width=0.475\linewidth]{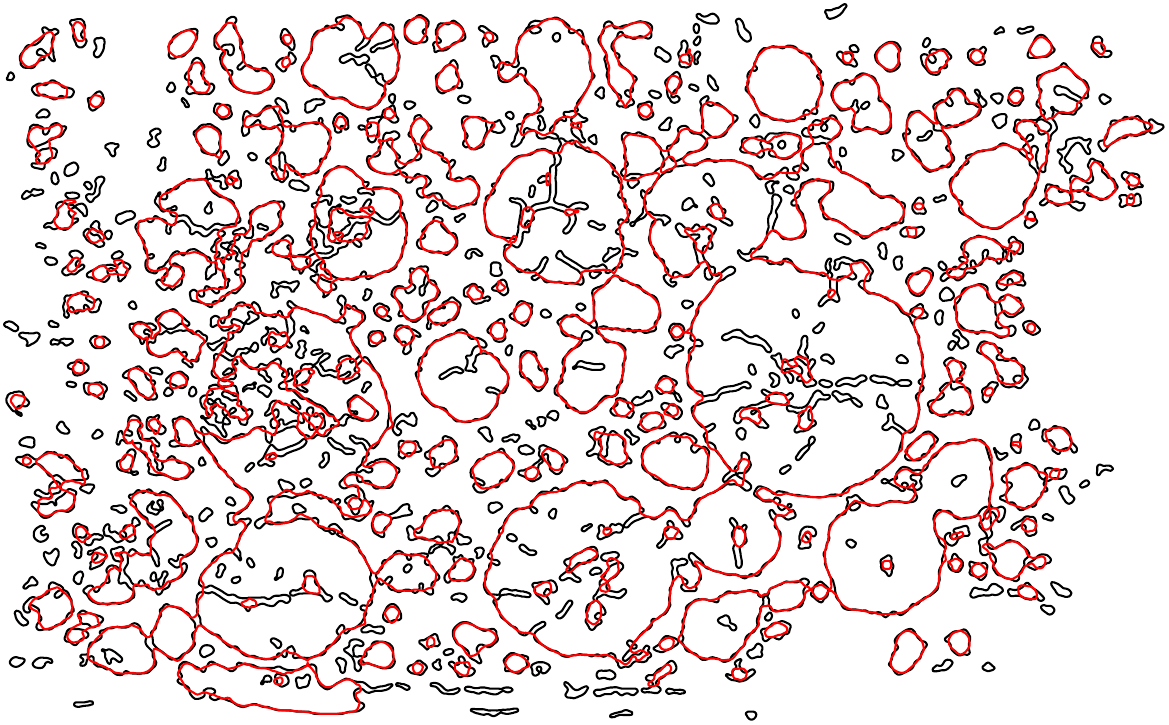}
    \caption{(top) The effect of increased Gaussian length scale convolutions applied to a simple binary mask. From left to right, the scale is increased as $\sigma=1,15,30$ over a $400$-by$400$ pixel grid---i.e., the relative size of these simple circular objects versus the image resolution is much greater than the NMC particles in an SEM. At larger scales, increased smoothing begins to blur and merge boundaries reducing the ensemble size. (bottom) For reference, ensembles from the same pair of images in Fig.~\ref{fig:new_v_old_bat} are segmented at $\sigma = 5$ (black boundaries) and $\sigma = 15$ (red boundaries). Clearly, increased smoothing begins to obscure the important distinguishing fractures and fissures of the 'used' condition battery.}
    \label{fig:Gauss_Smooth_Mask}
\end{figure*}

\begin{figure}
    \centering
    \includegraphics[width=1\linewidth]{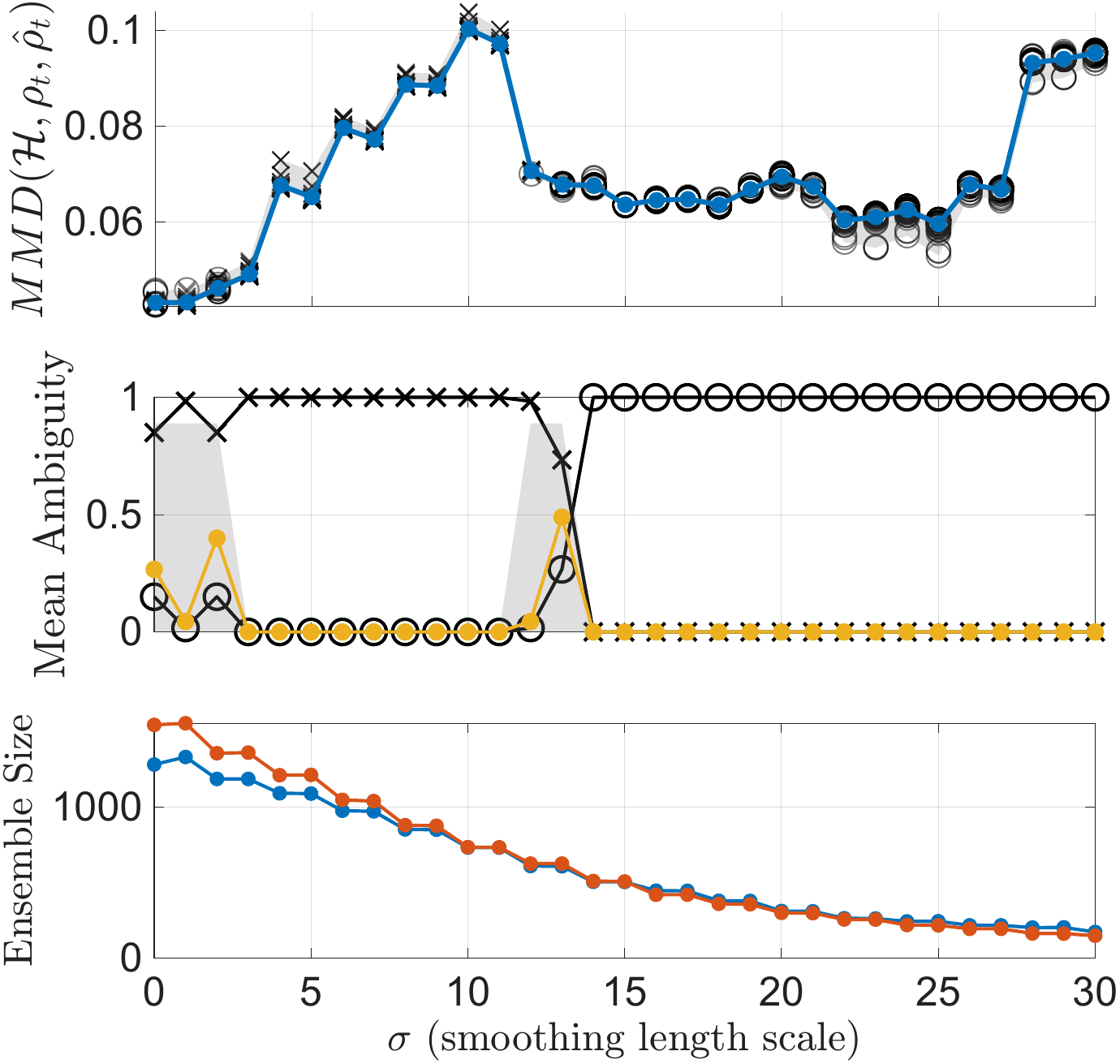}
    \caption{(top) Conditionally averaged undulation MMD over smoothing scales applied to the segmentation binary mask. Raw data is depicted with crosses or circles depending on the decision to reject or failure to reject the null hypothesis, respectively. (middle) Mean ambiguity over dimensionality at the corresponding convolution length scale. The gray shaded region is a min-max envelope over all combinations of $r$ and $n$. (bottom) Changing ensemble sizes from three segmented images of `new' condition batteries (blue) versus three segmented images of `used' condition batteries (orange) over smoothing length scales.}
    \label{fig:avg_decision_land}
\end{figure}

Across imaging applications, feature segmentation is a challenging step in analysis workflows. Despite modern advancements, segmentation methods remain highly dependent on multiple factors, including image quality and hyperparameters---e.g., smoothing parameters, thresholding values, training with hand-labeled data, etc. Variability in segmentation algorithms can result in different ensembles of shapes extracted from a given image, which can complicate subsequent assessments of patterns in the image. Here, we briefly examine the utility of explainable binary classification for introducing novel, quantifiable metrics to compare and contrast the efficacy of a given image segmentation.

The use-case for this study is motivated by the need to characterize degradation in lithium-ion batteries to better understand battery aging and performance decay. Improved degradation models can lead to improved battery design, charging, and replacement strategies. Scanning electron microscope (SEM) imaging is often used to enabling high-resolution visualizations of batteries throughout aging. In particular, fracturing and formation of fissures in the nickle-manganese-cobalt (NMC) particles visible in SEM are notable markers of this decay. Robust analysis of the degradation process relies on meaningful characterization of the damage to these NMC particles, making this an appropriate test case of studying the interplay between segmentation approaches and explainable binary classification.

To quantify the nature of the fractures and fissures which form during lithium-ion battery degradation, we run simple morphological segmentation over SEM images for three `new' condition battery and three `used' condition battery. An example pair of new and used condition image segmentations are shown in Figure~\ref{fig:new_v_old_bat}. Each image is $1024$-by-$1024$ and down-selected to a subregion of interest containing the NMC particles.

The output of these segmentations is a binary mask which is then smoothed with a Gaussian convolution corresponding to a parametric length scale, $\sigma$, to remove pixel `stepping.' The length scale is sized according to the number of pixels and we use the $0.5$ level-set of the smoothed mask to extract corresponding boundary curves. Figure~\ref{fig:Gauss_Smooth_Mask} depicts the effect of smoothing the binary mask to reduce noisy pixel stepping in the segmentation.

Notice the segmentation of any given image is imperfect---some particle boundaries are inappropriately merged, others are omitted entirely or partially, and artifacts are present. However, in aggregate, the ensemble of extracted curves appear to represent distinct patterns which admit a seemingly obvious dependency on the fractured state of the NMC particles. The goldilocks paradigm of interest: \textit{how much smoothing is required to denoise and achieve consistent decision making prior to excessively blurring the mask?} Too little smoothing and high frequency undulations may obscure relevant decision making. Too much smoothing and we obfuscate the segmentation results.

We apply explainable binary classification to the separable shape ensembles extracted from simple, smoothed morphological segmentations over three pairs of new and used battery SEM images---i.e., six SEM images in total. Shape dimensionality, $r$, is varied from $5$ to $100$ in increments of $5$, (a separate view of the data emphasizes converged MMD values at approximately $r=50$) quadrature levels $n$ are taken sparsely at $100$, $500$, $1000$ given limited variability in MMD values over $n$, and we sweep over the convolution length scale, $\sigma=0,\dots,30$, in increments of one. The undefined value $\sigma = 0$ corresponds to simple black-white boundary extraction without smoothing applied to the binary mask. 

Approximated MMD results depicted and aggregated as conditional averages at corresponding smoothing scales are visualized in Figure~\ref{fig:avg_decision_land}. We notice small amounts of ambiguity initially with steadily increasing discrepancy. Over the range $4\leq \sigma \leq 11$, all decisions result in a consistent rejection of the null hypothesis. Finally, at $\sigma = 13$, the decision landscape inverts presumably due to the increasingly merged and smoothed shape objects. As anticipated by the simple example in Figure~\ref{fig:Gauss_Smooth_Mask}, ensemble sample sizes are shown to steadily decrease with increased smoothing length scales. MMD also appears to increase again at the increasingly smaller ensemble sizes---i.e., the few remaining shapes are distinct but the distributions of normal coordinates are no longer significantly discrepant. Our intuition beyond $\sigma =13$ is that the ensemble has digressed into a sparse collection of seemingly random undulations with increasing discrepancy, albeit insignificant, due to the reduced ensemble size.

In practice, this analysis would suggest that achieving the largest number of segmented shapes in an ensemble with consistent decision making occurs near $\sigma = 4$ or $5$. This study could be used to set parameter ranges for more effective segmentation when detecting failure modalities in subsequent battery experiments. It appears that $r\approx 50$, $n\approx500$, and $\sigma \approx 5$ are suitable candidates to detect the fractured nature for this level of battery degradation while maximizing the number of segmented objects.

\section{Conclusion}\label{sec:conclude}
We have established a formal interpretation between separable shape tensors (SST) and dual evaluation functionals of~\cite{micheli2013matrix}. We demonstrate that spectral methods, based on an interpretation with the Nyst\"om method, can be utilized to offer highly accurate approximations of SSTs provided sufficient regularity of curves. We then elaborate on the alignment of SSTs and caveats of utilizing a maximum mean discrepancy (MMD) analysis to inform explainable binary classification of data.

We explore four real-world numerical examples spanning the logical Truth Table~\ref{tab:logical_conjunction} found in EBSD measurements; identified by simple observations (selected for qualitative appearance), differences in preprocessing (smoothing), and detection of a presumed measurement issue (instrument out-of-focus). Finally, we briefly explore the implications of utilizing these metrics to compare and contrast the effectiveness of gray-scale image segmentation results applied to Lithium-ion batteries.

Future work will expound on the identified relationships between finite matrix manifold representations of shape and the formal interpretations of infinite dimensional analogs detailed in~\cite{micheli2013matrix}. In short, our efforts will be focused towards: i) formalizing bounds on the approximation error of reduced dimension SSTs, and ii) infinite dimensional extensions of pMMD explainable binary classifications over the $\mathbb{R}^d$-valued RKHSs of~\cite{micheli2013matrix}. We also hope to improve explanations in the classification problem to understand which patterns of shape, in more detail, contribute to approximated discrepancies.

\backmatter

\bmhead{Supplementary information}
Not applicable.

\bmhead{Acknowledgements}
We greatly appreciate Adam Creuziger and G\"unay Do\u{g}an at NIST for their guidance regarding applications and important data considerations.

This work was authored in part by the National Laboratory of the Rockies for the U.S. Department of Energy (DOE), operated under Contract No. DE-AC36-08GO28308. This work was supported by the AI4WIND project sponsored by the U.S. Department of Energy Office of Critical Minerals and Energy Innovation Wind Energy Technologies Office. The views expressed in the article do not necessarily represent the views of the DOE or the U.S. Government. This work is U.S. Government work and not protected by U.S. copyright. The U.S. Government retains and the publisher, by accepting the article for publication, acknowledges that the U.S. Government retains a nonexclusive, paid-up, irrevocable, worldwide license to publish or reproduce the published form of this work, or allow others to do so, for U.S. Government purposes.


\section*{Declarations}

\begin{itemize}
\item Funding: This work was partially support by a National Institute of Standards \& Technology (NIST) Building the Future (BTF) internal funding initiative and the second author is supported in part by funds from the National Science Foundation (NSF) Research Training Group (RTG) grant (DMS-2136228).
\item Conflict of interest/Competing interests: All authors certify that they have no affiliations with or involvement in any organization or entity with any financial interest in the subject matter or materials discussed in this manuscript.
\item Data availability: Ice micrograph data is available at~\cite{Fan2020}. 
\item Author contribution: The first and second authors contributed equally to this manuscript. The third author facilitated data collection along with significant revisions and edits.
\end{itemize}

\noindent


\bibliographystyle{sn-aps}

\begin{appendices}

\section{EBSD Examples}\label{secA1}

This appendix contains high resolution images for the numerical experiments discussed in the numerical experiments section of the paper. Captions contain relevant descriptions of results spanning Table~\ref{tab:logical_conjunction}.

\begin{figure*}
    \centering
    \includegraphics[width=1\textwidth]{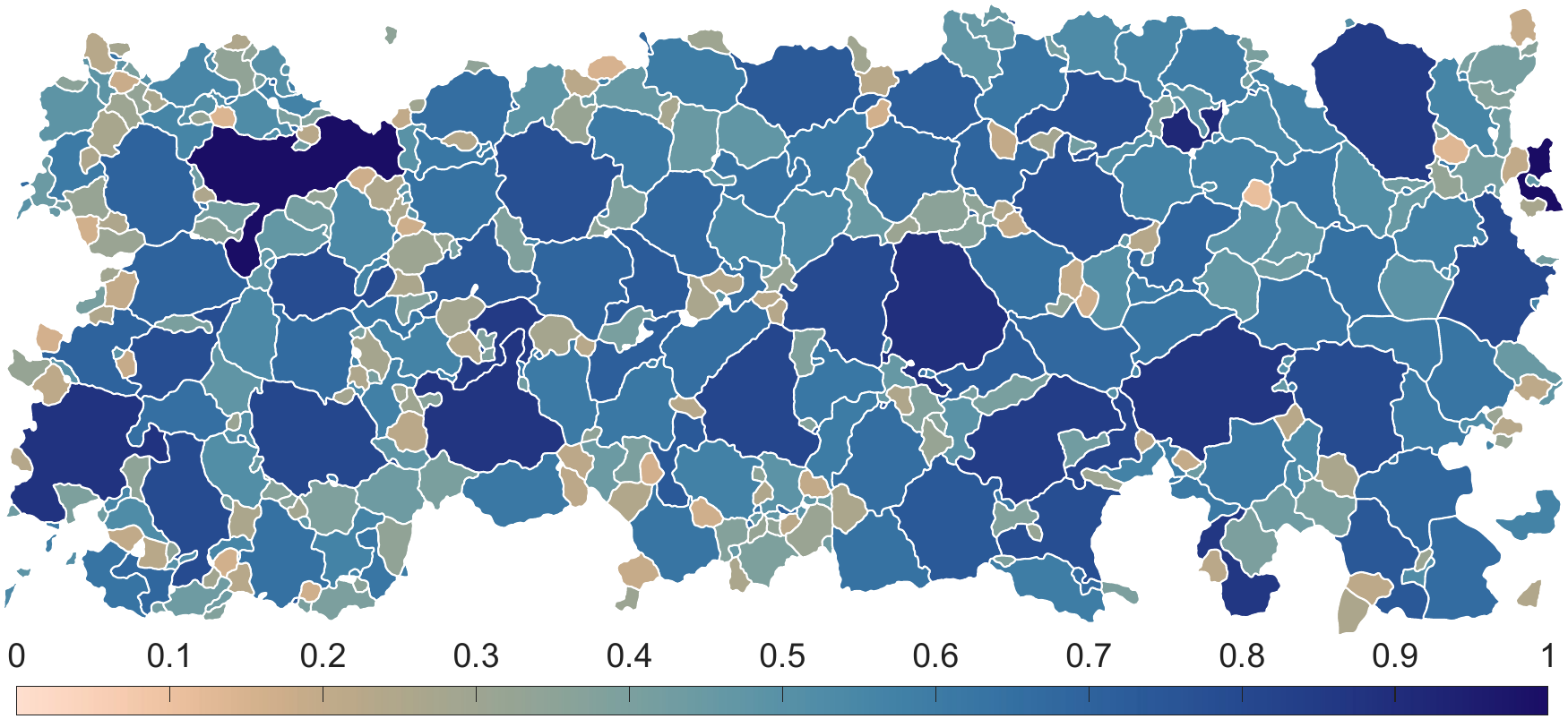} 
    \includegraphics[width=1\textwidth]{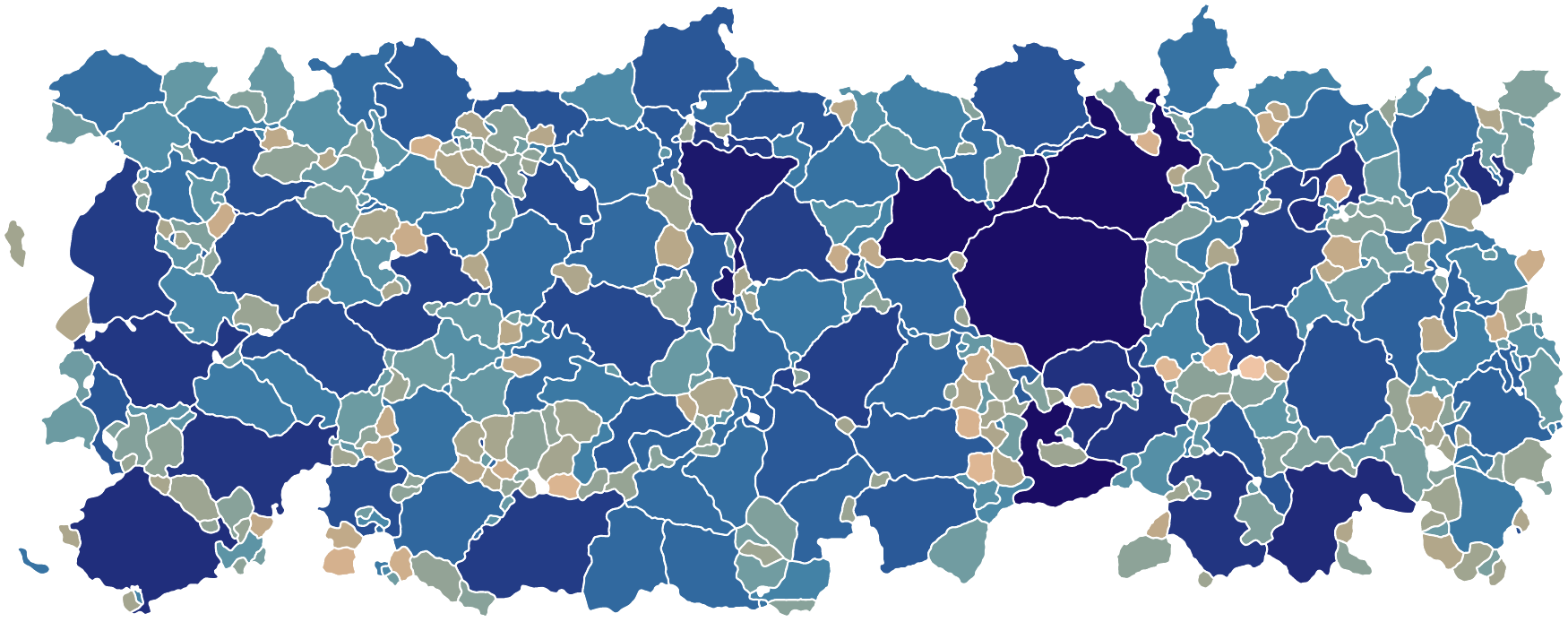}
    \caption{(Example $1 \wedge 1$) Comparison of locally partitioned subsets of a larger EBSD images from a common ice sample~\cite{Fan2020} (PIL184) with $N=933$ total curves---i.e., the set of curves is split approximately in half. No grains are duplicated between the partition in this example. The approximate inferences suggest the same undulations and same scales with significance level of 1\%. This constitutes an example of the first row of Truth Table~\ref{tab:logical_conjunction}, i.e., $1 \wedge 1$.}
    \label{fig:compare_grains_111}
\end{figure*}

\begin{figure*}
    \centering
    \includegraphics[width=1\textwidth]{img/hlf1_PIL184-smth5.pdf}
    \includegraphics[width=1\textwidth]{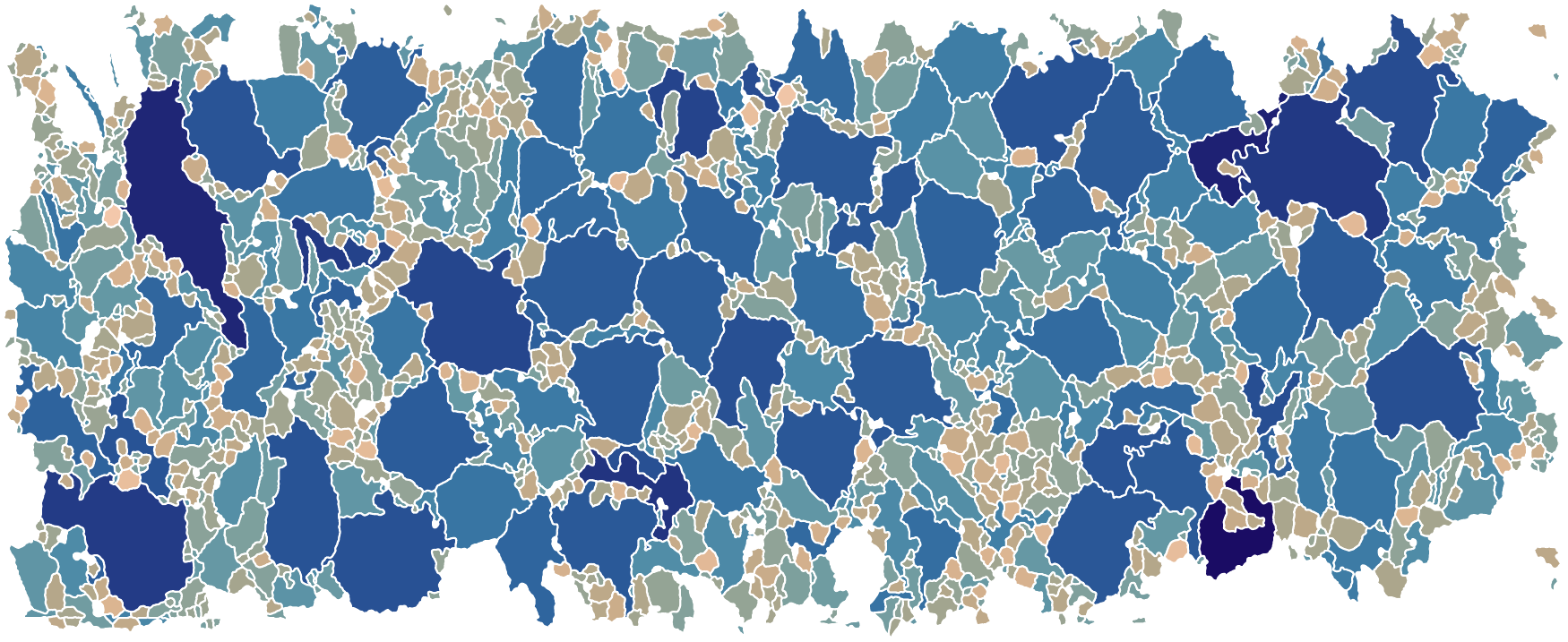}
    \caption{(Example $1 \wedge 0$) Comparison of two different EBSD images of ice samples~\cite{Fan2020} (PIL184 and PIL185) with $N=2566$ curves in total. Both images depict cropped shape selections. (there are additional shapes beyond the selection depicted in the images) The approximate inferences suggest the same undulations but different scales between the two images with significance level of 1\%. This constitutes an example of the second row of Truth Table~\ref{tab:logical_conjunction}, i.e., $1 \wedge 0$.}
    \label{fig:compare_grains_100}
\end{figure*}
\begin{figure*}
    \centering
    \includegraphics[width=1\textwidth]{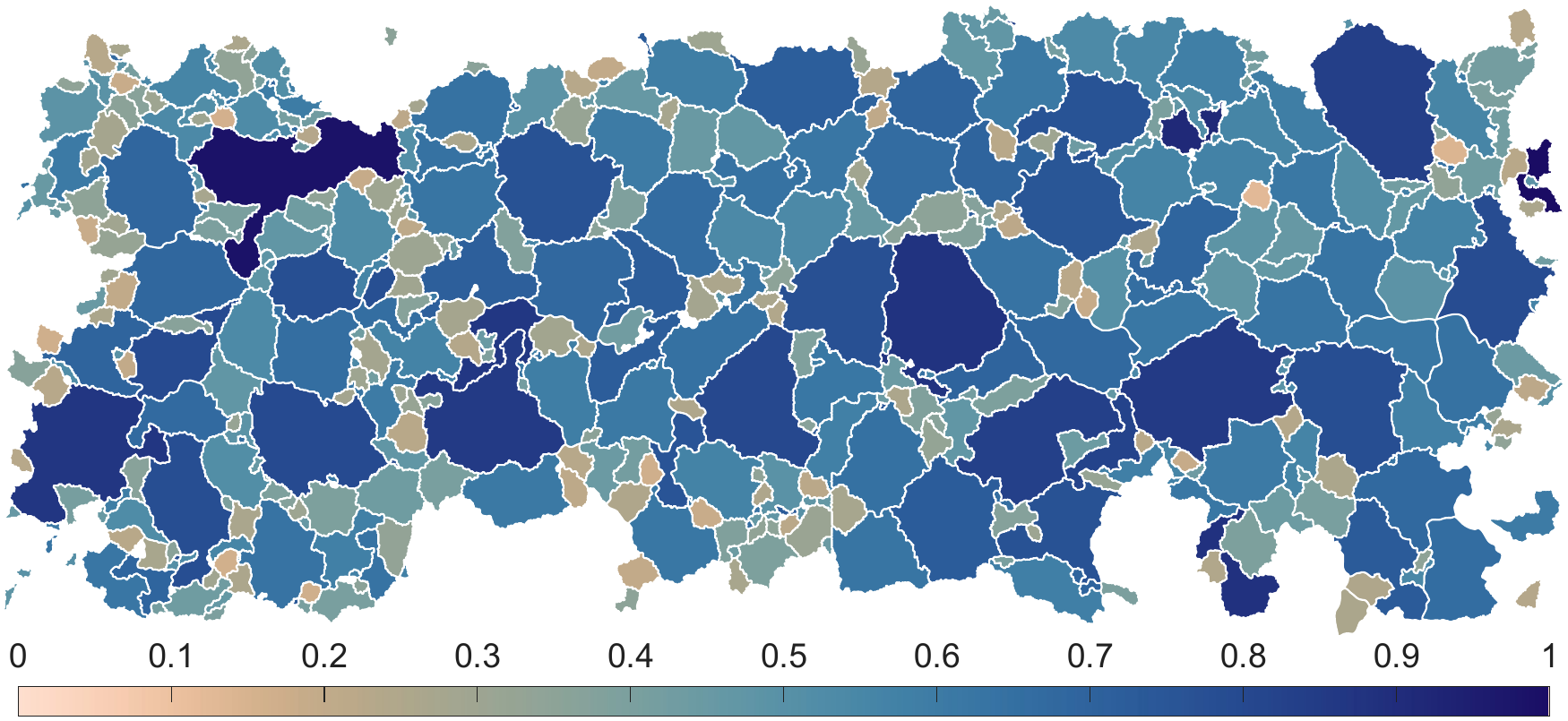}
    \includegraphics[width=1\textwidth]{img/hlf2_PIL184-smth5.pdf}
    \caption{(Example $0 \wedge 1$) Comparison of locally partitioned subsets of a larger EBSD images from a common ice sample~\cite{Fan2020} (PIL184) with $N=933$ total curves. No shapes or boundaries are duplicated between the partition and this partition is consistent with that of Figure~\ref{fig:compare_grains_111}. However, using a default MTEX `smooth' routine~\cite{bachmann2010texture}, the upper partition (top image) is not smoothed while the bottom is smoothed to an extent (MTEX smoothing parameter equal to $5$). Magnifying and contrasting the top image with the bottom image of Figure~\ref{fig:compare_grains_100} it is possible to see small differences in the nonlinearities of shapes. The difference is nearly visually indistinguishable. Yet, the approximate inferences suggest differences in undulations and common scales. This constitutes an example of the third row of Truth Table~\ref{tab:logical_conjunction}, i.e., $0 \wedge 1$, with significance level of 1\%. We note, given the partition into distinct grain subsets and subsequently reduced sample size, this decision landscape has a more volatile ambiguity profile than comparing identical grains up to smoothing.}
    \label{fig:compare_grains_010}
\end{figure*}

\begin{figure*}
    \centering
    \includegraphics[width=1\textwidth]{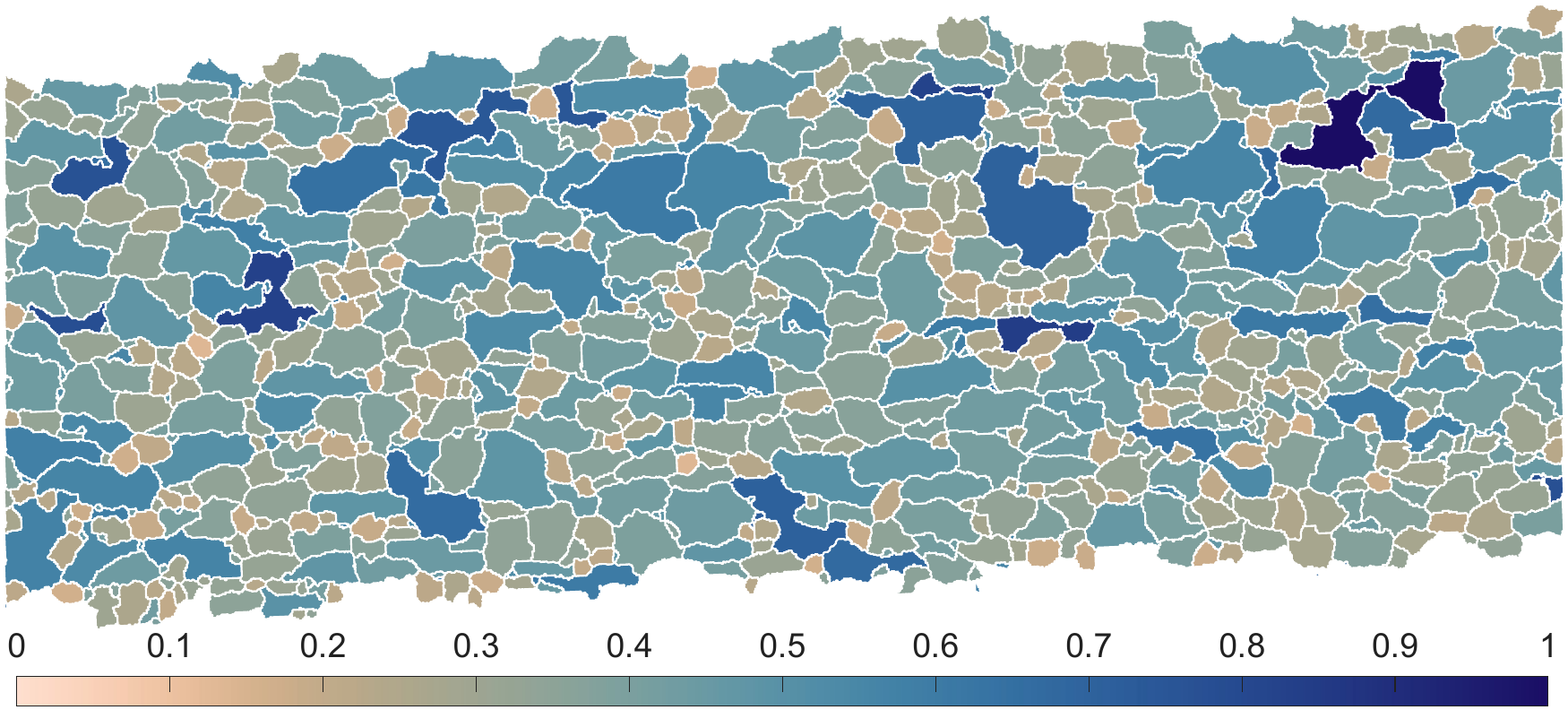}
    \includegraphics[width=1\textwidth]{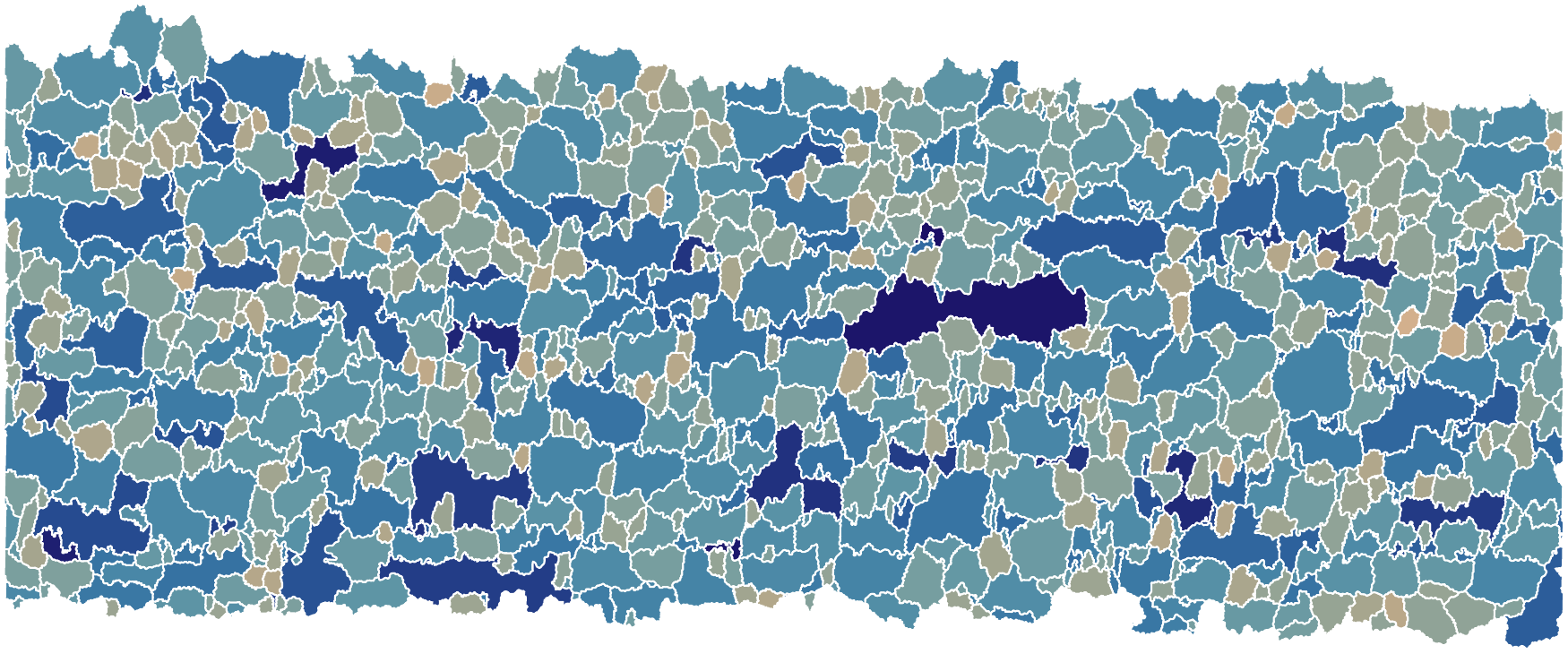}
    \caption{(Example $0 \wedge 0$) Comparison of two different EBSD images of TRIP780 steel samples with $N=5605$ curves in total. Both images depict cropped shape selections to place emphasis on small scale features of undulation. The approximate inferences suggest differences in both undulations and scales between the two images with significance level of 1\%. This constitutes an example of the last (fourth) row of Truth Table~\ref{tab:logical_conjunction}, i.e., $0 \wedge 0$. Note, the hypothesis from a NIST materials scientist, Adam Creuziger, is that the bottom sample was collected from a measurement which was `out-of-focus.'}
    \label{fig:compare_grains_000}
\end{figure*}




\end{appendices}

\end{document}